
\documentclass{article}

\usepackage{microtype}
\usepackage{graphicx}
\usepackage{subfigure}
\usepackage{booktabs} 

\usepackage{hyperref}



\usepackage[accepted]{icml2023}

\newcommand{\norm}[1]{\left\lVert#1\right\rVert}

\def\mA{{\mathbf{A}}}

\def\mC{{\mathbf{C}}}
\def\mD{{\mathbf{D}}}

\def\mH{{\mathbf{H}}}
\def\mI{{\mathbf{I}}}
\def\mJ{{\mathbf{J}}}

\def\mL{{\mathbf{L}}}
\def\mM{{\mathbf{M}}}
\def\mN{{\mathbf{N}}}

\def\mP{{\mathbf{P}}}
\def\mQ{{\mathbf{Q}}}
\def\mR{{\mathbf{R}}}
\def\mS{{\mathbf{S}}}
\def\mT{{\mathbf{T}}}
\def\mU{{\mathbf{U}}}
\def\mV{{\mathbf{V}}}
\def\mW{{\mathbf{W}}}
\def\mX{{\mathbf{X}}}
\def\mY{{\mathbf{Y}}}
\def\mZ{{\mathbf{Z}}}

\def\vzero{{\mathbf{0}}}

\def\vb{{\mathbf{b}}}

\def\vp{{\mathbf{p}}}

\def\vs{{\mathbf{s}}}

\def\vu{{\mathbf{u}}}

\def\vx{{\mathbf{x}}}

\def\vz{{\mathbf{z}}}


\def\gE{{\mathcal{E}}}

\def\gG{{\mathcal{G}}}

\def\gN{{\mathcal{N}}}

\def\gT{{\mathcal{T}}}

\def\gV{{\mathcal{V}}}



\def\sR{{\mathbb{R}}}

\usepackage{amsmath}
\usepackage{amssymb}
\usepackage{mathtools}
\usepackage{amsthm}

\usepackage[capitalize,noabbrev]{cleveref}

\theoremstyle{plain}
\newtheorem{theorem}{Theorem}[section]
\newtheorem{proposition}[theorem]{Proposition}
\newtheorem{lemma}[theorem]{Lemma}
\newtheorem{corollary}[theorem]{Corollary}
\theoremstyle{definition}
\newtheorem{definition}[theorem]{Definition}

\theoremstyle{remark}

\usepackage[textsize=tiny]{todonotes}

\icmltitlerunning{Randomized Schur Complement Views for Graph Contrastive Learning}

\begin{document}

\twocolumn[
\icmltitle{Randomized Schur Complement Views for Graph Contrastive Learning}



\icmlsetsymbol{equal}{*}

\begin{icmlauthorlist}
\icmlauthor{Vignesh Kothapalli}{yyy}
\end{icmlauthorlist}

\icmlaffiliation{yyy}{Courant Institute of Mathematical Sciences, New York University, New York, USA}

\icmlcorrespondingauthor{Vignesh Kothapalli}{vk2115@nyu.edu}

\icmlkeywords{Graph Contrastive Learning, Graph Neural Networks, Schur Complements, Randomized Numerical Linear Algebra}

\vskip 0.3in
]



\printAffiliationsAndNotice{}  

\begin{abstract}
We introduce a randomized topological augmentor based on Schur complements for Graph Contrastive Learning (GCL). Given a graph laplacian matrix, the technique generates unbiased approximations of its Schur complements and treats the corresponding graphs as augmented views. We discuss the benefits of our approach, provide theoretical justifications and present connections with graph diffusion. Unlike previous efforts, we study the empirical effectiveness of the augmentor in a controlled fashion by varying the design choices for subsequent GCL phases, such as encoding and contrasting. Extensive experiments on node and graph classification benchmarks demonstrate that our technique consistently outperforms pre-defined and adaptive augmentation approaches to achieve state-of-the-art results.
\end{abstract}

\section{Introduction}

Understanding the structural properties and semantics of graph data typically requires domain expertise and efficient computational tools.  Advances in machine learning techniques such as Graph Neural Networks (GNN) \citep{gori2005new, scarselli2008graph, bruna2014spectral, henaff2015deep, welling2016semi, niepert2016learning, bronstein2017geometric, xu2018powerful, wu2020comprehensive, zhou2020graph} have paved a path for representation learning on internet scale graphs and significantly alleviated the human effort. However, real-world graphs such as social networks, citation networks, supply chains and media networks are continuously evolving, which makes labeling an extremely challenging task to accomplish. Self-supervised learning (SSL) addresses this issue by optimizing pretext objectives and learning generalizable representations for downstream tasks \citep{jin2020self, wu2021self, liu2022graph, xie2022self}. This learning paradigm has been quite popular for vision \citep{gidaris2018unsupervised, hjelm2018learning, chen2020simple, jing2020self}, language domains \citep{mikolov2013efficient, devlin2018bert, radford2018improving, lan2019albert} and is relatively new to graphs.

Contrastive learning on graphs \citep{velickovic2019deep, sun2019infograph, zhu2020deep, you2020graph, hassani2020contrastive} is a variant of SSL, whose pretext task is aimed at maximizing representation agreement across augmented views. A typical GNN-based GCL framework comprises three main components: 1. Data augmentors, 2. GNN encoders and 3. Contrastive objectives with corresponding modes. Data augmentation techniques in the literature can be categorized based on: 1. Feature masking/perturbations \citep{you2020graph, thakoor2021bootstrapped}, 2. Structural perturbations \citep{you2020graph, hassani2020contrastive, zeng2021contrastive}, 3. Sub-graph sampling \citep{hu2019strategies, qiu2020gcc, jiao2020sub, zhu2021transfer}, 4. Adaptive and learnable structural perturbations \citep{zhu2021graph, yin2022autogcl}. Efforts leveraging these techniques tend to focus on the collective comparison of GCL frameworks and lack controlled experimentation pertaining to the augmentation phase. 

\begin{figure*}[ht]
\vskip 0.2in
\begin{center}
\centerline{\includegraphics[width=0.9\textwidth]{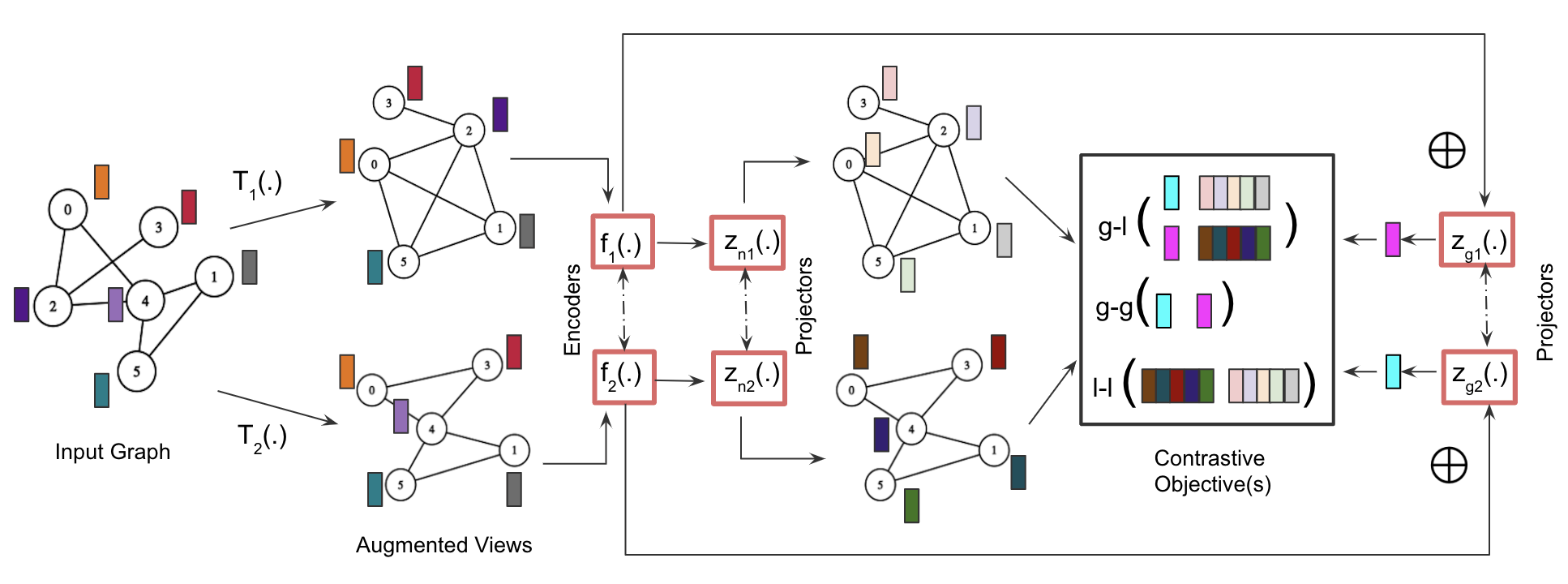}}
\caption{A generalized dual branch GCL framework. The input graph $\gG=(\gV, \gE, w)$ is augmented by $T_1, T_2$ to generate graph views. These graph views are encoded using GNNs $f_1, f_2$ to compute node embeddings. The node embeddings are projected using MLPs $z_{n1}, z_{n2}$ to generate node features. Additionally, the node embeddings are aggregated using a permutation invariant function $\bigoplus$ and projected using $z_{g1}, z_{g2}
$ to compute graph-level features. Depending on the contrastive mode, the graph and node features are contrasted.} 
\label{fig:gcl}
\end{center}
\vskip -0.2in
\end{figure*}

A recent empirical study on GCL by \citet{zhu2021empirical} partially addresses this issue and compares some of the widely used augmentors with a fixed encoder design and contrastive objective. In particular, the study focuses on techniques such as feature masking, node dropping, edge perturbations, sub-graph sampling based on random walks, diffusions based on Personalized Page Rank (PPR), Heat Kernel (HK) and Markov Diffusion Kernel (MDK). From a topological augmentation perspective, pre-defined stochastic techniques such as node dropping and edge perturbations  tend to be faster in practice but fail to preserve structural information of the original graph \citep{hubler2008metropolis}. Surprisingly, views obtained by such simple techniques have been shown to outperform diffusion-based strategies when contrasting solely based on node embeddings. Despite this parity, \citet{zhu2021empirical} show that, when diffusion-based approaches are combined with node dropping, there is a substantial increase in the unsupervised node and graph classification performance. An attempt to avoid such trial and error based design of augmentors was made by \citet{yin2022autogcl}, where learnable augmentors are trained in an end-to-end fashion to generate semantic label preserving views. Such a technique is limited to node-dropping operations and induces non-negligible computational overheads if edge perturbations were to be included. Furthermore, since learnable augmentors modify learning objectives, leveraging them to explore and design new GCL objectives can be challenging. On the other hand, the adaptive augmentation technique proposed by \citet{zhu2021graph} drops edges based on their degree, page rank or eigenvector centrality scores and preserves pre-defined structural information across views. However, metrics such as eigenvector centrality are expensive to compute and negate the performance benefits of augmentors. These observations highlight the persisting augmentation problem of \textbf{identifying} and \textbf{preserving} structural properties across \textbf{stochastically} generated views in a \textbf{computationally efficient} manner.

In our work, we leverage the properties of Schur complements to address these issues. Schur complements are typically obtained during Gaussian Elimination (GE) and are pervasive in numerical analysis, probability and statistics \citep{cottle1974manifestations, ouellette1981schur, zhang2006schur}. From a graph theoretic perspective, GE essentially leads to node dropping and edge perturbations such that the resulting Schur complements preserve random walk transition probabilities of the remaining nodes (with respect to the original graph). Our technique exploits this property and generates randomized yet unbiased approximations of Schur complements as the augmented views. Additionally, we exploit the role of Schur complements in matrix inversions and showcase computational optimizations for diffusion-based augmentors. Overall, the potential benefits of this class of topological augmentors remain unexplored for GCL and we take a step towards understanding them. To summarize, our contributions are as follows:
\begin{itemize}
    \item We design a randomized Schur complement based topological augmentor for GCL which is fast, stochastic and effective on a wide range of framework designs.
    \item We show that by preserving the combinatorial properties of Schur complements in expectation, the augmentor achieves state-of-the-art results on unsupervised node and graph classification tasks.
    \item Using our technique, we present an efficient approach to achieve diffusion followed by sub-graph sampling.
\end{itemize}

\section{Preliminaries}
\subsection{Notations}
Let $\gG=(\gV, \gE, w)$ denote an undirected graph with nodes $\gV = \{v_i\}_{i=1}^N$ and edges $\gE \subseteq \gV \times \gV$. An edge between nodes $v_i, v_j$ is given by $e_{v_iv_j}=e_{ij} \in \gE$. We drop the subscript in $e_{ij}$ when indexing can be avoided. The neighborhood of $v_i$ is given by $\gN(v_i)$. For notational convenience, $v_i \in e$ represents a node $v_i \in \gV$ being on either end of an edge $e \in \gE$. A weight function $w$ maps an edge $e_{ij} \in \gE$ to its weight $w(e_{ij}) = w(v_i, v_j) \in \sR$. Also, $w$ is overloaded to denote the total weight of a node as the sum of weights of edges incident on it: $w(v_i) = \sum_{e\in \gE, v_i \in e}w(e)$. Similarly, the degree of a node $deg(v_i) = |\{e \in \gE: v_i \in e\}|$ is the total number of edges incident on it. The feature matrix of $\gG$ is given by $\mX \in \sR^{N \times d}$ where features of a node $v_i$ are given by the row $\vx_i \in \sR^d$. Finally, we consider the graph $\gG$ to be associated with a weighted positive semi-definite laplacian matrix $\mL \in \sR^{N \times N}$. 


\subsection{Graph Contrastive Learning Frameworks}

In this section, we briefly describe the functionality of GCL framework components (see Figure \ref{fig:gcl}) and defer additional details on the mathematical formulations to Appendix \ref{appendix:experimental_setup}.

\subsubsection{Augmentors}

Without loss of generality, let $\gT$ represent a class of augmentation functions from which $T_1, T_2 \sim \mathcal{T}$ are chosen. These functions transform $\gG$ to the respective augmented views $T_1(\gG) = \widetilde{\gG}_1, T_2(\gG) = \widetilde{\gG}_2$ by perturbing the topology, features or both. \citet{hassani2020contrastive} observed that, when $\gT$ represents a class of diffusion-based augmentors, contrasting more than 2 views didn't improve the performance on the downstream node and graph classification tasks. Additionally, the approach of \citet{velickovic2019deep} uses only a single view to contrast the node level and graph level embeddings. Analyzing the effect of the number of views based on choices of $\mathcal{T}$ is still an open problem. Meanwhile, for the rest of this paper, we default it to 2.

\subsubsection{Encoders and Projectors}

Depending on design choices for the framework, one can choose either a shared encoder \citep{you2020graph} or dedicated encoders \citep{hassani2020contrastive} to compute the node embeddings for $\widetilde{\gG}_1, \widetilde{\gG}_2$. Typically, GNNs based on Graph Convolution Network (GCN) \citep{kipf2016semi} or Graph Isomorphism Network (GIN) \citep{xu2018powerful} are employed. We denote the node level embeddings of $\widetilde{\gG}_1, \widetilde{\gG}_2$ as $\mH_{\widetilde{\gG}_1}, \mH_{\widetilde{\gG}_2} \in \sR^{N \times d_h}$. Next, an MLP-based projector is employed to project the embeddings onto a latent space. We denote the projected node level features of $\widetilde{\gG}_1, \widetilde{\gG}_2$ as $\mZ_{\widetilde{\gG}_1}, \mZ_{\widetilde{\gG}_2} \in \sR^{N \times d_z}$. Additionally, the node embeddings are aggregated using a permutation invariant function $\bigoplus$ (eg: mean) and projected to compute graph level features\footnote{Designs such as MVGRL \cite{hassani2020contrastive} employ MLP projections to improve the graph level representations.} $\vz_{\widetilde{\gG}_1}, \vz_{\widetilde{\gG}_2} \in \sR^{d_z}$. 

\subsubsection{Contrastive modes and objectives}

With $\mZ_{\widetilde{\gG}_1}, \mZ_{\widetilde{\gG}_2} \in \sR^{N \times d_z}$ and $\vz_{\widetilde{\gG}_1}, \vz_{\widetilde{\gG}_2} \in \sR^{d_z}$ being available, a relevant contrastive objective is chosen for optimization. Popular choices include: Noise Contrastive Estimation based InfoNCE \cite{oord2018representation}, Jensen-Shannon Divergence (JSD) \cite{lin1991divergence}, Bootstrapping Latent (BL) \cite{grill2020bootstrap}, Variance-Invariance-Covariance Regularization (VICReg) \cite{bardes2021vicreg} and Barlow Twins (BT) \cite{zbontar2021barlow} losses. Finally, a mode is chosen to determine the granularity of contrasting:

\begin{enumerate}
    \item \textbf{local-local (l-l)}: The objective is solely dependent on the node-level features.
    \item \textbf{global-local (g-l)}: The objective is dependent on both node and graph-level features.
    \item \textbf{global-global (g-g)}: The objective is solely dependent on the graph-level features.
\end{enumerate}

The choice of mode and objective not only affects the GCL framework performance but also plays a key role in computational efficiency. For instance, \citet{zhu2021empirical} empirically show that, in local-local mode, losses such as BL, BT, and VICReg consume $\approx 3\times$ less memory than InfoNCE and JSD while achieving comparable performance on node and graph classification tasks. Finally, when it comes to objectives such as InfoNCE and JSD, choosing the optimal set of negative samples is not straightforward. Especially, the difficulty lies in achieving a balance between computational overheads of negative sampling \citep{chuang2020debiased} (with hard negatives if required by design \citep{robinson2020contrastive}) and performance improvements.

\subsection{Schur Complements and Gaussian Elimination }

Every edge $e_{ij} \in \gE$ of graph $\gG$ forms an elementary laplacian given by the outer product:
$\mathbf{\Delta}_{ij} = (\delta_i-\delta_j)(\delta_i-\delta_j)^*
$. Here, $\delta_i \in \sR^N$ denotes the standard basis vector at index $i$ and $^*$ denotes the adjoint. The weighted Laplacian $\mL$ can now be formulated as follows:
\begin{equation}
\label{eq:lap}
\mL = \sum_{e_{ij} \in \gE}w(e_{ij})\mathbf{\Delta}_{ij}
\end{equation}
Since every $\mathbf{\Delta}_{ij}$ is positive semi-definite, so is their weighted sum $\mL$. The off-diagonal elements of $\mL$ are given by: $(\mL)_{ij} = -w(e_{ij}), i \neq j$ and the diagonal elements by $(\mL)_{ii} = w(v_i) = \sum_{v_j \in \mathcal{N}(v_i)} w(e_{ij})$. Now, without loss of generality, consider the matrix representation of $\mL$ as:
\begin{equation}
\mL = \begin{bmatrix}
a_{11} & \vb^{*}\\
\vb & \mL_2
\end{bmatrix}
\end{equation}
Where $a_{11} \in \sR, \vb \in \sR^{N-1}$ and $\mL_2 = \sR^{N-1 \times N-1}$. The Schur complement of $\mL$ with respect to nodes $\gV \backslash v_1$ is a positive semi-definite matrix that is obtained when $v_1$ is eliminated via a GE step:
\begin{align}
\begin{split}
\label{eq:sc_matrix}
    SC(\mL, \gV \backslash v_1) &= \mL - \frac{1}{a_{11}}\begin{bmatrix}a_{11}\\ \vb \end{bmatrix}\begin{bmatrix}a_{11}\\ \vb \end{bmatrix}^* \\
&= \begin{bmatrix}0 & \vzero^*\\ \vzero & \mL_2 - a_{11}^{-1}\vb\vb^*\end{bmatrix}
\end{split}
\end{align}
Alternatively, a graph theoretic interpretation of GE states that $SC(\mL, \gV \backslash v_1)$ is obtained by removing the $\textit{Star}$ graph corresponding to $v_1$ from $\gG$ and adding the induced $\textit{Clique}$ graph. Formally:
\begin{equation}
\label{eq:sc_star_clique}
    SC(\mL, \gV \backslash v_1) = \mL - \textit{STAR}(\mL, v_1) + \textit{CLIQUE}(\mL, v_1)
\end{equation}
Where $\textit{STAR}(\mL, v_i), \textit{CLIQUE}(\mL, v_i)$ for any node $v_i \in \gV$ are given by:
\begin{align}
\label{eq:star_clique}
\begin{split}
\textit{STAR}(\mL, v_i) & = \sum_{e_{ij}\in \gE, v_j \in \mathcal{N}(v_i)}w(e_{ij})\mathbf{\Delta}_{ij} \\
\textit{CLIQUE}(\mL, v_i) & = \frac{1}{2w(v_i)}\sum_{e_{ij}\in \gE}\sum_{e_{ik}\in \gE}w(e_{ij})w(e_{ik})\mathbf{\Delta}_{jk}    
\end{split}
\end{align}

This result indicates that $SC(\mL, \gV \backslash v_1)$ is a weighted Laplacian matrix with an associated graph. For simplicity, we denote $SC(\mL, \gV \backslash v_1)$ as $SC(\gG, \gV \backslash v_1)$ when the notion of a graph is suitable for the context. Schur complements hold an interesting graph theoretic property that, the random walk transition probabilities of the remaining nodes $\gV \backslash v_1$ through the eliminated vertex $v_1$ (with respect to $\gG$) are preserved in $SC(\gG, \gV \backslash v_1)$. Intuitively, the list of nodes visited by random walks on $SC(\gG, \gV \backslash v_1)$ is equivalent in distribution to the list of nodes in $\gV \backslash v_1$ visited by random walks on $\gG$. Refer to section 3 in \citet{durfee2019fully} and section 4.1 in \citet{gao2022fully} for detailed discussions on this property.

\section{Methodology}

To leverage these combinatorial properties of Schur complements in our contrastive views, two issues need to be addressed: The $O(N^2)$ overhead to compute a clique and a lack of stochasticity in the gaussian elimination procedure. We address both these issues by developing a clique approximation procedure that inherently introduces randomness. Thus, by eliminating nodes via GE and computing unbiased approximations of the induced cliques, our augmentation technique generates randomized Schur complements which preserve the random walk transition probabilities of the remaining nodes in expectation.

The need for stochasticity is based on the empirical results of \citet{zhu2021empirical}, where introducing randomness to the views (eg: combining diffusion with node dropping) improved contrastive learning performance on downstream classification tasks. Although a rigorous analysis of such behavior is not present in the literature, we provide interesting insights on GCL performance due to randomness introduced by our approximation procedure.



\subsection{Randomized Schur Complements}

We present our Schur complement approximation procedure in Algorithm \ref{alg:rlap}.\footnote{The code is available at: \href{https://github.com/kvignesh1420/rlap}{https://github.com/kvignesh1420/rlap}} We name it $rLap$ to denote the laplacian nature of its output. The input to $rLap$ is the graph $\gG$, the fraction of nodes to eliminate $\gamma$, node elimination scheme $o_v$ and the neighbor ordering scheme $o_n$. The output is a randomized Schur complement of $\gG$ after eliminating $\gamma|\gV|$ nodes\footnote{Elimination in this context indicates that a node is disconnected from all its neighbors. The dimensions of $\mR$ remain $\sR^{N \times N}$ with corresponding row and column set to $\vzero$.}. The scheme $o_v$ indicates the order in which $\gamma|\gV|$ nodes are eliminated. The possibilities for such a scheme are huge but we limit our analysis to random ordering and ordering based on node degree. In the `random' scheme, a node is randomly selected at each step of the outer loop. In the `degree' scheme, a priority queue is maintained to eliminate nodes based on their degree, i.e. nodes with lower degrees are eliminated before the highly connected ones.

\begin{algorithm}[tb]
  \caption{$rLap$}
  \label{alg:rlap}
\begin{algorithmic}
  \STATE {\bfseries Input:} graph $\gG = (\gV, \gE, w)$, node drop fraction $\gamma$,  node elimination scheme $o_v$, neighbor ordering scheme $o_n$.
  \STATE \textbf{set} $\mR = \mL$
  \REPEAT
  \STATE $\mathcal{X}_{\mR}(v_i) = o_n(\mathcal{N}_{\mR}(v_i))$
  \STATE $\mC = \textbf{0}$
  \FOR{$l=1$ {\bfseries to} $|\mathcal{X}_{\mR}(v_i)|-1$}
  \STATE // \textit{choose $x_q$ based on the conditional probability}
  \STATE $P(x_q|x_l) = \frac{w_{\mR}(v_i, x_q)}{w_{\mR}(v_i) - \sum\limits_{k=1}^l w_{\mR}(v_i, x_k)}; x_{l<q} \in \mathcal{X}_{\mR}(v_i)$
  \STATE // \textit{compute weight of the new edge between $x_l, x_q$}
  \STATE $w_{\mR}(x_l, x_q) = \frac{w_{\mR}(v_i, x_l) \cdot(w_{\mR}(v_i) - \sum\limits_{k=1}^l w_{\mR}(v_i, x_k))}{w_{\mR}(v_i)}$
  \STATE $\mC = \mC + w_{\mR}(x_l, x_q)\cdot \mathbf{\Delta}_{x_lx_q}$
  \ENDFOR
  \STATE $\mR = \mR - \sum\limits_{e_{ij}\in \gE_{\mR}, v_j \in \mathcal{N}_{\mR}(v_i)}w_{\mR}(e_{ij})\mathbf{\Delta}_{ij}$ 
  \STATE $\mR = \mR + \mC$
  \UNTIL{all the $\gamma|\gV|$ nodes are eliminated based on $o_v$}
  \STATE \textbf{return} $\mR$
\end{algorithmic}
\end{algorithm}

Now, without loss of generality, consider the $i^{th}$ iteration of the outer loop where node $v_i$ is being eliminated. The first step is to order the neighbors $\mathcal{N}_{\mR}(v_i)$ and store the result in $\mathcal{X}_{\mR}(v_i)$ \footnote{$\gE_{\mR}, w_{\mR}, \mathcal{N}_{\mR}$ are indexed by $\mR$ to denote the current state.}. The order decided by $o_n$ affects the arrangement of edges in the approximated clique $\mC$. Specifically, while iterating over the neighbors $x_l \in \mathcal{X}_{\mR}(v_i), l \in \{1, \cdots, |\mathcal{X}_{\mR}(v_i)|-1\}$ in the inner loop, we compute a conditional probability $P(x_q|x_l)$ of choosing a neighbor $x_q \in \mathcal{X}_{\mR}(v_i), q \in \{l+1, \cdots, |\mathcal{X}_{\mR}(v_i)|\}$, based on which, a new edge between $x_l, x_q$ is created. The choice of $o_n$ affects this conditional probability and can lead to sparser or denser variants of approximated cliques. Now, the weighted elementary laplacian  of the newly formed edge $w_{\mR}(x_l, x_q) \mathbf{\Delta}_{x_lx_q}$ is computed and added to $\mC$ \footnote{Here $x_l, x_q$ are indexed on $\mathcal{X}_{\mR}(v_i)$ for notational convenience, but have a one-to-one correspondence with our original node set $\gV$}. Note that, the inner loop iterates over $O(N)$ nodes for approximating the clique and avoids the $O(N^2)$ overhead for exact computation. Finally, $\textit{STAR}(\mR, v_i)$ is subtracted and $\mC$ is added to $\mR$ to continue the elimination process.

\begin{theorem}
\label{thm:rlap}
Given an undirected graph $\gG=(\gV, \gE, w)$, node dropping fraction $\gamma$, node elimination scheme $o_v$ and neighbor ordering scheme $o_n$, the output of $rLap(\gG, \gamma, o_v, o_n)$ is an unbiased estimator of the schur complement $SC(\gG, \gV_{\mR_{\gamma|\gV|}})$, where $\gV_{\mR_{\gamma|\gV|}} \subset \gV$ denotes the set of nodes that remain after $\gamma|\gV|$ eliminations.
\end{theorem}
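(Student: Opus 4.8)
The plan is to reduce the multi-step statement to a single-step claim and then close the argument by induction on the number of eliminations. The single-step claim is: if $\mR$ is the current laplacian before eliminating $v_i$, then conditioned on $\mR$ and on the orderings $\mathcal{X}_{\mR}(v_i)$, the expected value of the new clique $\mC$ equals the exact induced clique $\textit{CLIQUE}(\mR, v_i)$ from \cref{eq:star_clique}. Since the star term $\textit{STAR}(\mR, v_i)$ subtracted in Algorithm~\ref{alg:rlap} is deterministic given $\mR$, establishing $\mathbb{E}[\mC \mid \mR] = \textit{CLIQUE}(\mR, v_i)$ immediately gives $\mathbb{E}[\mR_{\text{new}} \mid \mR] = SC(\mR, \gV_{\mR}\backslash v_i)$ by \cref{eq:sc_star_clique}. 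The global statement then follows by the tower property: writing $\mR_0 = \mL$ and $\mR_k$ for the laplacian after $k$ eliminations, linearity of the Schur-complement map (it is affine in the input laplacian, being a difference of star and clique operators, each linear in $\mR$) lets us push the conditional expectation through, so $\mathbb{E}[\mR_{\gamma|\gV|}] = SC(\mL, \gV_{\mR_{\gamma|\gV|}})$, using the standard fact that iterated Schur complements compose, i.e. eliminating nodes one at a time yields the same matrix as eliminating the whole set at once.

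For the single-step claim I would compute $\mathbb{E}[\mC]$ directly from the inner loop. The loop adds, for each $l$, exactly one edge $\{x_l, x_q\}$ where $x_q$ is drawn from $\{x_{l+1}, \dots\}$ with probability $P(x_q \mid x_l) = w_{\mR}(v_i, x_q) / \bigl(w_{\mR}(v_i) - \sum_{k=1}^{l} w_{\mR}(v_i, x_k)\bigr)$, and assigns it weight $w_{\mR}(x_l, x_q) = w_{\mR}(v_i, x_l)\bigl(w_{\mR}(v_i) - \sum_{k=1}^{l} w_{\mR}(v_i, x_k)\bigr)/w_{\mR}(v_i)$. So the expected contribution of iteration $l$ to the coefficient of $\mathbf{\Delta}_{x_l x_q}$ for a fixed $q > l$ is $P(x_q\mid x_l)\, w_{\mR}(x_l,x_q) = w_{\mR}(v_i,x_l) w_{\mR}(v_i,x_q)/w_{\mR}(v_i)$ — note the running-denominator cancels exactly. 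Summing over $l < q$ and symmetrizing over ordered pairs, $\mathbb{E}[\mC] = \sum_{l<q} \frac{w_{\mR}(v_i,x_l)w_{\mR}(v_i,x_q)}{w_{\mR}(v_i)} \mathbf{\Delta}_{x_l x_q}$, which is precisely $\textit{CLIQUE}(\mR, v_i) = \frac{1}{2w(v_i)}\sum_{j,k} w(e_{ij})w(e_{ik})\mathbf{\Delta}_{jk}$ once one accounts for the double counting (the $\frac12$ and the unordered-vs-ordered pair bookkeeping) and notes that $\mathbf{\Delta}_{jj} = 0$ so diagonal terms drop out. One subtlety to handle carefully: the choice of $x_q$ at step $l$ is correlated with which neighbors remain "available," but because each step independently samples one $x_q$ from the suffix (without removing it from future consideration — re-reading the algorithm, $x_q$ can be re-selected), the per-step expectations are clean and additive; I would state the availability/independence structure precisely as the one place the argument could go wrong.

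The main obstacle I anticipate is bookkeeping rather than conceptual: matching the indexing convention of the algorithm (neighbors indexed along the ordered list $\mathcal{X}_{\mR}(v_i)$, with the last element $x_{|\mathcal{X}_{\mR}(v_i)|}$ never serving as an $x_l$) against the symmetric double sum in \cref{eq:star_clique}, and confirming that the telescoping denominators leave no residual dependence on $o_n$ — that is, that the \emph{expected} clique is invariant under the neighbor ordering even though the realized clique is not. A secondary point worth a line is well-definedness of the conditional probabilities: one needs $w_{\mR}(v_i) - \sum_{k=1}^{l} w_{\mR}(v_i, x_k) > 0$ for all $l \le |\mathcal{X}_{\mR}(v_i)| - 1$, which holds because at least the final neighbor's weight remains in the residual, using nonnegativity of edge weights (guaranteed inductively since star/clique operations on a laplacian with nonnegative off-diagonal-magnitude weights preserve that structure). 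Once these are pinned down, the induction and the tower property finish the proof with no further work.
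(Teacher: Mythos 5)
Your single-step computation is, in substance, exactly the paper's argument (Appendix \ref{app:poc_rlap}): rewrite $\textit{CLIQUE}(\mR_{i-1}, v_i)$ as an ordered double sum over the list $\mathcal{X}(v_i)$, observe that $P(x_q\mid x_l)\cdot w_{\mR}(x_l,x_q) = w_{\mR}(v_i,x_l)\,w_{\mR}(v_i,x_q)/w_{\mR}(v_i)$ because the running denominator cancels, and sum over $l<q$. Your side remarks are also fine: possible re-selection of a neighbor across inner-loop steps is immaterial because only the conditional law at each step $l$ enters, and expectations add by linearity; and positivity of the residual $w_{\mR}(v_i) - \sum_{k\le l} w_{\mR}(v_i,x_k)$ for $l \le |\mathcal{X}_{\mR}(v_i)|-1$ holds as you say.

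The gap is in how you compose the steps. You justify pushing the expectation through $\gamma|\gV|$ eliminations by asserting that the Schur-complement map is affine in the input Laplacian, ``being a difference of star and clique operators, each linear in $\mR$.'' This is false: $\textit{STAR}(\mR,v_i)$ is linear in $\mR$, but $\textit{CLIQUE}(\mR,v_i)$ has edge weights $w_{\mR}(v_i,x_l)w_{\mR}(v_i,x_q)/w_{\mR}(v_i)$, i.e.\ quadratic-over-linear in the entries of $\mR$; equivalently, $SC(\mL,\gV\backslash v_1)=\mL_2-a_{11}^{-1}\vb\vb^*$ is a rational, not affine, function of $\mL$. Hence $\mathbb{E}\big[SC(\mR_{k-1},\cdot)\big] \neq SC\big(\mathbb{E}[\mR_{k-1}],\cdot\big)$ in general, and the tower property alone does not upgrade one-step conditional unbiasedness to the unconditional multi-step statement. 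The paper never invokes such linearity: its proof establishes the conditional statement $\mathbb{E}[\mC_i \mid \mR_{i-1}, v_i] = \textit{CLIQUE}(\mR_{i-1}, v_i)$, so that $\mathbb{E}[\mR_i \mid \mR_{i-1}, v_i] = SC(\mR_{i-1}, \gV_{\mR_{i-1}}\backslash v_i)$, phrases the global claim as a loop invariant that is iterated, and carries out the rigorous multi-step bookkeeping at the level of the sequence $\mL_i = \mR_i + \sum_{k\le i}\vs_k\vs_k^*$ (Eq.~\ref{eq:L_R_relation}), whose increments $\mC_i - \textit{CLIQUE}(\mR_{i-1},v_i)$ have zero conditional mean (Eq.~\ref{eq:lap_clique_diff_seq}), making $\{\mL_i-\mL_0\}$ a matrix martingale. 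To repair your write-up, drop the affinity claim and either (i) state the conclusion as the conditional invariant $\mathbb{E}[\mR_i\mid\mR_{i-1},v_i]=SC(\mR_{i-1},\gV_{\mR_{i-1}}\backslash v_i)$ iterated along the elimination sequence, as the paper does, or (ii) run your induction on the corrected process $\{\mL_i\}$, for which $\mathbb{E}_{i-1}[\mL_i]=\mL_{i-1}$ and hence $\mathbb{E}[\mL_i]=\mL$ holds exactly without any linearity assumption on $SC$.
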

\begin{proof} Without loss of generality, let $\mR_{i-1}$ be the state after eliminating $i-1$ nodes. Let $v_i$ indicate the node which is being eliminated in the $i^{th}$ iteration of the outer loop. The proof is based on the loop invariant that $\mathbb{E}[\mR_i] = SC(\gG, \gV_{\mR_{i-1}}\backslash v_i)$ after the end of this iteration. With this guarantee, we continue the elimination process for $\gamma|\gV|$ iterations and achieve the desired randomized Schur complement which equals $SC(\gG, \gV_{\mR_{\gamma|\gV|}})$ under expectation.  The proof is available in Appendix \ref{app:poc_rlap} along with the tail bounds of deviation for the laplacian matrix martingale.
\end{proof}


\subsection{Connections with Graph Diffusion}

For a graph $\gG = (\gV, \gE, w)$, the diffusion operator is a polynomial filter on $\gG$ which mitigates the effect of noisy neighbors. The diffused output $\mS$ is given by:
\begin{equation}
\label{eq:diffusion}
    \mS = \sum_{i=0}^{\infty}\rho_i \mT^i
\end{equation}
Where $\mT$ is a transition matrix \citep{klicpera2019diffusion} and can be chosen as either $\mT_{rw} = \mA\mD^{-1}$ or $\mT_{sym} = \mD^{-1/2}\mA\mD^{-1/2}$, $\mA, \mD$ are the adjacency and degree matrices for $\gG$ respectively,  $\rho_i$ is the scaling coefficient with $\rho_i^{PPR} = \alpha \cdot (1 - \alpha)^i, \alpha \in (0,1)$ and $\rho_i^{HK} = e^{-t}\cdot \frac{t^i}{i!}$ representing the coefficients for personalized page rank \citep{page1999pagerank} and heat kernels with diffusion time $t$ \citep{kondor2002diffusion, chung2007heat} respectively. When $\mT = \mT_{rw} = \mA\mD^{-1}$, the closed forms of diffusions are given by:
\begin{align}
\label{eq:diffusion_rw}
\begin{split}
\mS^{PPR} &= \alpha(\mI - (1-\alpha)\mA\mD^{-1})^{-1} \\
\mS^{HK} & = \exp(t\cdot \mA\mD^{-1} - t\mI)  
\end{split}
\end{align}

Note that, for $\mS^{PPR}$, computing $(\mI - (1-\alpha)\mA\mD^{-1})^{-1}$ is an expensive operation over the entire graph and leads to undesired bottlenecks when the desired augmented view during contrasting is just a relatively smaller sub-graph. We address this issue in the following theorem using $rLap$.

\begin{theorem}
\label{thm:theta_schur}
Let $\rho_i = c\cdot \beta^i \in \sR, \beta \to 1$. The sub-graph with nodes $\gV_s$ sampled from $\mS=(\mI - (1-\alpha)\mA\mD^{-1})^{-1}$ is given by $\mS_{\gV_s} = \bigg(\lim\limits_{K \rightarrow \infty} \sum_{k = 0}^K c \cdot \beta^k \cdot (\mA\mD^{-1})^k \bigg)_{\gV_s} = \mathbb{E} \bigg[ \lim\limits_{K \rightarrow \infty} \sum_{k = 0}^K c \cdot \beta^k \cdot \mD_{\gV_s}\widetilde{\mD}_{\mR}^{-1}(\widetilde{\mA}_{\mR}\widetilde{\mD}_{\mR}^{-1})^k \bigg] $, where $\mR$ is the approximation of schur complement $SC(\gG, \gV_s), \gV_s \subset \gV$ given by $rLap$, $\mD_{\gV_s}$ is a diagonal matrix formed by selecting entries corresponding to $\gV_s$ from $\mD$ and $\widetilde{\mA}_{\mR}, \widetilde{\mD}_{\mR}$ represent the adjacency and degree matrices w.r.t $\mR$.
\end{theorem}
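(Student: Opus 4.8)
The plan is to rewrite the PPR kernel as the resolvent of a grounded Laplacian, identify its principal $\gV_s$‑block with the inverse of a Schur complement via the block‑inversion formula, pass to the limit $\beta\to1$ so that this Schur complement becomes the graph Schur complement $SC(\gG,\gV_s)$, and then invoke Theorem~\ref{thm:rlap} to replace the exact complement by the random output $\mR$ of $rLap$. First, I set $c=1$ and $\beta=1-\alpha$ (so $\rho_i=(1-\alpha)^i$ and ``$\beta\to1$'' is ``$\alpha\to0$'') and use $\mI-(1-\alpha)\mA\mD^{-1}=(\mD-\beta\mA)\mD^{-1}$ to write $\mS=\mD\,\mL_\beta^{-1}$ with $\mL_\beta:=\mD-\beta\mA$. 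For $\beta<1$ the $i$‑th row sum of $\mL_\beta$ is $(1-\beta)\,w(v_i)>0$, so $\mL_\beta$ is the Laplacian of $\gG$ with edge weights scaled by $\beta$ plus a ground edge of weight $(1-\beta)\,w(v_i)$ at each node, hence invertible, as is its $\gV\setminus\gV_s$ block. Since $\mD$ is diagonal, restricting to $\gV_s$ gives $\mS_{\gV_s}=\mD_{\gV_s}\,(\mL_\beta^{-1})_{\gV_s}$, and the classical formula for a principal block of an inverse gives $(\mL_\beta^{-1})_{\gV_s}=\big(SC(\mL_\beta,\gV_s)\big)^{-1}$, where the Schur complement eliminates $\gV\setminus\gV_s$. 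Hence $\mS_{\gV_s}=\mD_{\gV_s}\big(SC(\mL_\beta,\gV_s)\big)^{-1}$.

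Next, I let $\beta\to1$. The $\gV\setminus\gV_s$ block of $\mL_\beta$ is invertible for $\beta$ in a neighbourhood of $1$ (it is the grounded Laplacian of the induced subgraph on $\gV\setminus\gV_s$, nonsingular since $\gG$ is connected), so $SC(\mL_\beta,\gV_s)$ is analytic there and tends to $SC(\mL,\gV_s)=SC(\gG,\gV_s)$. Writing the Laplacian $SC(\gG,\gV_s)=\widetilde{\mD}-\widetilde{\mA}$ in degree--adjacency form and using ``$\beta\to1$'' to replace $SC(\mL_\beta,\gV_s)$ by the regrounded matrix $\widetilde{\mD}-\beta\widetilde{\mA}$ (the two agree at $\beta=1$), I factor $\widetilde{\mD}-\beta\widetilde{\mA}=(\mI-\beta\widetilde{\mA}\widetilde{\mD}^{-1})\widetilde{\mD}$, so that $\mD_{\gV_s}(\widetilde{\mD}-\beta\widetilde{\mA})^{-1}=\mD_{\gV_s}\widetilde{\mD}^{-1}(\mI-\beta\widetilde{\mA}\widetilde{\mD}^{-1})^{-1}$. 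Because $\widetilde{\mA}\widetilde{\mD}^{-1}$ is column‑stochastic, its spectral radius is $1$, so for $\beta<1$ the resolvent expands in a convergent Neumann series and I obtain $\mS_{\gV_s}=\sum_{k\ge0}c\,\beta^k\,\mD_{\gV_s}\widetilde{\mD}^{-1}(\widetilde{\mA}\widetilde{\mD}^{-1})^k$, with $K\to\infty$ as in the statement.

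Finally, I remove the idealisation to the exact Schur complement. By Theorem~\ref{thm:rlap}, $\mR=rLap(\gG,\gamma,o_v,o_n)$ has $\mathbb{E}[\mR]=SC(\gG,\gV_s)$; since the degree part $\widetilde{\mD}_{\mR}$ and the adjacency part $\widetilde{\mA}_{\mR}$ are linear images of $\mR$, we get $\mathbb{E}[\widetilde{\mD}_{\mR}]=\widetilde{\mD}$ and $\mathbb{E}[\widetilde{\mA}_{\mR}]=\widetilde{\mA}$. Substituting $\mR$ for the exact complement in the series above and taking expectations term by term gives $\mS_{\gV_s}=\mathbb{E}\!\left[\lim_{K\to\infty}\sum_{k=0}^{K}c\,\beta^k\,\mD_{\gV_s}\widetilde{\mD}_{\mR}^{-1}(\widetilde{\mA}_{\mR}\widetilde{\mD}_{\mR}^{-1})^k\right]$, as claimed.

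The hard part is this last step together with the limits it hides. The map $\mR\mapsto\sum_k c\beta^k\mD_{\gV_s}\widetilde{\mD}_{\mR}^{-1}(\widetilde{\mA}_{\mR}\widetilde{\mD}_{\mR}^{-1})^k=c\,\mD_{\gV_s}(\widetilde{\mD}_{\mR}-\beta\widetilde{\mA}_{\mR})^{-1}$ is nonlinear in $\mR$, so $\mathbb{E}$ cannot simply be pulled through the resolvent; it has to be pushed through at the level of $\widetilde{\mA}_{\mR}$ and $\widetilde{\mD}_{\mR}$, where it is linear, and then interchanged with the $\beta\to1$ and $K\to\infty$ limits. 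As $\beta\to1$ the resolvent is dominated by a rank‑one term governed by $\vone$ and $\mathrm{tr}(\widetilde{\mD}_{\mR})$, so the interchange needs a dominated‑convergence‑type bound on the Neumann tails, uniform in the randomness of $\mR$, together with the matching of the expectation of this leading term against the one produced by the exact Schur complement. With those in hand, the remaining ingredients---the block‑inverse formula, the $\widetilde{\mD}$‑commutation, and Neumann convergence---are routine.
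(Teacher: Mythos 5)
Your argument is essentially the paper's own proof run in the forward direction: the same three ingredients---the Neumann series/resolvent identity, the block-inversion lemma identifying $(\mD-\beta\mA)^{-1}_{\gV_s}$ with $SC(\mD-\beta\mA,\gV_s)^{-1}$, and the unbiasedness of $rLap$ from Theorem~\ref{thm:rlap} in the $\beta\to1$ limit---appear in the same roles, so this is the same proof rather than a genuinely different route. The ``hard part'' you flag, namely pushing the expectation through the nonlinear resolvent $(\widetilde{\mD}_{\mR}-\beta\widetilde{\mA}_{\mR})^{-1}$, is precisely the step the paper itself leaves informal by asserting $\mathbb{E}\big[(\widetilde{\mD}_{\mR}-\beta\widetilde{\mA}_{\mR})^{-1}\big]=SC(\mD-\beta\mA,\gV_s)^{-1}$ directly from Theorem~\ref{thm:rlap}, so your write-up is at least as careful as the published argument.
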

\begin{proof}
The proof is a simple expansion of the limit:

$\mathbb{E} \bigg[ \lim\limits_{K \rightarrow \infty} \sum_{k = 0}^K c\cdot \beta^k \mD_{\gV_s}\widetilde{\mD}_{\mR}^{-1} (\widetilde{\mA}_{\mR}\widetilde{\mD}_{\mR}^{-1})^k \bigg]$ \\ 
$= c \mD_{\gV_s} \mathbb{E} \big[ \widetilde{\mD}_{\mR}^{-1}\lim\limits_{K \rightarrow \infty} \sum_{k = 0}^K(\beta \widetilde{\mA}_{\mR}\widetilde{\mD}_{\mR}^{-1})^k \big]$\\ 
$= c \mD_{\gV_s} \mathbb{E} \big[ \widetilde{\mD}_{\mR}^{-1}\cdot(\mI - \beta \widetilde{\mA}_{\mR}\widetilde{\mD}_{\mR}^{-1})^{-1} \big]$\\
$= c \mD_{\gV_s} \mathbb{E} \big[ (\widetilde{\mD}_{\mR} - \beta \widetilde{\mA}_{\mR})^{-1} \big]$\\
$= c \mD_{\gV_s} \cdot SC(\mD - \beta \mA, \gV_s)^{-1}$ \\
$= c \mD_{\gV_s} (\mD - \beta \mA)^{-1}_{\gV_s} = \mS_{\gV_s}$

The equality $\mathbb{E}[(\widetilde{\mD}_{\mR} - \beta \widetilde{\mA}_{\mR})^{-1}] = SC(\mD - \beta \mA, \gV_s)^{-1}$
is based on the guarantees of Theorem \ref{thm:rlap} when $\beta \to 1$ and $\mD - \beta \mA \to \mL$. The equality $SC(\mD - \beta \mA, \gV_s)^{-1} = (\mD - \beta \mA)^{-1}_{\gV_s}$ is based on the standard matrix inversion lemma. Refer to \citet{gallier2010notes} and Appendix C.4 in \citet{boyd2004convex} for further details.
\end{proof}

When $\beta$ is not close to $1$, $\mD - \beta \mA$ doesn't represent a graph laplacian but instead represents a symmetric diagonally dominant matrix (SDDM). Similar to the laplacian property of Schur complements, it can be shown that Schur complements of SDDM matrices are also SDDM (refer Appendix A, Lemma A.1 in \citet{fahrbach2020faster} for the proof). To better understand the implications of Theorem \ref{thm:theta_schur}, consider a graph $\gG = (\gV, \gE, w)$ on which the PPR diffusion operator needs to be applied, followed by a sub-graph sampling operation with respect to nodes $\gV_s \subset \gV$. Such a requirement is pervasive in GCL when views are based on diffusion matrices. However, the overheads of computing $\mS$ for medium-large scale graphs can be significant. Based on the empirical analysis of \citet{klicpera2019diffusion}, the optimal choice of $\alpha$ for $\rho_i = \alpha \cdot (1 - \alpha)^i$ typically lies in a close range of $\alpha \in [0.05, 0.2]$. This implies $\beta = 1 - \alpha$ is close to $1$, which justifies our assumption in Theorem \ref{thm:theta_schur} for practical settings. Thus, instead of diffusion followed by sub-graph sampling, one can apply $rLap$ on $\gG$ to obtain the randomized Schur complement which is relatively sparse, followed by the diffusion operator (up to $\mD_{\gV_s}\widetilde{\mD}_{\mR}^{-1}$ scaling). See Figure \ref{fig:rlap_diffusion} for an illustration.

\begin{figure}[ht]
\vskip 0.2in
\centering
\centerline{\includegraphics[width=\columnwidth]{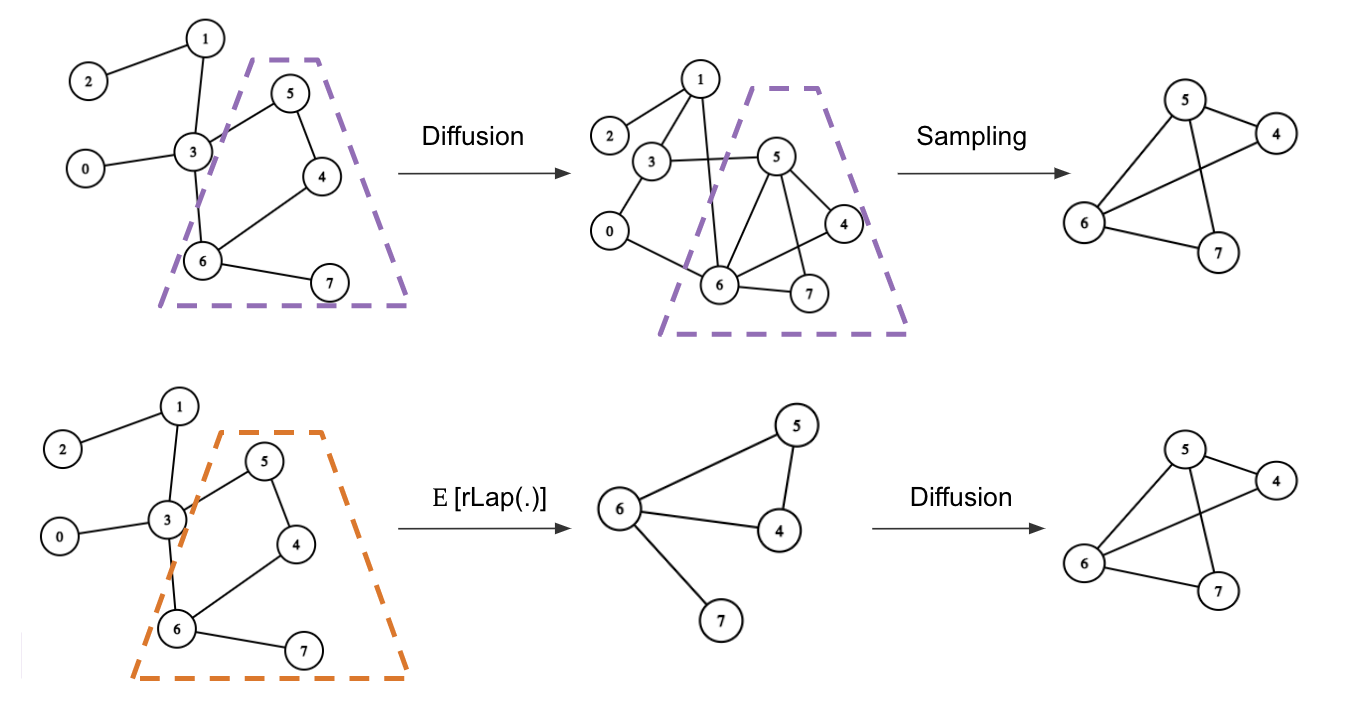}}
\caption{Illustration of diffusion + sampling vs $rLap$ + diffusion with a toy example. In the first row, diffusion is applied on a graph $\gG$ followed by sampling the sub-graph corresponding to nodes: $\gV_s = \{4,5,6,7\}$. In the second row, the expected output of $rLap$ on $\gG$ with $\gV_s = \{4,5,6,7\}$ followed by diffusion on the randomized Schur complement gives the same desired sub-graph.} 
\label{fig:rlap_diffusion}
\vskip -0.2in
\end{figure}

\subsection{Augmentation with rLap}

\begin{algorithm}[tb]
  \caption{A generalized GCL framework with $rLap$.}
  \label{alg:rlap_gcl}
\begin{algorithmic}
  \STATE {\bfseries Input:} Graph(s)  $G = \{\gG^i, i \in \{1, \cdots, M\}$, node elimination scheme $o_v$, neighbor ordering scheme $o_n$, fraction of nodes to eliminate for both views $\gamma_1, \gamma_2$, encoders $f_1, f_2$, node level projectors $z_{n1}, z_{n2}$, aggregation function $\bigoplus$, graph level projectors $z_{g1}, z_{g2}$,  mode $m$.
  \REPEAT
  \FOR{ $\gG^i$ in $G$}
  \STATE $\widetilde{\gG}^{i}_1 = rLap(\gG^i, \gamma_1, o_v, o_n)$ \hfill // first view
  \STATE $\mH_{\widetilde{\gG}^{i}_1} = f_1(\widetilde{\gG}^{i}_1)$ \hfill //node embeddings
  \STATE $\mZ_{\widetilde{\gG}^{i}_1}, \vz_{\widetilde{\gG}^{i}_1} = z_{n1}(\mH_{\widetilde{\gG}^{i}_1}), z_{g1}(\bigoplus(\mH_{\widetilde{\gG}^{i}_1}))$ \hfill // features
  \STATE $\widetilde{\gG}^{i}_2 = rLap(\gG^i, \gamma_2, o_v, o_n)$ \hfill // second view
  \STATE $\mH_{\widetilde{\gG}^{i}_2} = f_2(\widetilde{\gG}^{i}_2)$ \hfill //node embeddings
  \STATE $\mZ_{\widetilde{\gG}^{i}_2}, \vz_{\widetilde{\gG}^{i}_2} = z_{n2}(\mH_{\widetilde{\gG}^{i}_2}), z_{g2}(\bigoplus(\mH_{\widetilde{\gG}^{i}_2}))$ \hfill // features
  \ENDFOR
  \STATE // compute objectives based on contrastive modes
  \STATE $\text{\textbf{set} } \ell = 0$
  \FOR{$j = 1, 2, \cdots, M$ and $k = 1, 2, \cdots, M$}
  \IF{$m = ``\textit{l-l}"$}
   \STATE  $\ell = \ell + \mathcal{L}_{l-l}(\mZ_{\widetilde{\gG}^{j}_1}, \mZ_{\widetilde{\gG}^{j}_2}, \mZ_{\widetilde{\gG}^{k}_1}, \mZ_{\widetilde{\gG}^{k}_2})$ 
   \ELSIF{$m = ``\textit{g-l}"$} 
   \STATE  $\ell = \ell + \mathcal{L}_{g-l}(\mZ_{\widetilde{\gG}^{j}_1}, \mZ_{\widetilde{\gG}^{k}_2}, \vz_{\widetilde{\gG}^{j}_1}, \vz_{\widetilde{\gG}^{k}_2})$ 
   \ELSIF{$m = ``\textit{g-g}"$} 
   \STATE  $\ell = \ell + \mathcal{L}_{g-g}(\vz_{\widetilde{\gG}^{j}_1}, \vz_{\widetilde{\gG}^{j}_2}, \vz_{\widetilde{\gG}^{k}_1}, \vz_{\widetilde{\gG}^{k}_2})$ 
   \ENDIF
  \ENDFOR
  \STATE // compute gradients with appropriate normalization $\mathcal{Z}$
  \STATE $\nabla_{f_1, f_2, z_{n1}, z_{n2}, z_{g1}, z_{g2}} \frac{1}{\mathcal{Z}} (\ell)$
  \UNTIL{convergence/tolerance}
\end{algorithmic}
\end{algorithm}

We present a generalized GCL framework with $rLap$ as the augmentor in Algorithm \ref{alg:rlap_gcl}. Given a collection of graphs $G = \{\gG^i, i \in \{1, \cdots, M\}, M \ge 1$, the blueprint presented in Algorithm \ref{alg:rlap_gcl} can be used for learning node and graph features via different design choices for encoders (including projectors) and contrastive objectives. For every graph $\gG^i \in G$, $rLap$ generates the randomized schur complement views $\widetilde{\gG}^{i}_1, \widetilde{\gG}^{i}_2$, for which, the graph features $\vz_{\widetilde{\gG}^{i}_1}, \vz_{\widetilde{\gG}^{i}_2}$ and node features $\mZ_{\widetilde{\gG}^{j}_1}, \mZ_{\widetilde{\gG}^{j}_2}$ are learnt by the encoders and projectors. The learning objective depends on the contrastive mode $m$, which can be either `l-l', `g-l' or `g-g' as defined in the preliminaries. Most of the GCL frameworks in the literature can be derived from this blueprint. For instance, by replacing $rLap$ with a uniform node dropping augmentor, using shared GCN-based encoders $f_1=f_2=f(.)$, MLP-projectors $z_{g1}=z_{g2}=z_g(.)$, `g-g' contrastive mode and InfoNCE loss $\mathcal{L}_{l-l}$, we obtain the widely used GraphCL framework \citep{you2020graph}. See Appendix \ref{appendix:experimental_setup} for illustrations and details.

However, recent efforts in developing augmentation techniques tend to restrict the design choices of encoders and objectives when comparing the overall GCL performance with prior works. Such an approach fails to isolate the effectiveness and limitations of an augmentor from other design choices. We break this norm and follow established benchmark practices of \citet{zhu2021empirical} to perform controlled experiments in the following section.

\section{Experiments}

Owing to the framework-agnostic nature of augmentors, we conduct controlled experiments and evaluate their effectiveness on unsupervised node and graph classification tasks under the linear evaluation protocol \citep{velickovic2019deep}. The controlled settings pertain to fixing the shared/dedicated nature of encoder(s), contrastive modes, and objectives (see Table \ref{table:exp_settings}). Especially, we leverage the designs of 4 widely used GCL frameworks: GRACE \citep{zhu2020deep}, MVGRL \citep{hassani2020contrastive}, GraphCL \citep{you2020graph} and BGRL \citep{thakoor2021bootstrapped}. Comprehensive details on experimental settings, datasets, augmentors, design choices, evaluation protocols  and baselines are available in Appendix \ref{appendix:experimental_setup}. In the following results for $rLap$, we employ a `random' node selection scheme as $o_v$ and an ordering scheme based on increasing order of edge weights as $o_n$. A detailed ablation study on these schemes is presented in Appendix \ref{app:rlap_ablation}.

\begin{table}[ht]
\caption{Control settings for evaluating augmentors.}
\label{table:exp_settings}
\vskip 0.15in
\begin{center}
\begin{small}
\begin{sc}
\begin{tabular}{lcccr}
\toprule
Setting & design choices\\
\midrule
Dataset     & \textbf{eval} \\
Augmentor   & \textbf{eval} \\
Encoders    &  Shared, dedicated \\
Mode    & \textit{l-l, g-l, g-g} \\
Objective   & InfoNCE, JSD, BL\\
\bottomrule
\end{tabular}
\end{sc}
\end{small}
\end{center}
\vskip -0.1in
\end{table}

\subsection{Node Classification Results}

\begin{table*}[ht!]
\centering
\caption{Evaluation (in accuracy) on benchmark node datasets with \textbf{GRACE} based design.}
\label{table:results_grace}
\vskip 0.15in
\begin{center}
\begin{small}
\begin{sc}
\begin{tabular}{c|c|c|c|c|c}
\toprule
Augmentor & CORA & Amazon-Photo & PubMed & Coauthor-CS & Coauthor-Phy \\
\midrule
EdgeAddition & $83.34 \pm 1.69$ &  $86.99 \pm 0.57$ & $84.03 \pm 0.97$ & $89.94 \pm 0.72$ & $95.53 \pm 0.35$ \\
EdgeDropping & $82.87 \pm 2.39$ & $90.8 \pm 0.55$ & $84.24 \pm 0.87$  & $92.53 \pm 0.4$ & $95.31 \pm 0.43$ \\
EdgeDroppingDegree & \underline{$83.42 \pm 3.57$}  & $88.58 \pm 1.13$ & $83.88 \pm 0.77$ & $92.49 \pm 0.49$ & $95.47 \pm 0.38$ \\
EdgeDroppingEVC & $82.32 \pm 1.81$ & $90.74 \pm 0.9$ & $84.2 \pm 0.6$  & $92.3 \pm 0.31$ & $95.60 \pm 0.37$\\
EdgeDroppingPR & $82.39 \pm 1.31$ & $92.01 \pm 0.87$  & $84.22 \pm 0.74$ & $92.45 \pm 0.51$ &  $95.51 \pm 0.25$ \\
MarkovDiffusion & $82.1 \pm 3.11$  &  $90.91 \pm 0.89$  & $83.96 \pm 0.91$ & $92.69 \pm 0.59$ &  $95.13 \pm 0.43$\\
NodeDropping & $82.9 \pm 1.8$  & \underline{$92.15 \pm 1.33$} & $84.02 \pm 0.9$ & \underline{$92.81 \pm 0.89$} & \underline{$95.63 \pm 0.32$}  \\
PPRDiffusion & $79.85 \pm 2.07$  & $91.16 \pm 0.91$ & $83.31 \pm 1.26$ & $92.75 \pm 0.58$ & $95.09 \pm 0.34$ \\
RandomWalkSubgraph & $82.13 \pm 2.89$ & $89.76 \pm 1.26$ & \underline{$84.37 \pm 0.59$}  & $92.51 \pm 0.29$ & $95.12 \pm 0.33$\\
rLap & $\mathbf{83.75 \pm 2.64}$  & $\mathbf{92.59 \pm 1.05}$ & $\mathbf{84.56 \pm 0.71}$ & $\mathbf{93.1 \pm 0.54}$  & $\mathbf{95.83 \pm 0.44}$ \\
\bottomrule
\end{tabular}
\end{sc}
\end{small}
\end{center}
\vskip -0.1in
\end{table*}

\begin{table*}[ht!]
\centering
\caption{Evaluation (in accuracy) on benchmark node datasets with \textbf{MVGRL} based design.}
\label{table:results_mvgrl}
\vskip 0.15in
\begin{center}
\begin{small}
\begin{sc}
\begin{tabular}{c|c|c|c|c|c}
\toprule
Augmentor & CORA & Amazon-Photo & PubMed & Coauthor-CS & Coauthor-Phy \\
\midrule
EdgeAddition & $81.62 \pm 4.01$  & $87.64 \pm 1.68$ & $83.35 \pm 0.75$  & $91.44 \pm 0.65$ & $94.23 \pm 0.21$ \\
EdgeDropping &  $83.49 \pm 1.32$ & $87.87 \pm 1.33$ & $83.73 \pm 1.14$  &   $91.18 \pm 0.43$ & \underline{$94.68 \pm 0.38$}\\
EdgeDroppingDegree & $83.57 \pm 2.29$  & $88.11 \pm 1.25$ & $84.13 \pm 0.79$ & $91.32 \pm 0.52$ & $94.57 \pm 0.39$\\
EdgeDroppingEVC & $82.79 \pm 2.73$  & $88.81 \pm 1.46$ & \underline{$84.14 \pm 0.77$} & \underline{$91.71 \pm 0.46$} & $94.62 \pm 0.56$ \\
EdgeDroppingPR & $83.42 \pm 2.47$ & $88.04 \pm 1.36$  & $83.76 \pm 0.91$ & $91.52 \pm 0.47$ & $94.51 \pm 0.35$\\
MarkovDiffusion & $\mathbf{84.3 \pm 2.91}$  & \underline{$90.2 \pm 0.98$} & $84.0 \pm 1.03$ & $91.61 \pm 0.49$ & $94.22 \pm 0.33$ \\
NodeDropping & \underline{$84.16 \pm 1.69$} & $86.46 \pm 1.51$ & $83.71 \pm 1.22$ & $91.6 \pm 0.58$ & $94.54 \pm 0.29$  \\
PPRDiffusion & $84.05 \pm 2.72$  & $\mathbf{90.84 \pm 1.67}$  & $82.7 \pm 0.85$ & $90.9 \pm 1.06$ & $94.03 \pm 0.5$ \\
RandomWalkSubgraph & $83.53 \pm 2.46$  & $88.31 \pm 1.01$ & $83.36 \pm 0.94$ & $91.70 \pm 0.49$ & $94.6 \pm 0.49$\\
rLap & $83.68 \pm 2.04$ & $87.14 \pm 1.34$ &  $\mathbf{84.21 \pm 0.46}$ & $\mathbf{91.73 \pm 0.53}$ & $\mathbf{94.81 \pm 0.31}$ \\
\bottomrule
\end{tabular}
\end{sc}
\end{small}
\end{center}
\vskip -0.1in
\end{table*}

For node classification tasks, we report the performance using GRACE and MVGRL-based designs in Table \ref{table:results_grace}, \ref{table:results_mvgrl} respectively. The GRACE design corresponds to shared encoders, `l-l' contrastive mode, and InfoNCE objective. In this setting, $rLap$ achieves the best performance across all datasets, followed by EdgeDropping variants and NodeDropping. It is interesting to observe that PPRDiffusion and MarkovDiffusion approaches tend to perform relatively poorly in this setting. This is due to the fixed structure of the augmented views during training that a shared encoder is attempting to contrast. Intuitively, the lack of stochasticity fails to present noisy views to the model and improve its contrastive ability. On the other hand, for every node that is eliminated by $rLap$, it incorporates the lost information in the remaining sub-graph and preserves the random walk-based combinatorial properties in expectation. This approach of leveraging global information in augmented views seems to benefit the GNN encoders. Additionally, since randomized Schur complements are based on node and edge perturbations, one can intuitively consider it as an effective combination of the two.

On the other hand, MVGRL design corresponds to dedicated encoders, `g-l' contrastive mode, and JSD objective. In this setting, diffusion-based methods performed relatively better on datasets such as CORA and AMAZON-PHOTO when compared to $rLap$ and NodeDropping. During our experiments on PUBMED, COAUTHOR-CS, and COAUTHOR-PHY, the PPRDiffusion technique led to out-of-memory (OOM) issues on a 32GB GPU, which we addressed using sub-graph sampling approaches. One can observe that our technique is outperformed by diffusion and adaptive augmentors such as EdgeDroppingEVC on CORA and AMAZON-PHOTO and the margin of improvement with $rLap$ on PUBMED, COAUTHOR-CS and COAUTHOR-PHY is minimal. However, we emphasize the observation that computational overheads of PPRDiffusion, MarkovDiffusion and EdgeDroppingEVC techniques are significantly large when compared to $rLap$ (see Table \ref{table:node_aug_stats}). In this setting, $rLap$ achieves the best trade-off between performance and computational overheads. Furthermore, based on Theorem \ref{thm:theta_schur}, one can combine $rLap$ with PPRDiffusion to reduce the computational overheads (see Appendix \ref{app:add_exp}). An interesting observation related to MVGRL design and non-diffusion based augmentors is that a perturbation ratio $\gamma_1$ close to $0$ for one of the views was preferred in most of the experiments. Intuitively, one can think of this setting as contrasting a highly perturbed graph with a relatively less perturbed one, which reflects the essence of multi-view contrasting.

\subsection{Graph Classification Results}

\begin{table*}[ht!]
\centering
\caption{Evaluation (in accuracy) on benchmark graph datasets with \textbf{GraphCL} based design.}
\label{table:results_graphcl}
\vskip 0.15in
\begin{center}
\begin{small}
\begin{sc}
\begin{tabular}{c|c|c|c|c|c}
\toprule
Augmentor & PROTEINS & IMDB-BINARY & MUTAG & IMDB-MULTI & NCI1 \\
\midrule
EdgeAddition &  $71.34 \pm 3.28$ & $68.7 \pm 3.55$   &  $81.0 \pm 8.89$  &  $47.13 \pm 4.33$ & $73.36 \pm 1.81$ \\
EdgeDropping & $71.79 \pm 3.37$  & $70.4 \pm 5.37$  &  $83.0 \pm 5.15$ &   $46.4 \pm 4.81$ & $71.29 \pm 3.08$ \\
EdgeDroppingDegree & $71.96 \pm 3.93$ & $67.4 \pm 4.76$ & $85.0 \pm 11.62$ & \underline{$48.03 \pm 5.42$} & $74.4 \pm 1.65$   \\
EdgeDroppingEVC & $74.2 \pm 2.97$ & \underline{$71.3 \pm 4.71$}  & $80.5 \pm 10.11$ &  $45.94 \pm 5.86$ & $72.65 \pm 2.32$  \\
EdgeDroppingPR &  $74.11 \pm 4.24$ & $\mathbf{71.4 \pm 2.87}$  & $83.5 \pm 8.38$ & $46.8 \pm 2.81$ & \underline{$74.6 \pm 2.29$} \\
MarkovDiffusion &  $72.68 \pm 4.43$ &  $64.9 \pm 5.54$ & $80.0 \pm 6.71$ &  $41.47 \pm 3.69$ & $71.83 \pm 2.5$\\
NodeDropping & $73.57 \pm 1.84$  & $69.0 \pm 4.27$  &  $82.0 \pm 6.03$  &  $47.33 \pm 4.86$ &  $73.65 \pm 2.61$\\
PPRDiffusion &  $73.48 \pm 5.15$  & $69.8 \pm 2.96$   & \underline{$85.0 \pm 6.71$}  & $42.87 \pm 3.63$ & $73.36 \pm 2.34$ \\
RandomWalkSubgraph & \underline{$74.55 \pm 4.59$}  &  $70.0 \pm 3.35$ &  $84.0 \pm 9.17$  & $43.93 \pm 3.55$ & $74.01 \pm 2.47$ \\
rLap &  $\mathbf{75.27 \pm 3.34}$ & $70.9 \pm 5.01$  & $\mathbf{87.5 \pm 9.86}$  & $\mathbf{48.66 \pm 2.68}$ & $\mathbf{75.06 \pm 1.65}$  \\
\bottomrule
\end{tabular}
\end{sc}
\end{small}
\end{center}
\vskip -0.1in
\end{table*}

\begin{table*}[ht!]
\centering
\caption{Evaluation (in accuracy) on benchmark graph datasets with \textbf{BGRL} based design. }
\label{table:results_bgrl}
\vskip 0.15in
\begin{center}
\begin{small}
\begin{sc}
\begin{tabular}{c|c|c|c|c|c}
\toprule
Augmentor & PROTEINS & IMDB-BINARY & MUTAG &  IMDB-MULTI & NCI1\\
\midrule
EdgeAddition & $81.57 \pm 6.12$  & $76.1 \pm 5.63$  & $73.5 \pm 11.63$  & $59.0 \pm 9.58$ & $72.77 \pm 3.01$ \\
EdgeDropping & $80.5 \pm 7.94$  &  \underline{$78.3 \pm 10.83$} & $72.5 \pm 10.55$ & $56.13 \pm 8.31$ & $74.89 \pm 2.36$ \\
EdgeDroppingDegree & $81.5 \pm 6.51$ &  $\mathbf{78.5 \pm 6.32}$  & $74.0 \pm 5.39$ & \underline{$59.07 \pm 5.0$} & $73.67 \pm 3.98$ \\
EdgeDroppingEVC &  $82.8 \pm 3.94$ & $75.8 \pm 6.26$  & $79.5 \pm 9.63$  & $55.0 \pm 7.01$ & $74.5 \pm 1.86$ \\
EdgeDroppingPR & $80.86 \pm 8.49$  & $76.8 \pm 7.48$   & $74.5 \pm 8.52$ &  $57.8 \pm 9.96$ & $72.7 \pm 2.69$ \\
MarkovDiffusion & $81.84 \pm 7.48$  & $75.3 \pm 8.36$   & \underline{$79.6 \pm 8.79$} &  $56.8 \pm 7.71$ & $68.69 \pm 2.99$  \\
NodeDropping & $80.59 \pm 6.12$   & $70.3 \pm 7.29$  & $72.0 \pm 9.81$ & $57.53 \pm 8.5$ & $71.78 \pm 2.4$ \\
PPRDiffusion &  $79.32 \pm 7.69$ & $73.5 \pm 8.22$  & $76.5 \pm 10.26$ &  $57.13 \pm 7.72$ & $70.49 \pm 2.23$ \\
RandomWalkSubgraph & \underline{$83.09 \pm 3.85$}  & $69.5 \pm 10.31$& $73.5 \pm 10.51$ &  $57.8 \pm 9.47$ & \underline{$74.92 \pm 2.88$} \\
rLap & $\mathbf{84.34 \pm 4.03}$ & $74.8 \pm 10.17$  & $\mathbf{81.5 \pm 5.39}$ & $\mathbf{59.47 \pm 7.42}$ & $\mathbf{75.15 \pm 3.57}$   \\
\bottomrule
\end{tabular}
\end{sc}
\end{small}
\end{center}
\vskip -0.1in
\end{table*}

For graph classification tasks, we report the performance using GraphCL and BGRL designs in Table \ref{table:results_graphcl}, \ref{table:results_bgrl} respectively. GraphCL design corresponds to shared encoders, `g-g' contrastive mode and InfoNCE objective. Similar to our observations with shared encoders and InfoNCE loss in the node classification setting, $rLap$ outperformed other augmentors on PROTEINS, MUTAG, IMDB-MULTI and NCI1 datasets and is quite close to the EdgeDroppingPR technique on IMDB-BINARY dataset. EdgedroppingPR was the best adaptive edge augmentor on PROTEINS, IMDB-BINARY and NCI1 whereas EdgeDroppingDegree performed well on MUTAG and IMDB-MULTI. One should note that leveraging PageRank information in EdgeDroppingPR to perform stochastic augmentations consistently led to better performance than the PPRDiffusion approach, which underscores the benefits of stochasticity. Interestingly, we found that this particular design preferred augmentation ratios ($\gamma_1, \gamma_2$) to be nearly equal (see Table \ref{table:hp_graphcl}), indicating that in `g-g' mode, the shared encoders prefer to contrast similar sized graphs.

In the next setting, we use a modified BGRL design which corresponds to shared encoders, `g-l' contrastive mode and BL objective. Unlike previously mentioned designs, BGRL employs an online and target encoder with weight-sharing based on a moving average technique. The original BGRL design \citep{thakoor2021bootstrapped} uses an `l-l' contrastive mode without negative samples and focuses on node classification settings. In our work, we leverage the modularity of the PyGCL library and incorporate the `g-l' contrastive mode for graph classification. With this design, we report some of the highest observed performances on the PROTEINS dataset in unsupervised settings, with $rLap$ achieving the highest classification accuracy of $\approx 84.34 \%$. Additionally, it achieves the best results on MUTAG, IMDB-MULTI and NCI1 but continues to suffer on IMDB-BINARY. However, in our ablation studies (see Appendix \ref{app:rlap_ablation}), we observed that by changing the node elimination scheme from being a `random' selection to a minimum `degree' selection, this $rLap$ variant achieves $\approx 79.5\%$ accuracy on IMDB-BINARY and surpasses other techniques.

\section{Future Research}

\textbf{$\bullet$ Towards distributed augmentation:} Current approaches to augment large-scale graphs tend to rely on  simpler techniques such as edge dropping \cite{thakoor2021bootstrapped}. Thus, by extending $rLap$ augmentation to a distributed setting and leveraging its connection with graph diffusion, future efforts can explore and unlock the potential benefits of a richer class of augmentations for large-scale GCL. However, as a current challenge to achieving this goal, we observed that the resulting views of $rLap+\text{PPRDiffusion}+sampling$ can be denser than the $\text{PPRDiffusion} + sampling$ approach. Thus, $rLap$ can address the bottlenecks of PPRDiffusion by saving memory during augmentation, but the GNN encoders would eventually consume additional memory due to the message-passing operations on relatively more edges. Although this overhead might not be significant for small-medium scale graphs, we believe that the reader should be aware of these practical trade-offs when dealing with large-scale graphs.

\textbf{$\bullet$ Towards randomized second-order optimizers:} Computing the inverse of the Hessian for large neural networks poses significant computational overheads to optimizers that aim to leverage the curvature of the loss landscape. A popular technique to address this issue is to compute layer-wise block diagonal approximations of the hessian \citep{martens2015optimizing, osawa2019large, hoefler2021sparsity}, which is relatively easy to invert. Alternatively, to capture the second-order information for a subset of parameters (for example, pertaining to a single layer), one can leverage the matrix inversion lemma and compute randomized Schur complements with respect to these parameters to negate the need to invert large Hessian matrices. Especially, by extending $rLap$, this approach can be computationally efficient while also incorporating unbiased curvature information into the parameter updates. This strategy can be of significant interest when only a subset of neural network layers need to be fine-tuned for downstream tasks.

\textbf{$\bullet$ Towards fine-grained augmentor benchmarks:} Our paper presents an extensive augmentor benchmark with respect to various GCL design choices. Thus, future efforts can leverage and extend these fine-grained comparisons for reproducible and fair comparison of augmentors.

\section{Conclusion}

In this work, we designed a fast, stochastic augmentor based on randomized Schur complements and demonstrated its flexibility and effectiveness on a variety of GCL designs and benchmark datasets. Our theoretical analysis and extensive experiments have proved that $rLap$ achieves a perfect balance with respect to performance and computational overheads while achieving state-of-the-art accuracies on unsupervised node and graph classification tasks. Especially, we achieved an unsupervised graph classification accuracy of $\approx 84.34 \%$ on the PROTEINS dataset under linear evaluation and exceeded current state-of-the-art results. To conclude, we emphasize the potential of this new class of augmentation techniques that enable GCL frameworks to leverage the stochasticity and combinatorial properties of randomized Schur complements.

\section*{Acknowledgements}

VK would like to thank Prof. Jonathan Weare, Zhengdao Chen,  and the anonymous reviewers for their constructive feedback and prompt discussions during the preparation of this manuscript.




\bibliography{main}
\bibliographystyle{icml2023}

\newpage
\appendix
\onecolumn

\section{Related Work}
\label{app:related_work}

Earlier works on representation learning techniques on graphs with limited labels leveraged exploration-based approaches such as DeepWalk \citep{perozzi2014deepwalk} and node2vec \citep{grover2016node2vec}. Inspired by the skip-gram model of word2vec \citep{mikolov2013efficient}, such techniques generate similar embeddings for nodes that co-occur in a random walk. On the other hand, techniques such as Large-scale Information Network Embedding (LINE) \citep{tang2015line} and Predictive Text Embedding (PTE) \citep{tang2015pte} learn representations by contrasting nodes with negative samples drawn from a noisy distribution. Eventually, in the work of \citet{qiu2018network}, DeepWalk, node2vec, LINE and PTE were unified under a common matrix factorization scheme for computing the network/graph embeddings. However, their limitations on scalability and incorporating global information in the learning process persisted.

Recent efforts in GCL leverage GNNs as scalable feature encoders and are inspired by techniques/frameworks in vision \citep{gidaris2018unsupervised, hjelm2018learning, chen2020simple, jing2020self} and language domains \citep{mikolov2013efficient, devlin2018bert, radford2018improving, lan2019albert}. Some of the notable efforts by \citet{velickovic2019deep, sun2019infograph, zhu2020deep, you2020graph, hassani2020contrastive} employ these GCL frameworks and aim to minimize objectives based on mutual information. With this growing line of work, a critical aspect of most GCL frameworks that is devised based on trial and error is the augmentation phase. The node/edge perturbation techniques introduced in \citet{you2020graph} have been widely employed due to their simplicity and are empirically effective on unsupervised, self-supervised node and graph classification tasks \citep{zhu2021empirical, thakoor2021bootstrapped}. To the contrary, \citet{hassani2020contrastive} showed that augmentations such as graph diffusions can outperform the stochastic techniques with dedicated encoders. Since then, efforts have been made to devise augmentations that can leverage structural information during the training process either in a pre-defined \citep{zhu2021graph} or learnable fashion \citep{yin2022autogcl}. The adaptive augmentation technique by \citet{zhu2021graph} takes a step in this direction and focuses on edge-dropping mechanisms based on centrality measures. We observed in our experiments that, it is difficult to choose a particular centrality measure that can work for both node and graph-based tasks. Furthermore, measures such as eigenvector centrality are computationally expensive, which makes such adaptive techniques orders of magnitude slower than simple edge dropping. 

To address these issues, we designed the $rLap$ algorithm, which is inspired by the approximate gaussian elimination (AGE) technique introduced by \citet{kyng2016approximate}. The AGE technique leverages the idea of effective resistances \citep{spielman2008graph} and falls in the line of work on fast approximate linear system solvers for SDDM matrices \citep{spielman2003solving, spielman2004nearly, koutis2011nearly, cohen2014solving, kyng2016sparsified}. It has been influential in efficiently solving linear systems of equations \citep{cohen2018solving, peng2021solving, cohen2021solving, chen2021rchol}, sampling random spanning trees \citep{durfee2017sampling} and matrix scaling \citep{cohen2017matrix}. The technique of \citet{kyng2016approximate} was later adopted by \citet{fahrbach2020faster} for generating node embeddings using sparse matrix factorization techniques. However, to the best of our knowledge, techniques along this line of work have not been explored for GCL.

\section{Proof for Theorem \ref{thm:rlap}}
\label{app:poc_rlap}

Recall from the preliminaries that: 
\begin{equation*}
\textit{CLIQUE}(\mL, v_i) = \frac{1}{2w(v_i)}\sum_{e_{ij}\in \gE}\sum_{e_{ik}\in \gE}w(e_{ij})w(e_{ik})\mathbf{\Delta}_{jk} 
\end{equation*}

The factor $2$ in the denominator addresses duplicate laplacians in the quadratic sum. We assume that $\gG$ doesn't have self-loops and eliminate these redundant summations to rewrite $\textit{CLIQUE}(\mL, v_i)$ as:
\begin{equation}
\label{eq:new_clique}
\textit{CLIQUE}(\mL, v_i)  = \sum_{x_l \in \mathcal{N}(v_i)} \sum_{x_q \in \mathcal{N}(v_i) \backslash \{x_1, \cdots, x_l\}} \frac{w(v_i, x_q) \cdot w(v_i, x_l)}{w(v_i)} \cdot \mathbf{\Delta}_{x_lx_q}
\end{equation}

To employ this reformulation in Algorithm \ref{alg:rlap}, we assume $\mR_{i-1}$ as the Schur complement at the beginning of $i^{th}$ iteration of the outer loop and $v_i$ as the selected node for elimination. Then:

\begin{equation}
\label{eq:new_clique_sc}
\textit{CLIQUE}(\mR_{i-1}, v_i)  = \sum_{x_l \in \mathcal{N}_{\mR_{i-1}}(v_i)} \sum_{x_q \in \mathcal{N}_{\mR_{i-1}}(v_i) \backslash \{x_1, \cdots, x_l\}} \frac{w_{\mR_{i-1}}(v_i, x_q) \cdot w_{\mR_{i-1}}(v_i, x_l)}{w_{\mR_{i-1}}(v_i)} \cdot \mathbf{\Delta}_{x_lx_q}
\end{equation}


In Eq. \ref{eq:new_clique_sc}, we eliminated half the effort needed to compute $\textit{CLIQUE}(\mR_{i-1}, v_i)$, but introduced an ordering of the neighbors $\{x_1, \cdots, x_l\} \in \mathcal{N}_{\mR_{i-1}}(v_i)$. This ordering is determined by $o_n$ in Algorithm \ref{alg:rlap} and leads to different conditional probabilities $P(x_q|x_l)$ while computing the approximate clique $\mC_i = C(\mR_{i-1}, v_i)$. The ordered nodes are given by $\mathcal{X}(v_i) = o_n(\mathcal{N}_{\mR_{i-1}}(v_i))$ and $w_{\mR_{i-1}} = \hat{w}$ for notational convenience. Then, $\mathbb{E}C(\mR_{i-1}, v_i)$ is computed as follows:

\begin{multline*}
    \mathbb{E}C(\mR_{i-1}, v_i) = \sum_{x_q \in \mathcal{X}(v_i) \backslash \{x_1\}} P(x_q|x_1)\cdot \hat{w}(x_1, x_q) \cdot \mathbf{\Delta}_{x_1x_q} + \sum_{x_q \in \mathcal{X}(v_i) \backslash \{x_1, x_2\}} P(x_q|x_2)\cdot \hat{w}(x_2, x_q) \cdot \mathbf{\Delta}_{x_2x_q} + \cdots \\
    + \sum_{x_q \in \mathcal{X}(v_i) \backslash \{x_1, \dots, x_{|\mathcal{X}(v_i)|-1}\}}  P(x_q|x_{|\mathcal{X}(v_i)|-1})\cdot \hat{w}(x_{|\mathcal{X}(v_i)|-1}, x_q) \cdot \mathbf{\Delta}_{x_{|\mathcal{X}(v_i)|-1}x_q}
\end{multline*}

By substituting the conditional probabilities and edge weights as per Algorithm \ref{alg:rlap}, we get:

\begin{multline*}
   \mathbb{E}C(\mR_{i-1}, v_i) = 
     \sum_{x_q \in \mathcal{X}(v_i) \backslash \{x_1\}} \frac{\hat{w}(v_i, x_q)}{\hat{w}(v_i) - \hat{w}(v_i, x_1)} \cdot \frac{\hat{w}(v_i, x_1)\cdot (\hat{w}(v_i) - \hat{w}(v_i, x_1))}{\hat{w}(v_i)} \cdot \mathbf{\Delta}_{x_1x_q} \\
  +  \sum_{x_q \in \mathcal{X}(v_i) \backslash \{x_1, x_2\}} \frac{\hat{w}(v_i, x_q)}{\hat{w}(v_i) - \hat{w}(v_i, x_1) - \hat{w}(v_i, x_2)} \cdot \frac{\hat{w}(v_i, x_2)\cdot (\hat{w}(v_i) - \hat{w}(v_i, x_1) - \hat{w}(v_i, x_2))}{\hat{w}(v_i)} \cdot \mathbf{\Delta}_{x_2x_q} \\
  + \dots \\
  + \sum_{x_q \in \mathcal{X}(v_i) \backslash \{x_1, \dots, x_{|\mathcal{X}(v_i)-1|}\}} \frac{\hat{w}(v_i, x_q)}{\hat{w}(v_i) - \sum\limits_{k=1}^{|\mathcal{X}(v_i)|-1} \hat{w}(v_i, x_k)} \cdot \frac{\hat{w}(v_i, x_{|\mathcal{X}(v_i)|-1})\cdot (\hat{w}(v_i) - \sum\limits_{k=1}^{|\mathcal{X}(v_i)|-1} \hat{w}(v_i, x_k))}{\hat{w}(v_i)} \cdot \mathbf{\Delta}_{x_{|\mathcal{X}(v_i)|-1} x_q}
\end{multline*}

The factors cancel out and simplify the expectation to:

\begin{multline*}
   \mathbb{E}C(\mR_{i-1}, v_i) = 
     \sum_{x_q \in \mathcal{X}(v_i) \backslash \{x_1\}} \frac{\hat{w}(v_i, x_q) \cdot \hat{w}(v_i, x_1)}{w(v_i)} \cdot \mathbf{\Delta}_{x_1x_q}
  +  \sum_{x_q \in \mathcal{X}(v_i) \backslash \{x_1, x_2\}} \frac{\hat{w}(v_i, x_q) \cdot \hat{w}(v_i, x_2)}{\hat{w}(v_i)} \cdot \mathbf{\Delta}_{x_2x_q} + \dots \\
  +  \sum_{x_q \in \mathcal{X}(v_i) \backslash \{x_1, \dots, x_{|\mathcal{X}(v_i)-1|}\}} \frac{\hat{w}(v_i, x_q) \cdot \hat{w}(v_i, x_{|\mathcal{X}(v_i)|-1})}{\hat{w}(v_i)} \cdot \mathbf{\Delta}_{x_{|\mathcal{X}(v_i)|-1}x_q}
\end{multline*}
\begin{equation}
    \implies \mathbb{E}\mC_i = \mathbb{E}C(\mR_{i-1}, v_i) =  \sum_{x_l \in \mathcal{X}(v_i)} \sum_{x_q \in \mathcal{X}(u_i) \backslash \{x_1, \cdots, x_l\}} \frac{\hat{w}(v_i, x_q) \cdot \hat{w}(v_i, x_l)}{\hat{w}(v_i)} \cdot \mathbf{\Delta}_{x_lx_q}
\end{equation}

Thus, after node $v_i$ has been eliminated, $\mC_i = C(\mR_{i-1}, v_i)$ is an unbiased estimator of the actual clique. Now, by removing the star and adding $\mC_i$ to $\mR_{i-1}$, we get $\mR_{i}$, which is the unbiased schur complement approximation after $i$ iterations:
\begin{equation}
\label{eq:R_star_C_relation}
    \mR_{i} = \mR_{i-1} - \textit{STAR}(\mR_{i-1}, v_i) + \mC_i
\end{equation}
By repeating this process for $\gamma|\gV|$ iterations, we get the desired randomized schur complement $\mR_{\gamma|\gV|}$.

\subsection{Deviation Analysis}

Considering $\mL_0 = \mR_0 = \mL$, one can observe that $\mR_i$ is a laplacian matrix, satisfying the following laplacian approximations:

\begin{equation}
\label{eq:L_R_relation}
\mL_i = \mR_i + \sum_{k=1}^i \vs_k \vs_k^*, \forall i = {1, 2, \dots, N}
\end{equation}

This implies that at every step we remove a rank 1 matrix corresponding to $\vs_k = \frac{1}{\sqrt{(\mR_{k-1})_{v_kv_k}}}\mR_{k-1}\delta_{v_k}$. Thus, the sequence $\{\mL_i\}$ forms a random process such that:
\begin{equation}
\label{eq:lap_clique_diff_seq}
\mathbb{E}_{i-1}[\mL_i - \mL_{i-1}] = \mathbb{E}_{i-1}\mathbb{E}_{i-1}[(\mR_i - \mR_{i-1} + \vs_i \vs_i^*) |v_i] = \mathbb{E}_{i-1}\mathbb{E}_{i-1}[(\mC_i - \textit{CLIQUE}(\mR_{i-1},v_i)) |v_i] =  0 
\end{equation}

The last equality is based on the result that $\mathbb{E}_{i-1}[\mC_i | v_i] = \textit{CLIQUE}(\mR_{i-1}, v_i)$ as shown above. This implies, $\mathbb{E}\mL_i = \mL$ for $i = 1,2,3, \dots, \gamma|\gV|$ and the `deviation' sequence $\{\mL_i - \mL_0\}$ is a \textit{\textbf{matrix martingale}} with $\textbf{0}$ initial value. The analysis for the `deviation' martingale $\{\mL_i - \mL_0\}$ aids in understanding the randomness that is induced due to a series of Schur complement approximations.

Our line of analysis is inspired by the approach of \citet{kyng2016approximate} which was later presented in \citet{tropp2019matrix}. We relax the assumptions and transformations for spectral approximations and bounding the effective resistances. These assumptions were critical in the analysis of \citet{kyng2016approximate, tropp2019matrix} as the motivation was to solve the laplacian system of linear equations. However, since we are primarily concerned with augmentations of graphs, we avoid such constraints.

\textbf{Analysis Sketch:} We primarily leverage the corrector process approach introduced in \citet{tropp2019matrix}. By building the corrector process for the martingale $\{\mL_i - \mL_0\}$, we provide a tail bound for the singular values of these deviations, which is a direct application of Theorem 7.4 in \citet{tropp2019matrix}. We start by building the individual corrector matrices and then extend them to build the corrector process.

\subsubsection{Correctors}

\begin{definition}
\textbf{Corrector process} \citep{tropp2019matrix}(Sec 7.2): Consider a function $g: [0, \infty] \rightarrow [0, \infty]$, a martingale $\{ \mY_k \in \sR^{N \times N} : k = 0, 1, \cdots\}$ and a predictable random process $\{ \mW_k \in \sR^{N \times N} : k = 0, 1, \cdots\}$ of self-adjoint matrices. We define $\{g \mW_k\}$ as a corrector process of martingale $\{\mY_k\}$ if the real-valued random process $\{ \text{tr}(\exp(\theta \mY_k - g(\theta)\mW_k)) \}, \forall \theta \ge 0$ is a positive supermartingale \citep{williams1991probability, chow2003probability}. 
\end{definition}

\begin{definition}
\textbf{Corrector} \citep{tropp2019matrix}(Sec 7.3): Consider a function $g: [0, \infty] \rightarrow [0, \infty]$, a random self-adjoint matrix $\mU \in \sR^{N \times N}$, a fixed matrix $\mV \in \sR^{N \times N}$. We  define the corrector $g\mV$ as the matrix satisfying:

\begin{equation}
\mathbb{E} \big[ \text{ tr}\big( \exp(\mM + \theta \mU - g(\theta)\mV) \big) \big] \le \text{ tr}\big(\exp(\mM)\big), \theta \ge 0, \forall \mM \in \mathbb{H}_N
\end{equation}

This bound must hold true for all fixed matrices $\mM \in \mathbb{H}_N$, where $\mathbb{H}_N$ denotes a space of real $N \times N$ self-adjoint matrices.
\end{definition}

\begin{corollary}
\label{cor:tropp_lieb}
\citep{tropp2019matrix}(Corollary 7.7): Let $\mM \in \sR^{N \times N}$ be a self-adjoint matrix and $\mU \in \sR^{N \times N}$ be a random self-adjoint matrix. Then for $\theta \in \sR$, 
\begin{equation}
    \mathbb{E} [ \text{ tr}( \exp(\mM + \theta\mU - \log \mathbb{E}e^{\theta\mU}) )] \le \mathbb{E}[ \text{ tr}( \exp(\mM)) ]
\end{equation}

\end{corollary}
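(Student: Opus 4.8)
\textbf{Proof plan for Corollary \ref{cor:tropp_lieb}.}

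The plan is to recognize this as the classical matrix concentration ingredient that converts Lieb's concavity theorem into a statement about the trace-exponential. First I would recall Lieb's theorem: for a fixed self-adjoint matrix $\mH \in \mathbb{H}_N$, the map $\mA \mapsto \text{tr}\exp(\mH + \log \mA)$ is concave on the positive-definite cone. I would apply this with $\mH = \mM$ (the fixed self-adjoint matrix from the statement) and with the random positive-definite matrix $\mA = e^{\theta \mU}$, which is positive definite for every realization of $\mU$ since $\theta \mU$ is self-adjoint.

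The key step is then a single application of Jensen's inequality to the concave function. Since $\mA \mapsto \text{tr}\exp(\mM + \log \mA)$ is concave, we have
\begin{equation*}
\mathbb{E}\big[\text{tr}\exp(\mM + \log e^{\theta\mU})\big] = \mathbb{E}\big[\text{tr}\exp(\mM + \theta\mU)\big] \le \text{tr}\exp\big(\mM + \log \mathbb{E}[e^{\theta\mU}]\big).
\end{equation*}
Rearranging, and using that $\exp(\mM + \theta\mU - \log\mathbb{E}e^{\theta\mU})$ need not equal $\exp(\mM + \theta\mU)\exp(-\log\mathbb{E}e^{\theta\mU})$ in general because the matrices do not commute — this is precisely why the inequality must be phrased through Lieb's theorem rather than a naive Golden–Thompson manipulation — I would instead observe that the right-hand side above is exactly $\text{tr}\exp(\mM')$ for the shifted matrix $\mM' = \mM + \log\mathbb{E}[e^{\theta\mU}]$, but the cleanest route to match the stated form is to substitute $\mM \leftarrow \mM - \log\mathbb{E}e^{\theta\mU}$ at the outset, or equivalently note that the statement as written follows by replacing $\mM$ with $\mM$ and reading off the definition of the corrector with $\mV = \mI$ and $g(\theta) = \log\lambda_{\max}(\mathbb{E}e^{\theta\mU})$; here, since the claimed bound uses $\log\mathbb{E}e^{\theta\mU}$ as a matrix subtracted inside the exponential, I would keep $\mM$ as the Lieb-theorem base point and take $\log\mathbb{E}e^{\theta\mU}$ as the $\log\mA$ argument directly, so that Jensen gives $\mathbb{E}[\text{tr}\exp(\mM + \theta\mU - \log\mathbb{E}e^{\theta\mU})] \le \text{tr}\exp(\mM + \log\mathbb{E}e^{\theta\mU} - \log\mathbb{E}e^{\theta\mU}) = \text{tr}\exp(\mM)$ after recognizing the inner centered quantity; a careful bookkeeping of which matrix plays the role of $\log \mA$ resolves this.

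The main obstacle, and the only subtle point, is the non-commutativity: one cannot simply write $\exp(\mM + \theta\mU - \log\mathbb{E}e^{\theta\mU})$ as a product and pull the expectation through, so the entire weight of the argument rests on Lieb's concavity theorem together with the linearity of expectation and Jensen. Once Lieb's theorem is invoked in the correct form — with $\mM$ fixed and the expectation taken over the random matrix entering through a logarithm — the result is immediate. I would therefore structure the proof as: (1) state Lieb's theorem; (2) verify the positive-definiteness hypothesis for $e^{\theta\mU}$; (3) apply Jensen to the resulting concave functional; (4) identify the right-hand side with $\text{tr}\exp(\mM)$ after the appropriate shift. No nontrivial computation is required beyond these four lines.
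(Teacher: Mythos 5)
Your Lieb-plus-Jensen argument is exactly the standard proof of this result; the paper itself does not prove the corollary but imports it from \citet{tropp2019matrix}, where it is established precisely this way, so your route coincides with the cited source. The only point to tighten is the bookkeeping you already flag: the random matrix $e^{\theta\mU}$ must play the role of the Lieb argument $\mA$ while the fixed base point is $\mM - \log\mathbb{E}e^{\theta\mU}$ (taking $\log\mathbb{E}e^{\theta\mU}$ itself as the $\log\mA$ slot would make the Jensen step vacuous), after which Jensen yields the stated bound verbatim.
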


This corollary is specific to corrector matrices of the form $g\mV = \log \mathbb{E}e^{\theta\mU}$ and will be used to build the Bernstein and Chernoff correctors for random matrices with arbitrary spectral norm.

\subsubsection{The Bernstein Corrector}

From these definitions, let $\mU_i = \mC_i - \mathbb{E}\mC_i$ be the random self-adjoint matrix. Let $\norm{\mU_i}$ indicate the spectral norm of $\mU_i$ ($l_2$ operator norm), which indicates its largest singular value. Since $\mathbb{E}[\mU_i] = \vzero$, we can directly apply a generalized version of the Bernstein corrector for $\mU_i$. The result in Proposition 7.8 of \citet{tropp2019matrix} assumes $\norm{\mU_i} \le 1$ and requires some form of normalization in the analysis. We avoid such assumptions and extend the result to any arbitrary bound $\norm{\mU_i} \le \mathcal{B}_{\mU_i}$. 

We begin with a scalar case and extend it to the matrix case. If $|x| \le b$, then the exponential $e^{\theta x}$ can be bounded by:
\begin{align*}
\begin{split}
    e^{\theta x} = 1 + \theta x + \sum_{j=2}^\infty \frac{\theta^j}{j!}x^j \le 1 + \theta x + \bigg( \sum_{j=2}^\infty \frac{b^{j-2}|\theta|^j}{2\cdot 3^{j-2}} \bigg) x^2 = 1 + \theta x + \frac{\theta^2}{2 (1 - b|\theta|/3)}x^2
\end{split}
\end{align*}
Similarly, if $\norm{\mU_i} \le \mathcal{B}_{\mU_i}$ then, we can extend the above bound to matrix form as follows:
\begin{align*}
\begin{split}
    e^{\theta \mU_i} \preccurlyeq  \mI + \theta \mU_i + \frac{\theta^2}{2 (1 - \mathcal{B}_{\mU_i}|\theta|/3)}\mU_i^2 
    \implies \log \mathbb{E}e^{\theta \mU_i}  \preccurlyeq \frac{\theta^2}{2 (1 - \mathcal{B}_{\mU_i}|\theta|/3)}\mathbb{E}\mU_i^2
\end{split}
\end{align*}

Where $\preccurlyeq$ indicates inequality in the semi-definite order. For example: if $\mJ \preccurlyeq \mQ$, then $\vx^\top \mJ\vx \le \vx^\top \mQ \vx, \forall \vx $. Also, the second inequality is based on the operator monotone property of logarithms (see Chapter 8 in \citet{tropp2015introduction}). From Corollary \ref{cor:tropp_lieb}, observe that by replacing $\log \mathbb{E}e^{\theta \mU_i}$ with $\frac{\theta^2}{2 (1 - \mathcal{B}_{\mU_i}|\theta|/3)}\mathbb{E}\mU_i^2$ the inequality is still preserved. Thus, we can use $\frac{\theta^2}{2 (1 - \mathcal{B}_{\mU_i}|\theta|/3)}\mathbb{E}\mU_i^2$ as our corrector matrix for $\mU_i$. The variance term $\mathbb{E}\mU_i^2$ can be given by:

\begin{equation}
\label{eq:clique_var}
\mathbb{E}\mU_i^2 = \mathbb{E}[\mC_i - \mathbb{E}[\mC_i]]^2  = \mathbb{E}[\mC_i^2] - (\mathbb{E}[\mC_i])^2 \preccurlyeq \mathbb{E}[\mC_i^2] \preccurlyeq \mathbb{E}[\norm{\mC_i}\cdot \mC_i]
\end{equation}

The second equality is due to the zero mean of $\mC_i - \mathbb{E}[\mC_i]$. The last semi-definite order inequality is straightforward since $\norm{\mC_i}$ indicates the largest singular value of $\mC_i$, which bounds $\vx^\top \mC_i\mC_i \vx \le \norm{\mC_i} \vx^\top \mC_i \vx, \forall \vx$. Additionally, if we assume the spectral norm of the approximated cliques to be bounded $\norm{\mC_i} \le \mathcal{B}_{\mC_i}$, we get $\mathbb{E}\mU^2 \preccurlyeq \mathbb{E}[\norm{\mC_i}\cdot \mC_i]  \preccurlyeq  \mathcal{B}_{\mC_i} \cdot \textit{CLIQUE}(\mR_{i-1}, v_i)$. Putting all these results together, we get the Bernstein corrector as follows:
\begin{equation}
\label{eq:bernstein_corrector}
    \bigg ( \frac{\theta^2}{2 (1 - \mathcal{B}_{\mU_i}|\theta|/3)}\mathcal{B}_{\mC_i}  \bigg ) \textit{CLIQUE}(\mR_{i-1}, v_i)
\end{equation}

The Bernstein corrector handles the randomness associated with the clique approximation after a node $v_i$ is selected (i.e, due to the ordering scheme $o_n$ and its implications on the edge formation probabilities). Now, to handle the randomness associated with this node selection operation, we build the Chernoff corrector.
\subsubsection{The Chernoff Corrector}

Based on the formulations of $\textit{CLIQUE}(\mR_{i-1}, v_i), \textit{STAR}(\mR_{i-1}, v_i)$, we obtain a relation between them as follows:
\begin{equation}
    \textit{CLIQUE}(\mR_{i-1}, v_i) = \textit{STAR}(\mR_{i-1}, v_i) - \vs_i\vs_i^*
\end{equation}

Where $\vs_i = \frac{1}{\sqrt{(\mR_{i-1})_{v_iv_i}}}\mR_{i-1}\delta_{v_i}$ as introduced before. Since $\vs_i\vs_i^*$ is positive semi-definite and $\textit{STAR}(\mR_{i-1}, v_i)$ is formed by a subset of edges of $\mR_{i-1}$, we obtain the following semi-definite order inequalities:
\begin{equation}
\label{eq:clique_star_sc_ineq}
    \vzero \preccurlyeq \textit{CLIQUE}(\mR_{i-1}, v_i) \preccurlyeq \textit{STAR}(\mR_{i-1}, v_i) \preccurlyeq \mR_{i-1}
\end{equation}

This result can now be used to bound $\mathbb{E}_{v_i}[\textit{CLIQUE}(\mR_{i-1}, v_i)]$ over the possible choices of $v_i \in \gV_{\mR_{i-1}}$. Here $\gV_{\mR_{i-1}}$ represents the set of nodes from which $v_i$ is chosen uniformly at random for elimination. This setting corresponds to the case of the `random' node elimination scheme $o_v$.
\begin{equation}
\mathbb{E}_{v_i}[\textit{CLIQUE}(\mR_{i-1}, v_i)] \preccurlyeq \mathbb{E}_{v_i}[\textit{STAR}(\mR_{i-1}, v_i)] = \frac{1}{|\gV_{\mR_{i-1}}|} \sum_{v_i \in \gV_{\mR_{i-1}}} \sum_{e \in \textit{STAR}(\mR_{i-1}, v_i)}w_{\mR_{i-1}}(e)\mathbf{\Delta}_e
\end{equation}

Where $\mathbf{\Delta}_e$ is the elementary laplacian formed by the nodes at either end of edge $e$. The last term in the above equation indicates that, for every node $v_i \in \gV_{\mR_{i-1}}$, we add the weighted laplacians for all edges in $\textit{STAR}(\mR_{i-1}, v_i)$. Thus, we are iterating over all the edges in $\gE_{\mR_{i-1}}$ and adding the weighted elementary laplacians twice. Formally:
\begin{equation}
\label{eq:exp_clique_bound}
\mathbb{E}_{v_i}[\textit{CLIQUE}(\mR_{i-1}, v_i)] \preccurlyeq \frac{2}{|\gV_{\mR_{i-1}}|} \sum_{e \in \gE_{\mR_{i-1}}} w_{\mR_{i-1}}(e)\mathbf{\Delta}_e = \frac{2}{|\gV_{\mR_{i-1}}|} \mR_{i-1}
\end{equation}

Now, we prove an extension of the Chernoff bound lemma to random matrices as follows:

\begin{lemma}
\label{lemma:extended_chernoff}
Let $\mN$ be a random self-adjoint matrix satisfying: $0 \preccurlyeq \mN \preccurlyeq r\mI$, Then $\log \mathbb{E}e^{\theta \mN} \preccurlyeq \frac{(e^{r\cdot \theta} - 1)}{r}(\mathbb{E}\mN)$, $\forall \theta \in \mathbb{R}$.
\end{lemma}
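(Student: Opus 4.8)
The plan is to reduce the matrix statement to two elementary scalar facts and then lift them to the semidefinite order via the transfer rule and operator monotonicity of the logarithm, exactly the toolkit already invoked for the Bernstein corrector. First I would record the scalar chord bound: for every $\theta \in \sR$ the map $x \mapsto e^{\theta x}$ is convex (its second derivative is $\theta^2 e^{\theta x} \ge 0$), so on the interval $[0,r]$ it lies below the secant line through $(0,1)$ and $(r, e^{r\theta})$, i.e. $e^{\theta x} \le 1 + \frac{e^{r\theta}-1}{r}\,x$ for all $x \in [0,r]$. Since $0 \preccurlyeq \mN \preccurlyeq r\mI$ forces the spectrum of $\mN$ into $[0,r]$, the transfer rule for functions of a self-adjoint matrix upgrades this to $e^{\theta \mN} \preccurlyeq \mI + \frac{e^{r\theta}-1}{r}\,\mN$ (the right side being affine in $\mN$).

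Next I would take expectations. The semidefinite order is preserved under expectation — test against an arbitrary $\vx$ and use linearity and monotonicity of the scalar expectation — so $\mathbb{E}\,e^{\theta \mN} \preccurlyeq \mI + \frac{e^{r\theta}-1}{r}\,\mathbb{E}\mN$. I would then observe that $0 \preccurlyeq \mathbb{E}\mN \preccurlyeq r\mI$, so the right-hand side has smallest eigenvalue at least $\min\{1,\,e^{r\theta}\} > 0$ and is therefore positive definite, while $\mathbb{E}\,e^{\theta\mN}$ is positive definite because $e^{\theta\mN} \succ 0$ pointwise. Applying operator monotonicity of $\log$ on positive definite matrices (Chapter 8 of \citet{tropp2015introduction}) gives $\log \mathbb{E}\,e^{\theta\mN} \preccurlyeq \log\!\big(\mI + \tfrac{e^{r\theta}-1}{r}\mathbb{E}\mN\big)$.

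To finish I would use the scalar inequality $\log(1+t) \le t$, valid for $t > -1$. The eigenvalues of $\frac{e^{r\theta}-1}{r}\mathbb{E}\mN$ lie between $\min\{0,\,e^{r\theta}-1\}$ and $\max\{0,\,e^{r\theta}-1\}$, and in particular are all strictly greater than $-1$, so the transfer rule yields $\log\!\big(\mI + \tfrac{e^{r\theta}-1}{r}\mathbb{E}\mN\big) \preccurlyeq \tfrac{e^{r\theta}-1}{r}\mathbb{E}\mN$. Chaining the two displays gives $\log \mathbb{E}\,e^{\theta\mN} \preccurlyeq \tfrac{e^{r\theta}-1}{r}\mathbb{E}\mN$, which is the claim; the degenerate case $r = 0$ (where $\mN = \vzero$ and the coefficient is read as its limit $\theta$) is trivial.

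The step I would be most careful about — the only genuine subtlety — is the sign of the coefficient $\tfrac{e^{r\theta}-1}{r}$ when $\theta < 0$, where it is negative. There one must check that the matrices handed to $\log$ and to the $\log(1+t)\le t$ bound still have spectra inside $(0,\infty)$ and $(-1,\infty)$ respectively; this is precisely what the hypothesis $\mN \preccurlyeq r\mI$ (and hence $\mathbb{E}\mN \preccurlyeq r\mI$) secures, since the worst eigenvalue of $\tfrac{e^{r\theta}-1}{r}\mathbb{E}\mN$ is then $e^{r\theta}-1 \in (-1,0)$. Everything else is a routine application of convexity, the transfer rule, and operator monotonicity.
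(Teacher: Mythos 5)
Your argument is correct and follows essentially the same route as the paper's proof sketch: the secant (chord) bound from convexity of $x \mapsto e^{\theta x}$ on $[0,r]$ lifted to matrices by the transfer rule, then expectation, operator monotonicity of $\log$, and the scalar bound $\log(1+x) \le x$. The additional care you take with the sign of $\frac{e^{r\theta}-1}{r}$ for $\theta < 0$ and the degenerate case $r=0$ simply fills in details the paper leaves implicit.
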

\begin{proof}
The classical version on this statement uses $r = 1$ and has been proved in \citet{tropp2012user} and Lemma 1.12 in \citet{tropp2019matrix}. The proof sketch for the custom case is straightforward and can be derived from the observation that the function $\mN \rightarrow e^{\theta \mN}$ is convex, singular values of $\mN$ lie in $[0, r]$ and $\log(1+x) \le x$ for valid $x$. This gives us:
\begin{equation*}
    e^{\theta \mN} \preccurlyeq \mI + \frac{(e^{r \cdot \theta} - 1)}{r}\mN 
    \implies \log \mathbb{E} e^{\theta \mN} \preccurlyeq \frac{(e^{r \cdot \theta} - 1)}{r}(\mathbb{E}\mN)
\end{equation*}
\end{proof}
By leveraging Lemma \ref{lemma:extended_chernoff} and Eq. \ref{eq:clique_star_sc_ineq} , we can bound $\textit{CLIQUE}(\mR_{i-1}, v_i)$ as $0 \preccurlyeq \textit{CLIQUE}(\mR_{i-1}, v_i) \preccurlyeq \norm{\mR_{i-1}}\cdot \mI$. Now, if we assume a bound on the largest singular values of $\mR_{i-1}$ as $\norm{\mR_{i-1}} \le \mathcal{B}_{\mR_{i-1}}$, the Chernoff corrector can be given using Eq. \ref{eq:exp_clique_bound} as follows:

\begin{equation}
\label{eq:chernoff_corrector}
\frac{(e^{\mathcal{B}_{\mR_{i-1}}\cdot \theta} - 1)}{\mathcal{B}_{\mR_{i-1}}} \cdot \frac{2}{|\gV_{\mR_{i-1}}|} \mR_{i-1}
\end{equation}

\subsubsection{Composite Corrector}

Putting together the Bernstein corrector in Eq. \ref{eq:bernstein_corrector} and the Chernoff corrector in Eq. \ref{eq:chernoff_corrector}, we can create a composite corrector that accounts for the randomness determined by $o_n$ and $o_v$. The composition rule stated and proved in section 7.3.7 of \citet{tropp2019matrix} is defined here for context.

\begin{proposition}
\label{prop:corrector_composition}
\citep{tropp2019matrix} : The composite rule states that, given the sigma fields $\mathcal{F}_0 \subset \mathcal{F}_1 \subset \mathcal{F}_2$, Let $\mP$ be a random matrix measurable over $\mathcal{F}_2$, $\mV_1, \mM_1$ are measurable over $\mathcal{F}_1$, $\mV_0, \mM_0$ are measurable over $\mathcal{F}_0$. For $\theta > 0$, suppose:
\begin{align*}
\begin{split}
\mathbb{E} \big[ \textnormal{tr} \big(\exp (\mM_1 + \theta \mP - g(\theta)\mV_1)\big) | \mathcal{F}_1 \big] &\le \textnormal{tr}\big(\exp(\mM_1)\big) \\
\mathbb{E} \big[ \textnormal{tr}\big(\exp (\mM_0 + \theta \mV_1 - h(\theta)\mV_0)\big)\big] &\le \textnormal{tr}\big(\exp(\mM_0)\big)
\end{split}
\end{align*}
holds true, then $(h \circ g)\mV_0$ is the corrector for $\mP$.
\end{proposition}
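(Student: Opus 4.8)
The plan is to prove the composition rule by a two-level conditioning argument that \emph{chains} the two hypothesized corrector bounds, using the intermediate term $g(\theta)\mV_1$ as a bridge. Fix an arbitrary deterministic matrix $\mM \in \mathbb{H}_N$ and a parameter $\theta > 0$ with $g(\theta) < \infty$; by the definition of a corrector it suffices to establish
\[
\mathbb{E}\big[\text{tr}\big(\exp\big(\mM + \theta \mP - (h\circ g)(\theta)\,\mV_0\big)\big)\big] \;\le\; \text{tr}\big(\exp(\mM)\big).
\]
The cases $\theta = 0$ and $g(\theta) = \infty$ reduce to trivialities under the usual conventions and can be dispatched separately.

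First I would apply the inner ($\mathcal{F}_1$-level) hypothesis to the auxiliary matrix $\mM_1 := \mM - (h\circ g)(\theta)\,\mV_0 + g(\theta)\,\mV_1$. This is a legitimate test matrix: $\mM$ is deterministic, $\mV_0$ is $\mathcal{F}_0$-measurable and hence $\mathcal{F}_1$-measurable, and $\mV_1$ is $\mathcal{F}_1$-measurable, so $\mM_1 \in \mathbb{H}_N$ is measurable over $\mathcal{F}_1$. The whole point of this choice is the cancellation $\mM_1 + \theta \mP - g(\theta)\mV_1 = \mM + \theta \mP - (h\circ g)(\theta)\mV_0$, so the first hypothesis becomes
\[
\mathbb{E}\big[\text{tr}\big(\exp\big(\mM + \theta\mP - (h\circ g)(\theta)\mV_0\big)\big) \,\big|\, \mathcal{F}_1\big] \;\le\; \text{tr}\big(\exp\big(\mM - (h\circ g)(\theta)\mV_0 + g(\theta)\mV_1\big)\big).
\]
Taking total expectations via the tower property leaves the quantity of interest on the left and reduces the problem to bounding $\mathbb{E}\big[\text{tr}\big(\exp(\mM - (h\circ g)(\theta)\mV_0 + g(\theta)\mV_1)\big)\big]$.

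Next I would invoke the outer ($\mathcal{F}_0$-level) hypothesis, crucially evaluated at the \emph{shifted} argument $g(\theta)$ in place of $\theta$ — this is allowed because a corrector bound holds at every nonnegative parameter and $g$ maps into $[0,\infty]$ — with the test matrix $\mM_0 := \mM$. That hypothesis then reads $\mathbb{E}\big[\text{tr}\big(\exp(\mM + g(\theta)\mV_1 - h(g(\theta))\mV_0)\big)\big] \le \text{tr}(\exp(\mM))$, which is exactly the residual term produced above. Concatenating the two inequalities yields the target bound; since $\mM$ and $\theta$ were arbitrary, $(h\circ g)\mV_0$ is indeed a corrector for $\mP$.

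I expect the substantive content to be entirely bookkeeping rather than hard analysis. The three things to get right are: (i) the measurability check that makes $\mM_1$ an admissible test matrix at level $\mathcal{F}_1$; (ii) the observation that the outer corrector must be applied at the parameter value $g(\theta)$, not $\theta$, so that the composition $h\circ g$ arises naturally; and (iii) the cancellation that rewrites $\mM_1 + \theta\mP - g(\theta)\mV_1$ back as $\mM + \theta\mP - (h\circ g)(\theta)\mV_0$. No matrix-exponential machinery (Lieb concavity, operator monotonicity, Golden--Thompson) is needed at this stage — that was already consumed in building the Bernstein and Chernoff correctors — so the only genuine obstacle is threading the filtration and the argument substitution through correctly.
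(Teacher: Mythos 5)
Your argument is correct and coincides with the proof the paper relies on: the paper does not reprove this proposition but defers to \citet{tropp2019matrix} (Sec.~7.3.7), whose proof is exactly your chaining argument --- apply the $\mathcal{F}_1$-level corrector bound with the test matrix $\mM_1 = \mM - (h\circ g)(\theta)\mV_0 + g(\theta)\mV_1$, take total expectation by the tower property, then apply the $\mathcal{F}_0$-level bound at the shifted parameter $g(\theta)$ with test matrix $\mM$. Your measurability check on $\mM_1$ and the observation that the outer corrector must be invoked at $g(\theta)$ rather than $\theta$ are precisely the points that make the composition go through.
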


Where $(h \circ g)$ represents function composition. To use this proposition, we can compose the Bernstein corrector $g(\theta)V_1 = \big( \frac{\theta^2}{2 (1 - \mathcal{B}_{\mU_i}|\theta|/3)}\mathcal{B}_{\mC_i}  \big ) \textit{CLIQUE}(\mR_{i-1}, v_i)$ from Eq. \ref{eq:bernstein_corrector} and the Chernoff corrector $h(\theta)V_0 = \frac{(e^{\mathcal{B}_{\mR_{i-1}}\theta} - 1)}{\mathcal{B}_{\mR_{i-1}}} \cdot \frac{2}{|\gV_{\mR_{i-1}}|} \mR_{i-1} $ from Eq. \ref{eq:chernoff_corrector} to define the composite corrector as follows:

\begin{equation}
\label{eq:composite_corrector}
\frac{2}{\mathcal{B}_{\mR_{i-1}}|\gV_{\mR_{i-1}}|} \cdot \bigg[ \exp\bigg( \frac{\mathcal{B}_{\mR_{i-1}}\mathcal{B}_{\mC_i} \theta^2}{2 (1 - \mathcal{B}_{\mU_i}|\theta|/3)} \bigg) - 1 \bigg]  \cdot \mR_{i-1}
\end{equation}

\subsubsection{Deviation bound}

\begin{definition}
\label{def:corrector_to_corrector_process}
\textbf{Corrector to Corrector process:} Proposition 7.10 in \citet{tropp2019matrix}: For a function $\widetilde{g} : [0, \infty] \to [0, \infty]$, let $\{\widetilde{\mY}_k\}$ be a matrix martingale of self-adjoint matrices with difference sequence $\{\widetilde{\mU}_k\} = \{\widetilde{\mY}_k - \widetilde{\mY}_{k-1} , k=1,2,\dots\}$. Let $\{\widetilde{\mV}_k\}$ be a predictable sequence of self-adjoint matrices. For each $k$, suppose $\widetilde{g}\widetilde{\mV}_k$ is a corrector for $\widetilde{\mU}_k$, conditional on the sigma field $\mathcal{F}_{k-1}$, then the predictable process $\widetilde{\mW}_k = \sum_{i=1}^k \widetilde{\mV}_i$ generates a corrector $\{\widetilde{g}\widetilde{\mW}_k\}$ for the martingale $\{\widetilde{\mY}_k\}$.
\end{definition}

Since we are interested in the $\{\mL_i - \mL_0\}$ martingale, each matrix in this sequence can be expanded as:
\begin{align*}
\begin{split}
    \mL_i -  \mL_0 &= \mL_i - \mL_{i-1} + \mL_{i-1} - \mL_{i-2} + \cdots + \mL_1 - \mL_0 \\
    &= \mC_i - \mathbb{E}[\mC_i|v_i] + \mC_{i-1} - \mathbb{E}[\mC_{i-1}|v_{i-1}] + \cdots + \mC_1 - \mathbb{E}[\mC_1|v_1]
\end{split}
\end{align*}
Where the second equality follows from Eq. \ref{eq:lap_clique_diff_seq}. Therefore, based on Definition \ref{def:corrector_to_corrector_process}, the corrector process $\{\widetilde{g}\widetilde{\mW}_i, i = 1, \dots\}$ for martingale $\{\mL_i - \mL_0\}$ is given by:
\begin{equation}
\label{eq:composite_corrector_process}
    \widetilde{g}\widetilde{\mW}_i = \sum_{j=1}^i \frac{2}{\mathcal{B}_{\mR_{j-1}}|\gV_{\mR_{j-1}}|} \cdot \bigg[ \exp\bigg( \frac{\mathcal{B}_{\mR_{j-1}}\mathcal{B}_{\mC_j} \theta^2}{2 (1 - \mathcal{B}_{\mU_j}|\theta|/3)} \bigg) - 1 \bigg]  \cdot \mR_{j-1}
\end{equation}

Finally, we state the master tail bound theorem for matrix martingales (see Theorem 7.4 in \citet{tropp2019matrix} and \citet{freedman1975tail} for details on the proof based on martingale stopping time).
\begin{theorem}
\label{thm:matrix_martingale_master_tail_bound}
\citep{tropp2019matrix} If $\{\widetilde{g}\widetilde{\mW}_k\}$ is a corrector process for a martingale $\{\widetilde{\mY}_k \in \sR^{N \times N}\}$ of self-adjoint matrices and $\sigma_{max}(.)$ returns the largest singular value, then:

\begin{equation}
\label{eq:master_tail_bound}
P(\exists k \ge 0 : \sigma_{max}(\widetilde{\mY}_k) \ge t \text{ and } \sigma_{max}(\widetilde{\mW}_k) \le q ) \le N \cdot \inf_{\theta > 0} e^{-\theta t + \widetilde{g}(\theta)q} , \hfill \forall t, q \in \mathbb{R}
\end{equation}
\end{theorem}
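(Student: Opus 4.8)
\noindent\emph{Proof proposal.} The plan is to run the Freedman-style supermartingale argument that the corrector-process definition is tailored to support. Fix $\theta>0$ and set $Z_k:=\operatorname{tr}\exp\!\big(\theta\,\widetilde{\mY}_k-\widetilde{g}(\theta)\,\widetilde{\mW}_k\big)$. By the definition of a corrector process, $\{Z_k\}$ is a nonnegative supermartingale; and since $\widetilde{\mY}_0=\vzero$ and $\widetilde{\mW}_0=\vzero$, its initial value is $Z_0=\operatorname{tr}(\mI)=N$, so $\mathbb{E}Z_k\le N$ for every $k$. I would then introduce the stopping time
\begin{equation*}
\tau:=\inf\{\,k\ge 0:\ \sigma_{max}(\widetilde{\mY}_k)\ge t\ \text{ and }\ \sigma_{max}(\widetilde{\mW}_k)\le q\,\},
\end{equation*}
which is a bona fide stopping time for $\{\mathcal{F}_k\}$ since $\widetilde{\mY}_k$ is $\mathcal{F}_k$-measurable and the predictable $\widetilde{\mW}_k$ is $\mathcal{F}_{k-1}\subseteq\mathcal{F}_k$-measurable; the event in \eqref{eq:master_tail_bound} is then exactly $\{\tau<\infty\}$.

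Next, optional stopping applied to the bounded stopping time $\tau\wedge k$ keeps $\{Z_{\tau\wedge k}\}$ a nonnegative supermartingale, whence $\mathbb{E}Z_{\tau\wedge k}\le Z_0=N$. The crux is to lower bound $Z_\tau$ on $\{\tau\le k\}$: there $\sigma_{max}(\widetilde{\mW}_\tau)\le q$ gives $\widetilde{\mW}_\tau\preccurlyeq q\,\mI$, so (using $\widetilde{g}(\theta)\ge 0$) $\theta\widetilde{\mY}_\tau-\widetilde{g}(\theta)\widetilde{\mW}_\tau\succcurlyeq\theta\widetilde{\mY}_\tau-\widetilde{g}(\theta)q\,\mI$, and since $\operatorname{tr}\exp$ is monotone in the semidefinite order, $Z_\tau\ge e^{-\widetilde{g}(\theta)q}\operatorname{tr}\exp(\theta\widetilde{\mY}_\tau)\ge e^{-\widetilde{g}(\theta)q}\,e^{\theta\,\lambda_{max}(\widetilde{\mY}_\tau)}$. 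Replacing $\lambda_{max}$ by $\sigma_{max}$ — e.g.\ by running the same argument on the self-adjoint dilation $\widetilde{\mY}_k\oplus(-\widetilde{\mY}_k)$ (with $\widetilde{\mW}_k\oplus\widetilde{\mW}_k$), whose largest eigenvalue at time $\tau$ is $\sigma_{max}(\widetilde{\mY}_\tau)\ge t$ — gives $Z_\tau\ge e^{\theta t-\widetilde{g}(\theta)q}$ on $\{\tau\le k\}$. Chaining, $N\ge\mathbb{E}Z_{\tau\wedge k}\ge\mathbb{E}\big[Z_\tau\,\mathbf{1}\{\tau\le k\}\big]\ge e^{\theta t-\widetilde{g}(\theta)q}\,P(\tau\le k)$, i.e.\ $P(\tau\le k)\le N e^{-\theta t+\widetilde{g}(\theta)q}$; letting $k\to\infty$ ($\{\tau\le k\}\uparrow\{\tau<\infty\}$) and taking $\inf_{\theta>0}$ yields \eqref{eq:master_tail_bound}.

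The obstacle is bookkeeping rather than a new idea. The points that need care are: (i) checking that the stopped — and, for the two-sided $\sigma_{max}$ bound, dilated — process still obeys the supermartingale inequality defining a corrector process, where for the dilation one uses that $\widetilde{g}\widetilde{\mW}_k$ is simultaneously a corrector for the difference sequence $\{-\widetilde{\mU}_k\}$; this in turn holds precisely because the Bernstein and Chernoff correctors of the previous subsections were built from the two-sided spectral-norm bounds $\mathcal{B}_{\mU_i},\mathcal{B}_{\mC_i},\mathcal{B}_{\mR_{i-1}}$ together with the nonnegative matrix $\textit{CLIQUE}(\mR_{i-1},v_i)$, all invariant under the sign flip; (ii) the $\sigma_{max}$-versus-$\lambda_{max}$ reduction, which at worst changes the dimensional prefactor by a constant; and (iii) the interchange of $k\to\infty$ with the probability, a routine monotone-convergence step. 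With the master bound established, substituting the explicit corrector process from \eqref{eq:composite_corrector_process} turns it into a concrete deviation estimate for the laplacian martingale $\{\mL_i-\mL_0\}$ generated by $rLap$.
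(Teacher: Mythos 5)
The paper does not actually prove this statement: it is quoted from Tropp (2019), Theorem 7.4, with a pointer to the stopping-time proof (Freedman, 1975), and your Freedman-style argument — the trace-exponential supermartingale $Z_k=\operatorname{tr}\exp(\theta\widetilde{\mY}_k-\widetilde{g}(\theta)\widetilde{\mW}_k)$ with $Z_0=N$, the stopping time $\tau$, optional stopping, and the lower bound $Z_\tau\ge e^{\theta t-\widetilde{g}(\theta)q}$ on $\{\tau\le k\}$ — is exactly that argument, correctly executed. The one caveat you already flag is real: the direct argument controls $\lambda_{max}(\widetilde{\mY}_\tau)$, and your dilation repair yields the $\sigma_{max}$ version only with prefactor $2N$ (and only if the corrector also serves the negated difference sequence), which matches the cited theorem (stated for $\lambda_{max}$) and the paper's own downstream use of the factor $2N$ in its deviation bound, so the discrepancy lies in the paper's restatement with $\sigma_{max}$ and $N$, not in your proof.
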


The deviation tail bound for the martingale $\{L_i - L_0\}$ can now be given as:
\begin{align}
\begin{split}
\label{eq:martingale_deviation_bound}
P(\norm{\mL_i - \mL_0} > \epsilon ) &\le P( \exists j: \norm{\mL_j - \mL_0} > \epsilon \} \\
 &\le P( \exists j: \sigma_{max}(\mL_j - \mL_0) \ge \epsilon) + P( \exists j: \sigma_{max}(-(\mL_j - \mL_0)) \ge \epsilon) \\
 &\le 2N \cdot \inf_{\theta > 0} \exp( -\epsilon \theta + \widetilde{g}(\theta) \eta^2)
\end{split}
\end{align}
Where $\sigma_{max}(\widetilde{\mW}_i) \le \eta^2$. The first inequality is a split based on the singular values so that Theorem \ref{thm:matrix_martingale_master_tail_bound} can be applied, to obtain the tail bound. The factor $2N$ instead of $N$ (as per Theorem \ref{thm:matrix_martingale_master_tail_bound}) is obtained due to the two cases of singular value bounds in the second inequality of Eq. \ref{eq:martingale_deviation_bound} and based on the validity of the corrector for the negation of the martingale $-(\mL_j - \mL_0)$. The $\widetilde{g}(\theta)$ term in the bound is obtained from the $\theta$ dependent scalar terms of the composite corrector in Eq. \ref{eq:composite_corrector_process}. Observe that, based on Eq. \ref{eq:R_star_C_relation} and \ref{eq:clique_star_sc_ineq}, it is sufficient if we can track $\mathcal{B}_{\mR_{i-1}}$ during the elimination process as $\mR_{i}$ essentially captures the clique approximations $\mC_i$. Thus, by controlling $\mathcal{B}_{\mR_{i-1}}$, we can have a provable bound on the randomness induced during the view generation steps using $rLap$.  

The analysis by \citet{kyng2016approximate, tropp2019matrix} provides one such approach. Their work applies a normalization map to the clique approximations $ \Phi(\mC_i) = \mL^{-1/2}\mC_i\mL^{-1/2}$ and leverages the concept of effective resistances \citep{spielman2008graph} to split the edges in the graph into $\xi$ multi-edges (each with weight $w_{\mL}(e)/\xi$) and bound the spectral norm $\norm{\Phi(\mR_{i})}$. A key takeaway from their approach is that the probability of deviation can be reduced by increasing $\xi$. However, note that this line of analysis aims at better pre-conditioning of the laplacian linear system with $\mL$. In our analysis, since we are only concerned with introducing randomness into our graph augmentations and preserving the Schur complement properties in expectation, we present a generalized analysis and avoid strict assumptions.

\textbf{Runtime:} To analyse the runtime complexity of Algorithm \ref{alg:rlap}, observe that at iteration $i$, when a node $v_i$ is being eliminated, the corresponding $\textit{STAR}(v_i)$ can be removed from $\mR_{i-1}$ in $O(deg(v_i))$ with efficient sparse tensor data types. Next, applying $o_n$ can lead to $O(deg(v_i)\log(deg(v_i)))$ overheads. The clique sampling loop runs $O(deg(v_i))$ times with $O(1)$ overheads to sample a neighbor and add the new edge. This can be achieved by using a bi-directional linked list representation of the graph with an additional list for node pointer lookup. Finally, assuming a random node elimination scheme $o_v$ (i.e, our default variant of $rLap$), $|\gV|=N, |\gE|=M, deg(v_i)=O(\frac{M}{N-i+1})$, the time complexity to eliminate $k = \gamma|\gV|$ nodes is:

\begin{align*}
\begin{split}
    O \bigg( \sum_{i=1}^k \frac{M}{N - i+1} \log \big(\frac{M}{N - i+1} \big) \bigg) 
    &= O \bigg( \sum_{i=0}^{N-1} \frac{M}{N - i} \log \big(\frac{M}{N - i} \big) \bigg) \\
    &= O \bigg( \sum_{i=0}^{N-1} \frac{M \log M}{N - i} - \frac{M \log (N - i)}{N - i} \bigg)
    \\ &= O \bigg( M \log M \sum_{i=0}^{N-1} \frac{1}{N - i} - M \sum_{i=0}^{N-1} \frac{\log (N - i)}{N - i} \bigg) \\
    &=  O \bigg( M \log M \log N - M \big( \frac{\log N}{N} + \cdots \frac{\log 1}{1} \big) \bigg) \\
    &= O \bigg( M \log M \log N \bigg) 
\end{split}
\end{align*}

Note that if one employs a random neighbor ordering scheme for $o_n$, the time complexity can be reduced to $O(M \log N)$. Additionally, one can carefully design a priority queue that maintains the degrees of nodes and looks up the elimination node in $O(deg(v_i))$. Thus, $o_v$ based on min-degree elimination can still be achieved in $ O \bigg( M \log M \log N \bigg) $ time.

\section{Ablation study on randomized Schur complements}
\label{app:rlap_ablation}

In the following ablation study, we consider two variants of $o_v$ and three variants of $o_n$ as mentioned in Table \ref{table:rlap_variants} for $rLap$. The scheme $o_v=rand$ indicates a random selection of nodes for elimination and $o_v=deg$ indicates a selection based on the minimum degree nodes which haven't been eliminated. Formally, by employing a priority queue of nodes based on their current degree values, the $o_v$ scheme returns a node with the lowest degree at each iteration for elimination. Once a node $v_i$ has been selected, the functionality of $o_n$ is to order the neighbors $x_l \in \mathcal{N}_{\mR_{i-1}}(v_i)$ based on weights of edges $w_{\mR_{i-1}}(v_i, x_l)$. The schemes $o_n=asc, o_n=desc, o_n=rand$ indicate sorting of $x_l \in \mathcal{N}_{\mR_{i-1}}(v_i)$ in the ascending, descending and random order of edge weights respectively. The variants of $rLap$ follow the $rLap-o_v-o_n$ naming convention.

\begin{table*}[ht!]
\centering
\caption{Variants of $rLap$ based on $o_v, o_n$.}
\label{table:rlap_variants}
\vskip 0.15in
\begin{center}
\begin{small}
\begin{sc}
\begin{tabular}{c|c|c|c|c}
\toprule
Variant & $o_v$ & $o_v$ description & $o_n$ & $o_n$ description\\
\midrule
rLap-rand-asc & rand & random selection & asc & ascending order of $w_{\mR_{i-1}}(v_i, x_l)$  \\
rLap-rand-desc & rand & random selection & desc & descending order of $w_{\mR_{i-1}}(v_i, x_l)$  \\
rLap-rand-rand & rand & random selection & rand & random order of $w_{\mR_{i-1}}(v_i, x_l)$  \\
rLap-deg-asc & deg & min-degree selection & asc & ascending order of $w_{\mR_{i-1}}(v_i, x_l)$  \\
rLap-deg-desc & deg & min-degree selection & desc & descending order of $w_{\mR_{i-1}}(v_i, x_l)$  \\
rLap-deg-rand & deg & min-degree selection & rand & random order of $w_{\mR_{i-1}}(v_i, x_l)$  \\
\bottomrule
\end{tabular}
\end{sc}
\end{small}
\end{center}
\vskip -0.1in
\end{table*}

On a related note, the $vertex-coarsening$ approach by \citet{fahrbach2020faster} attains the same goal of approximating Schur complements in expectation and is closely related to our work. However, it differs from $rLap$ in its clique approximation approach and chooses the $o_v=deg$ scheme for node elimination. The notion of $o_n$ doesn't apply to their edge sampling procedure when approximating the clique. To understand the benefits of our approach and the role of $o_v, o_n$, we compare the variants of $rLap$ (including $vertex-coarsening$) on all the benchmark datasets.

Table \ref{table:rlap_ablation_grace}, \ref{table:rlap_ablation_mvgrl} compare the unsupervised node classification performance of $rLap$ variants and $vertex-coarsening$ using the GRACE and MVGRL design. The setup and evaluation protocols are the same as per Appendix \ref{appendix:experimental_setup}. In a majority of cases, the $rLap-rand$ variants outperformed $rLap-degree$ variants, followed by $vertex-coarsening$. Thus, showcasing the effectiveness of our clique sampling strategy. A similar pattern can be observed for unsupervised graph classification tasks in Table \ref{table:rlap_ablation_graphcl}, \ref{table:rlap_ablation_bgrl} using the GraphCL and BGRL designs respectively. However, the $vertex-coarsening$ approach dominates the $rLap$ variants when the BGRL design is employed for IMDB-BINARY, i.e the minimum degree-based elimination scheme for $o_v$ seems to outperform the random variants in this case.

\begin{table*}[ht!]
\centering
\caption{Evaluation (in accuracy) on benchmark node datasets with \textbf{GRACE} design and $rLap$ variants.}
\label{table:rlap_ablation_grace}
\vskip 0.15in
\begin{center}
\begin{small}
\begin{sc}
\begin{tabular}{c|c|c|c|c|c}
\toprule
Augmentor & CORA & Amazon-Photo & PubMed & Coauthor-CS & Coauthor-Phy \\
\midrule
rLap-rand-asc & $\mathbf{83.75 \pm 2.64}$ & $\mathbf{92.59 \pm 1.05}$ & $84.56 \pm 0.71$ &  $\mathbf{93.1 \pm 0.54}$  & $\mathbf{95.83 \pm 0.44}$ \\
rLap-rand-desc & $80.4 \pm 3.43$  & $92.37 \pm 0.91$ & $\mathbf{85.49 \pm 0.88}$ & $92.71 \pm 0.57$ &  \underline{$95.73 \pm 0.25$} \\
rLap-rand-rand & $82.94 \pm 3.88$ & \underline{$92.52 \pm 1.18$} & \underline{$84.91 \pm 0.7$} & $92.67 \pm 0.71$  & $95.56 \pm 0.39$ \\
rLap-deg-asc  & $80.37 \pm 1.8$   & $89.7 \pm 0.58$ & $84.22 \pm 0.85$ & \underline{$92.87 \pm 0.6$} & $95.39 \pm 0.4$  \\
rLap-deg-desc & \underline{$83.23 \pm 2.35$} &  $90.98 \pm 1.28$ & $84.45 \pm 0.66$ & $92.84 \pm 0.25$ & $95.24 \pm 0.4$ \\
rLap-deg-rand & $81.62 \pm 3.6$ & $88.27 \pm 1.47$ & $84.8 \pm 0.77$ & $92.63 \pm 0.75$ & $94.79 \pm 0.38$ \\
Vertex coarsening & $78.31 \pm 2.25$ & $90.58 \pm 1.09$ & $84.18 \pm 1.0$ & $92.77 \pm 0.78$ & $94.58 \pm 0.36$ \\
\bottomrule
\end{tabular}
\end{sc}
\end{small}
\end{center}
\vskip -0.1in
\end{table*}

\begin{table*}[ht!]
\centering
\caption{Evaluation (in accuracy) on benchmark node datasets with \textbf{MVGRL} design and $rLap$ variants.}
\label{table:rlap_ablation_mvgrl}
\vskip 0.15in
\begin{center}
\begin{small}
\begin{sc}
\begin{tabular}{c|c|c|c|c|c}
\toprule
Augmentor & CORA & Amazon-Photo & PubMed & Coauthor-CS & Coauthor-Phy \\
\midrule
rLap-rand-asc & $\mathbf{83.68 \pm 2.04}$ & $87.14 \pm 1.34$ &  $\mathbf{84.21 \pm 0.46}$ &  $91.73 \pm 0.53$ & $\mathbf{94.81 \pm 0.31}$ \\
rLap-rand-desc & $82.24 \pm 1.66$ & $86.14 \pm 0.89$  & $83.88 \pm 0.51$ & $91.6 \pm 0.48$ & \underline{$94.53 \pm 0.2$}  \\
rLap-rand-rand & $81.84 \pm 1.26$ & $87.28 \pm 1.29$ &  $83.79 \pm 0.97$  &  $91.57 \pm 0.49$ & $94.24 \pm 0.43$ \\
rLap-deg-asc & $82.32 \pm 2.29$ &  $\mathbf{87.52 \pm 1.0}$ & $83.61 \pm 0.74$ & $91.74 \pm 0.64$ & $94.18 \pm 0.34$  \\
rLap-deg-desc & \underline{$83.38 \pm 1.65$} & $86.47 \pm 1.25$ & $83.54 \pm 0.64$ & \underline{$91.82 \pm 0.58$}  & $94.5 \pm 0.35$  \\
rLap-deg-rand & $81.73 \pm 2.47$ & $87.11 \pm 1.06$ & \underline{$84.02 \pm 0.49$} & $\mathbf{91.85 \pm 0.81}$ & $94.23 \pm 0.33$  \\
Vertex coarsening & $82.02 \pm 2.18$ &  \underline{$87.36 \pm 1.0$} & $83.62 \pm 0.67$ & $90.79 \pm 0.49$ & $94.12 \pm 0.35$ \\
\bottomrule
\end{tabular}
\end{sc}
\end{small}
\end{center}
\vskip -0.1in
\end{table*}

\begin{table*}[ht!]
\centering
\caption{Evaluation (in accuracy) on benchmark graph datasets with \textbf{GraphCL} design and $rLap$ variants.}
\label{table:rlap_ablation_graphcl}
\vskip 0.15in
\begin{center}
\begin{small}
\begin{sc}
\begin{tabular}{c|c|c|c|c|c}
\toprule
Augmentor & PROTEINS & IMDB-BINARY & MUTAG & IMDB-MULTI & NCI1\\
\midrule
rLap-rand-asc & $\mathbf{75.27 \pm 3.34}$ & $\mathbf{70.9 \pm 5.01}$  & $\mathbf{87.5 \pm 9.86}$  & $\mathbf{48.66 \pm 2.68}$ & $\mathbf{75.06 \pm 1.65}$  \\
rLap-rand-desc & $72.61 \pm 4.75$ & $68.3 \pm 4.29$ & \underline{$85.5 \pm 7.83$} & \underline{$48.23 \pm 2.73$} & $74.14 \pm 1.9$ \\
rLap-rand-rand & $73.34 \pm 4.54$ & $68.8 \pm 4.12$ & $83.5 \pm 5.5$ & $47.8 \pm 4.16$ & $74.43 \pm 2.15$\\
rLap-deg-asc & \underline{$74.29 \pm 2.76$} & $69.3 \pm 3.85$  & $85.0 \pm 7.42$ & $47.2 \pm 4.31$ & $74.2 \pm 2.34$  \\
rLap-deg-desc & $73.04 \pm 2.76$ & \underline{$70.7 \pm 5.88$} & $82.0 \pm 8.43$  & $46.33 \pm 4.91$ & \underline{$74.89 \pm 1.43$}  \\
rLap-deg-rand & $73.66 \pm 2.86$ & $67.5 \pm 4.2$ & $81.5 \pm 8.5$ & $47.13 \pm 4.15$ & $73.6 \pm 2.17$ \\
Vertex Coarsening & $71.61 \pm 4.51$ & $67.6 \pm 3.85$  & $80.0 \pm 9.75$ & $45.6 \pm 4.32$ & $74.09 \pm 2.5$\\
\bottomrule
\end{tabular}
\end{sc}
\end{small}
\end{center}
\vskip -0.1in
\end{table*}

\begin{table*}[ht!]
\centering
\caption{Evaluation (in accuracy) on benchmark graph datasets with \textbf{BGRL} design and $rLap$ variants.}
\label{table:rlap_ablation_bgrl}
\vskip 0.15in
\begin{center}
\begin{small}
\begin{sc}
\begin{tabular}{c|c|c|c|c|c}
\toprule
Augmentor & PROTEINS & IMDB-BINARY & MUTAG & IMDB-MULTI & NCI1\\
\midrule
rLap-rand-asc & $\mathbf{84.34 \pm 4.03}$ & $74.8 \pm 10.17$  & $\mathbf{81.5 \pm 5.39}$ & $\mathbf{59.47 \pm 7.42}$ & $\mathbf{75.15 \pm 3.57}$ \\
rLap-rand-desc & \underline{$83.72 \pm 4.3$} & $75.1 \pm 8.87$ & $80.6 \pm 8.66$ & $58.73 \pm 7.13$ & $73.77 \pm 3.34$  \\
rLap-rand-rand & $83.14 \pm 6.38$ & $74.2 \pm 7.74$ & $80.3 \pm 5.39$ & $57.07 \pm 5.52$ & $74.72 \pm 2.97$ \\
rLap-deg-asc & $83.62 \pm 5.44$ & \underline{$79.4 \pm 5.94$} & \underline{$81.0 \pm 10.26$} & $58.6 \pm 4.73$  & \underline{$74.8 \pm 2.89$} \\
rLap-deg-desc & $83.48 \pm 4.91$ & $79.2 \pm 7.3$  & $80.5 \pm 5.22$ & $58.53 \pm 5.2$ & $73.36 \pm 4.17$ \\
rLap-deg-rand & $83.55 \pm 5.62$ &  $79.0 \pm 8.75$ & $80.8 \pm 10.2$ & \underline{$59.36 \pm 5.39$} & $74.3 \pm 2.36$ \\
Vertex coarsening & $82.3 \pm 6.72$ & $\mathbf{80.5 \pm 5.71}$ & $78.5 \pm 7.76$ & $56.67 \pm 5.38$ & $73.58 \pm 2.37$ \\
\bottomrule
\end{tabular}
\end{sc}
\end{small}
\end{center}
\vskip -0.1in
\end{table*}

\subsection{Maximum singular value and edge count analysis}

For a better understanding of the effectiveness of these variants, we set $\gamma=0.5$ and vary the choices of $o_v, o_n$ to track $\sigma_{max}(\mR_i)$ as an indicator of $\mathcal{B}_{\mR_i}$. As analyzing hundreds of Schur complements in graph classification settings defeats the purpose of our analysis, we restricted our analysis to single graph node classification datasets to systematically track the Schur complements. Figure \ref{fig:rlap_ablation_cora}(a) plots the trend of $\sigma_{max}(\mR_i)$ using $vertex-coarsening$ and $rLap$ variants with a default setting of $o_n=asc$ on the CORA dataset. Figure \ref{fig:rlap_ablation_cora}(b) plots the edge count trend of $\mR_i$ for these three techniques. Since $rLap$ with $o_v=rand$ gave the best performance on most of the benchmark datasets, we tracked $\sigma_{max}(\mR_i)$ for $rLap$ with $o_v=rand$ and $o_n=asc, desc, rand$ in Figure \ref{fig:rlap_ablation_cora}(c), with an edge count trend being presented in Figure \ref{fig:rlap_ablation_cora}(d). A similar analysis is conducted for AMAZON-PHOTO (see Figure \ref{fig:rlap_ablation_amazon_photo}), PUBMED (see Figure \ref{fig:rlap_ablation_pubmed}), COAUTHOR-CS (see Figure \ref{fig:rlap_ablation_coauthor_cs}) and COAUTHOR-PHY (see Figure \ref{fig:rlap_ablation_coauthor_phy}).

The $\sigma_{max}(\mR_i)$ values for $rLap$ with $o_v=rand, o_n=asc$ tend to be relatively smaller than $o_v=deg, o_n=asc$ and $vertex-coarsening$ approaches on CORA and AMAZON-PHOTO. However, the variance in their values is larger than the latter two. On the contrary, the Schur complements have relatively higher $\sigma_{max}(\mR_i)$ values for PUBMED, COAUTHOR-CS and COAUTHOR-PHY datasets. This indicates higher stochasticity in the augmented graph views when compared to $rLap$ with $o_v=deg, o_n=asc$, and $vertex-coarsening$. Also, note that degree-based elimination techniques lead to relatively dense $\mR_i$ as they generate Schur complements corresponding to a set of highly connected nodes at every iteration. Thus, $\widetilde{\gG}_1, \widetilde{\gG}_2$ generated by such approaches lead to extra computation overheads for message passing in GNN encoders. For instance, $\mR_i$ computed by $rLap$ with $o_v=rand, o_n=asc$ on COAUTHOR-PHY and AMAZON-PHOTO with $\gamma=0.5$ is sparser by $\approx 50,000, \approx 60,000$ edges respectively when compared to $\mR_i$ obtained using $rLap$ with $o_v=deg, o_n=asc$. Since $vertex-coarsening$ is based on minimum degree-based elimination, we see a close resemblance in its trends of $\sigma_{max}(\mR_i)$ and edge counts with $rLap$ based on $o_v=deg, o_n=asc$.

On the other hand, when $o_v=rand$ is fixed and $o_n$ is varied, the $rLap$ variants tend to generate $\mR_i$ with similar trends of $\sigma_{max}(\mR_i)$ across datasets. However, we observe a noticeable difference in the edge count trend as $\gamma \to 0.5$. For instance, $\mR_i$ with $o_v=rand, o_n=asc$ is sparser by $\approx 20,000$ edges when compared to $o_v=rand, o_n=desc$ on the AMAZON-PHOTO dataset. The relatively higher average degree of nodes in the AMAZON-PHOTO graph is one of the reasons for this observation. On the other hand, PUBMED has a relatively lower average degree which explains a minor difference of $\approx 2000$ edges in Schur complements of these $rLap$ variants. Intuitively, nodes with higher weights (indicative of connectivity) have a higher conditional probability of having an edge in the approximate clique. When $o_n = desc$, such nodes are processed in the initial iterations of the inner loop and later iterations lead to new edges between nodes with relatively less degree. The scenario is flipped when $o_n = asc$ and the sampled edges tend to frequently merge with existing ones (note that we merge edges to avoid multi-graphs). Thus, leading to relatively sparser $\mR_i$.

\begin{figure}[H]
\vskip 0.2in
\begin{center}
\begin{tabular}{cc}
\centering
  \includegraphics[width=70mm, height=40mm]{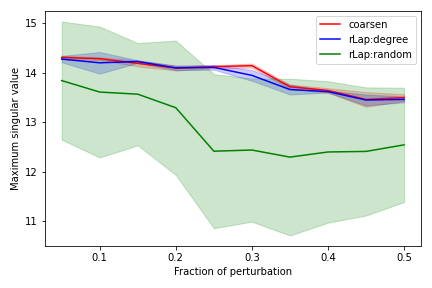} &   \includegraphics[width=70mm, height=40mm]{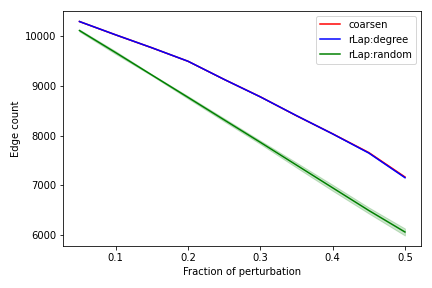} \\
(a) Trend of $\sigma_{max}(\mR_i)$ on CORA & (b)  Trend of $\mR_i$ edge count on CORA \\[2pt]
 \includegraphics[width=70mm, height=40mm]{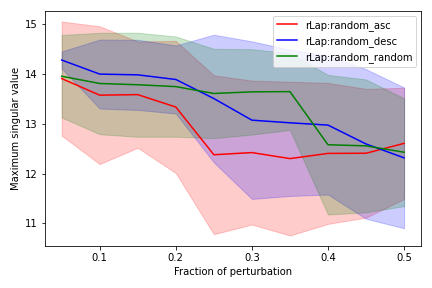} &   \includegraphics[width=70mm, height=40mm]{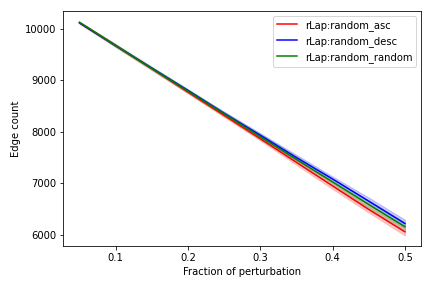} \\
(c)  Trend of $\sigma_{max}(\mR_i)$ on CORA  & (d)  Trend of $\mR_i$ edge count on CORA \\[2pt]
\end{tabular}
\caption{Plots of $\sigma_{max}(\mR_i)$ and edge count vs fraction of perturbation $\gamma$ for $rLap$ variants and $vertex-coarsening$ on CORA. For the $rLap$ variants, plots (a), (b) illustrate the trends when $o_n=asc$ and plots (c), (d) illustrate the trends when $o_v=rand$. }
\label{fig:rlap_ablation_cora}
\end{center}
\vskip -0.2in
\end{figure}


\begin{figure}[H]
\vskip 0.2in
\begin{center}
\begin{tabular}{cc}
\centering
  \includegraphics[width=70mm, height=40mm]{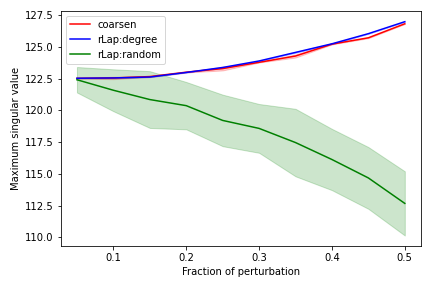} &   \includegraphics[width=70mm, height=40mm]{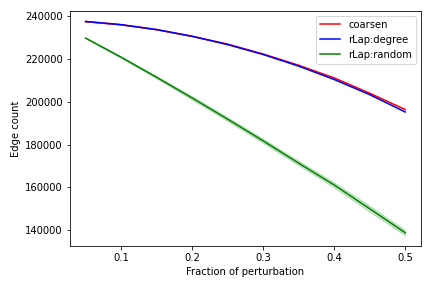} \\
(a) Trend of $\sigma_{max}(\mR_i)$ on AMAZON-PHOTO & (b)  Trend of $\mR_i$ edge count on AMAZON-PHOTO \\[2pt]
 \includegraphics[width=70mm, height=40mm]{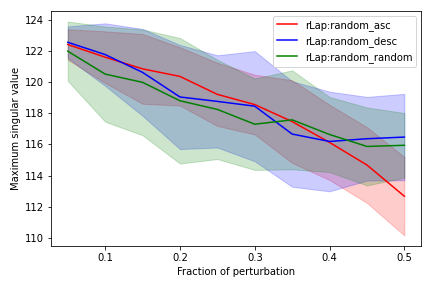} &   \includegraphics[width=70mm, height=40mm]{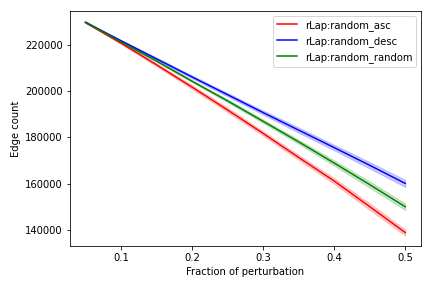} \\
(c)  Trend of $\sigma_{max}(\mR_i)$ on AMAZON-PHOTO  & (d)  Trend of $\mR_i$ edge count on AMAZON-PHOTO  \\[2pt]
\end{tabular}
\caption{Plots of $\sigma_{max}(\mR_i)$ and edge count vs fraction of perturbation $\gamma$ for $rLap$ variants and $vertex-coarsening$ on AMAZON-PHOTO. For the $rLap$ variants, plots (a), (b) illustrate the trends when $o_n=asc$ and plots (c), (d) illustrate the trends when $o_v=rand$.  }
\label{fig:rlap_ablation_amazon_photo}
\end{center}
\vskip -0.2in
\end{figure}


\begin{figure}[H]
\vskip 0.2in
\begin{center}
\begin{tabular}{cc}
\centering
  \includegraphics[width=70mm, height=40mm]{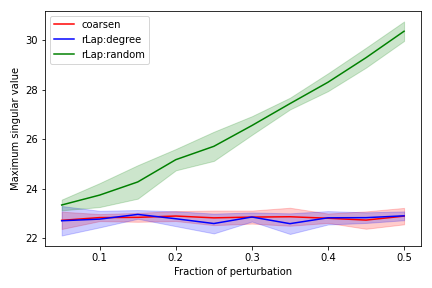} &   \includegraphics[width=70mm, height=40mm]{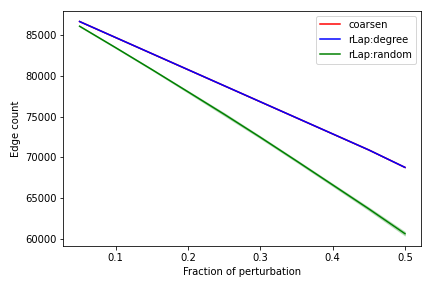} \\
(a) Trend of $\sigma_{max}(\mR_i)$ on PUBMED & (b)  Trend of $\mR_i$ edge count on PUBMED\\[2pt]
 \includegraphics[width=70mm, height=40mm]{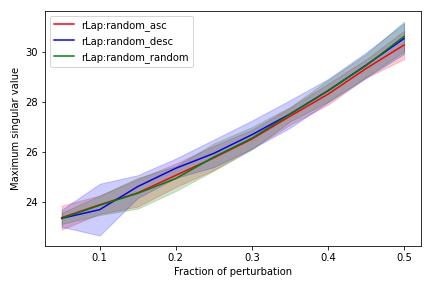} &   \includegraphics[width=70mm, height=40mm]{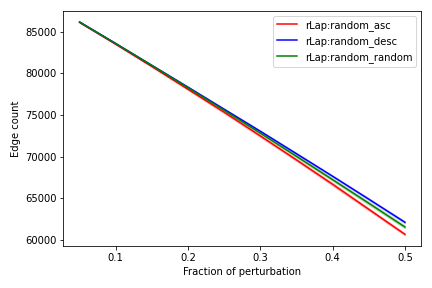} \\
(c)  Trend of $\sigma_{max}(\mR_i)$ on PUBMED  & (d)  Trend of $\mR_i$ edge count on PUBMED \\[2pt]
\end{tabular}
\caption{Plots of $\sigma_{max}(\mR_i)$ and edge count vs fraction of perturbation $\gamma$ for $rLap$ variants and $vertex-coarsening$ on PUBMED. For the $rLap$ variants, plots (a), (b) illustrate the trends when $o_n=asc$ and plots (c), (d) illustrate the trends when $o_v=rand$.   }
\label{fig:rlap_ablation_pubmed}
\end{center}
\vskip -0.2in
\end{figure}


\begin{figure}[H]
\vskip 0.2in
\begin{center}
\begin{tabular}{cc}
\centering
  \includegraphics[width=70mm, height=40mm]{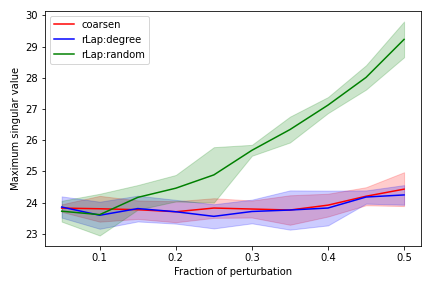} &   \includegraphics[width=70mm, height=40mm]{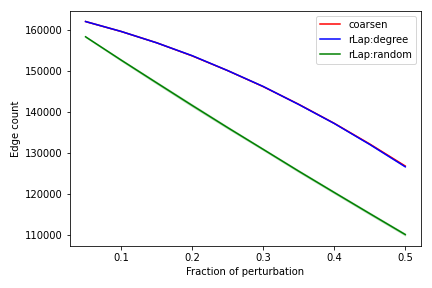} \\
(a) Trend of $\sigma_{max}(\mR_i)$ on COAUTHOR-CS & (b)  Trend of $\mR_i$ edge count on COAUTHOR-CS\\[2pt]
 \includegraphics[width=70mm, height=40mm]{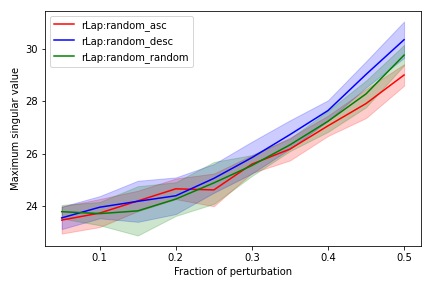} &   \includegraphics[width=70mm, height=40mm]{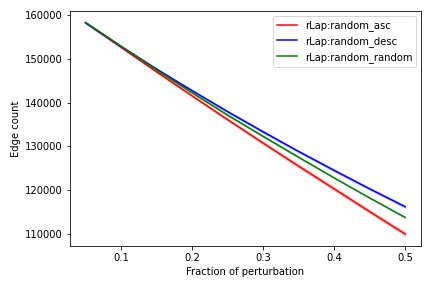} \\
(c)  Trend of $\sigma_{max}(\mR_i)$ on COAUTHOR-CS  & (d)  Trend of $\mR_i$ edge count on COAUTHOR-CS \\[2pt]
\end{tabular}
\caption{Plots of $\sigma_{max}(\mR_i)$ and edge count vs fraction of perturbation $\gamma$ for $rLap$ variants and $vertex-coarsening$ on COAUTHOR-CS. For the $rLap$ variants, plots (a), (b) illustrate the trends when $o_n=asc$ and plots (c), (d) illustrate the trends when $o_v=rand$.   }
\label{fig:rlap_ablation_coauthor_cs}
\end{center}
\vskip -0.2in
\end{figure}


\begin{figure}[H]
\vskip 0.2in
\begin{center}
\begin{tabular}{cc}
\centering
  \includegraphics[width=70mm, height=40mm]{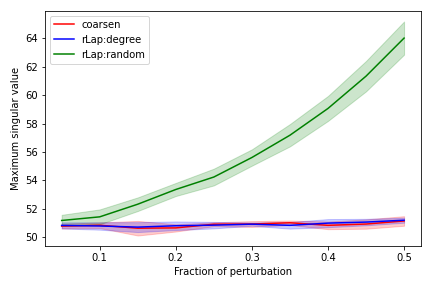} &   \includegraphics[width=70mm, height=40mm]{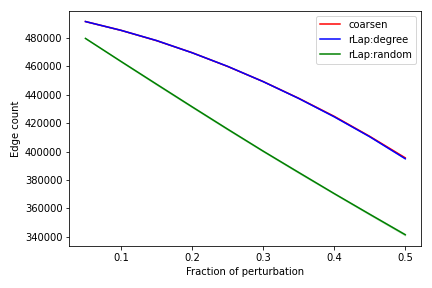} \\
(a) Trend of $\sigma_{max}(\mR_i)$ on COAUTHOR-PHY & (b)  Trend of $\mR_i$ edge count on COAUTHOR-PHY \\[2pt]
 \includegraphics[width=70mm, height=40mm]{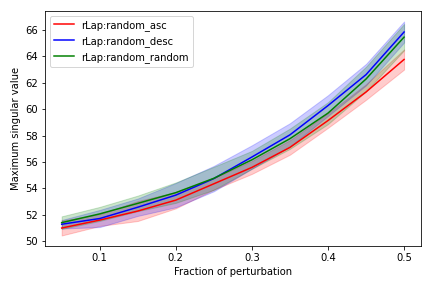} &   \includegraphics[width=70mm, height=40mm]{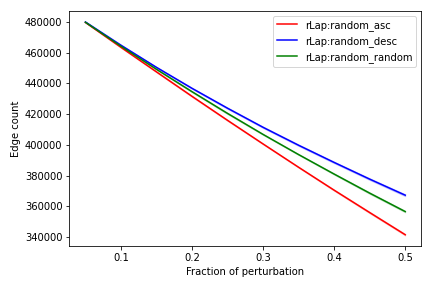} \\
(c)  Trend of $\sigma_{max}(\mR_i)$ on COAUTHOR-PHY  & (d)  Trend of $\mR_i$ edge count on COAUTHOR-PHY \\[2pt]
\end{tabular}
\caption{Plots of $\sigma_{max}(\mR_i)$ and edge count vs fraction of perturbation $\gamma$ for $rLap$ variants and $vertex-coarsening$ on COAUTHOR-PHY. For the $rLap$ variants, plots (a), (b) illustrate the trends when $o_n=asc$ and plots (c), (d) illustrate the trends when $o_v=rand$.   }
\label{fig:rlap_ablation_coauthor_phy}
\end{center}
\vskip -0.2in
\end{figure}

\section{Experimental setup}
\label{appendix:experimental_setup}

\subsection{Platform specifications}
We conduct experiments on a virtual machine with 8 Intel(R) Xeon(R) Platinum 8268 CPUs, 24GB of RAM and 1 Quadro RTX 8000 GPU with 32GB of allocated memory. For reproducible experiments, we leverage the open-source PyGCL framework by \citet{zhu2021empirical} along with PyTorch 1.12.1 \citep{paszke2019pytorch} and PyTorch-Geometric (PyG) 2.1.0 \citep{Fey/Lenssen/2019}. To ensure low latencies, the $rLap$ algorithm is written in C++, uses the Eigen library for tensor operations, is built using Bazel and has Python bindings for wider adaptability.

\subsection{Datasets}
We experiment on 10 widely used node and graph classification datasets. For node classification, we experiment on CORA \citep{mccallum2000automating}, PUBMED \citep{sen2008collective}, AMAZON-PHOTO, COAUTHOR-CS and COAUTHOR-PHY \citep{shchur2018pitfalls}.
For graph classification, we use the MUTAG \citep{debnath1991structure}, PROTEINS-full (PROTEINS) \citep{borgwardt2005protein}, IMDB-BINARY, IMDB-MULTI \citep{yanardag2015deep} and NCI1 \citep{wale2008comparison} datasets. All the datasets are available in PyG and can be imported directly. Table \ref{table:datasets} provides a summary of their statistics.

\begin{table*}[ht!]
\centering
\caption{Statistics of benchmark datasets from PyG}
\label{table:datasets}
\vskip 0.15in
\begin{center}
\begin{small}
\begin{sc}
\begin{tabular}{lccccccr}
\toprule
Dataset & Domain & Task &  \#Graphs & Avg \#nodes & Avg \#edges & \#feat & \#classes \\
\midrule
CORA & CS & Node & $1$ & $2,708$ & $10,556$ & $1,433$ & $7$\\
AMAZON-PHOTO & E-Commerce & Node & $1$ & $7,650$ & $238,162$ & $745$ & $8$ \\
PUBMED &  Medical & Node & $1$ & $19,717$ & $88,648$ & $500$ & $3$ \\
COAUTHOR-CS & CS & Node & $1$ & $18,333$ & $163,788$ & $6,805$ & $15$ \\
COAUTHOR-PHY & Phy & Node & $1$ & $34,493$ & $495,924$ & $8,415$ & $5$ \\
\midrule
MUTAG & Bio & Graph & $188$ & $\approx 17.9$ & $\approx 39.6$ & $7$ & $2$ \\
PROTEINS & Bio & Graph & $1,113$ & $\approx 39.1$ & $\approx 145.6$ & $3$ & $2$ \\
IMDB-BINARY & Movie & Graph & $1,000$ & $\approx 19.8$ & $\approx 193.1$ & $0$ & $2$ \\
IMDB-MULTI & Social & Graph & $1,500$ & $\approx 13.0$ & $\approx 131.88$ & $0$ & $3$ \\
NCI1 & Bio & Graph & $4,110$ & $\approx 29.87$ & $\approx 64.6$ & $0$ & $2$ \\
\bottomrule
\end{tabular}
\end{sc}
\end{small}
\end{center}
\vskip -0.1in
\end{table*}

\subsection{Augmentors}

\textbf{EdgeAddition:} To randomly add edges to the graph $\gG$, a mask $\mathcal{M} \in \mathbb{R}^{N \times N}$ is sampled using the Bernoulli distribution as follows: $\mathcal{M}[i,j] \sim Bern(\gamma)$, where $\gamma$ is the probability of adding an edge. Since we use two views for contrasting, masks $\mathcal{M}_1, \mathcal{M}_2$ are sampled using edge addition probabilities $\gamma_1, \gamma_2$ respectively and applied to the adjacency matrix $\mA$ of $\gG$ to obtain: $\widetilde{\mA}_1 = \mA \odot \mathcal{M}_1, \widetilde{\mA}_2 = \mA \odot \mathcal{M}_2$. Here $\odot$ indicates a bit-wise OR operator.  As per the notation defined in preliminaries, the resulting views $\widetilde{\gG}_1, \widetilde{\gG}_2$ are the graphs corresponding to $\widetilde{\mA}_1, \widetilde{\mA}_2$.

\textbf{EdgeDropping:} This technique uses the same mask sampling process as EdgeAddition for $\mathcal{M}_1, \mathcal{M}_2$ but employs a bit-wise AND operator to mask and drop the edges. 

\textbf{EdgeDroppingDegree:} Unlike the above-mentioned EdgeDropping scheme, this adaptive technique takes into account the degree of nodes at either end of an edge and performs a weighted sampling. The probability of dropping an edge between nodes $v_i, v_j$ is given by:

\begin{equation}
\label{eq:ada_aug_edd}
\gamma_{ij} = \min \bigg ( \frac{(\log s^{deg})_{max} - \log s^{deg}_{ij} }{(\log s^{deg})_{max} - (\log s^{deg})_{avg}} \cdot \gamma_e , \gamma_{\tau} \bigg)
\end{equation}

Where $s^{deg}_{ij} = [deg(v_i) + deg(v_j)]/2$ and $(\log s^{deg})_{max}, (\log s^{deg})_{avg}$ correspond to the maximum and average of log degree centrality scores for edges in the graph. Furthermore, $\gamma_e$ is the desired (user-provided) probability of dropping an edge and $\gamma_{\tau}$ is the cut-off probability to truncate high probabilities of removal \citep{zhu2021graph}.

\textbf{EdgeDroppingEVC:} The leading eigen vector centrality of a node $v_i$ is equal to $\vu[i]$ where $\mA\vu = \lambda_{max} \vu$ and $\vu$ is the eigen vector corresponding to the largest eigen value $\lambda_{max}$. By replacing $s^{deg}_{ij}$ in Eq. \ref{eq:ada_aug_edd} with $s^{evc}_{ij} = [\vu[i] + \vu[j]]/2$, we get the eigenvector centrality based edge dropping scheme.

\textbf{EdgeDroppingPR:} The page-rank centrality vector represents the weights computed by the page rank algorithm \citep{page1999pagerank} and is given by $\vp = \alpha \mA \mD^{-1}\vp + \mI$ \citep{zhu2021graph}. By replacing $s^{deg}_{ij}$ in Eq. \ref{eq:ada_aug_edd} with $s^{pr}_{ij} = [\vp[i] + \vp[j]]/2$, we get the page-rank centrality based edge dropping scheme.

\textbf{NodeDropping:} Similar to the edge dropping scheme, this technique drops a node $v_i$ from the graph based on the $Bern(\gamma)$ distribution, where $\gamma$ is the probability of dropping a node. Computationally, this is equivalent to setting the row and column values corresponding to the dropped node to $\vzero$.

\textbf{RandomWalkSubgraph:} This is a sub-graph sampling technique where a random walk with restart is performed on $\gG$ and the sub-graph induced by this random walk is taken as the augmented view \citep{zhu2021empirical}.

\textbf{PPRDiffusion, MarkovDiffusion:} The diffusion operator on a graph (introduced in Eq. \ref{eq:diffusion}) is an effective way of capturing global structural information. The PPRDiffusion and MarkovDiffusion schemes are based on this formulation as follows:
\begin{align}
\label{eq:aug_ppr_mdk}
\begin{split}
\mS^{PPR} &= \alpha(\mI - (1-\alpha)\mT)^{-1} \\
\mS^{MD} &= \frac{1}{K}\sum_{i=1}^K (\alpha \mI + (1-\alpha) \mT^i)
\end{split}
\end{align}
Where $K$ represents the maximum step count for the markov diffusion \citep{zhu2020simple}, $\alpha$ in $\mS^{PPR}, \mS^{MD}$ indicates the dampening factor and $\mT = \mA\mD^{-1}$. We use $\alpha = 0.2$ and $K = 10$ in our experiments. Next, to address the increase in edge connectivity of the diffused graph, we perform sparsification based on thresholding with $\epsilon=10^{-4}$ as proposed by \citet{klicpera2019diffusion}. Note that for diffusion based augmentors, the second view is the original graph without any augmentation and the notion of perturbation $\gamma$ doesn't apply.

\textbf{rLap}: Based on our setup and Algorithm \ref{alg:rlap}, the augmented view is generated by $rLap(\gG, \gamma, o_v, o_n)$, where $\gG$ is the input weighted graph, $\gamma$ indicates the fraction of nodes to eliminate, $o_v$ is the node elimination scheme and $o_n$ is the neighbor ordering scheme. The algorithm leverages the weighted laplacian $\mL$ to perform GE-based elimination for $\gamma|\gV|$ iterations and returns the randomized Schur complement matrix (which is also a laplacian) and represents the augmented view of $\gG$.

\textbf{Feature Masking}: Based on our setup, a graph $\gG$ is associated with a feature matrix $\mX = [\vx_1, \cdots, \vx_N]^{\top} \in \sR^{N \times d}$. For every row $\vx_i \in \sR^d, i \in {1, \cdots, N}$, a mask $m_i \in \sR^d$ is sampled from a Bernoulli distribution as follows: $m_i[j] \sim Bern(1-\gamma), \forall j \in {1, \cdots, d}$, where $\gamma$ is the probability of masking an entry. These masks are then applied on the rows of $\mX$ to obtain the masked features $\widehat{\mX} = [\vx_1 \odot m_1 , \cdots, \vx_N \odot m_N]^{\top} \in \sR^{N \times d}$. Here $\odot$ indicates a bit-wise AND operator.

\subsection{Contrastive objectives}

We employ Information Noise Contrastive Estimation (InfoNCE), Jensen-Shannon Divergence (JSD), and Bootstrap Latent (BL) objectives in our experiments.

\textbf{InfoNCE} \citep{oord2018representation}: Without loss of generality, consider a sample $u_i$ for which $\mathcal{S}(u_i)$ and $\mathcal{D}(u_i)$ indicate the set of similar and dissimilar samples respectively. Samples can indicate nodes or graphs as per the context. The InfoNCE loss is now given by:
\begin{equation}
    \ell_{\textit{InfoNCE}} = -\frac{1}{|\mathcal{S}(u_i)|} \sum\limits_{s_j \in \mathcal{S}(u_i)} \log \Bigg(\frac{e^{\varphi(u_i, s_j)/t}}{e^{\varphi(u_i, s_j)/t} + \sum\limits_{d_j \in \mathcal{D}(u_i)}e^{\varphi(u_i, d_j)/t}} \Bigg)
\end{equation}
Where $t \in \sR$ is the scaling/temperature parameter and $\varphi(.)$ measures the similarity of features $\psi(.)$ learned by the GNN encoder (followed by projector as per design) as follows:
\begin{equation}
    \varphi(u_i, s_j) = \frac{\psi(u_i)^{\top}\psi(s_j)}{\norm{\psi(u_i)}_2\norm{\psi(s_j)}_2}
\end{equation}
\textbf{JSD} \citep{lin1991divergence}: With the same setup as InfoNCE, we use JSD loss based on softplus and sigmoid function $sig(.)$ as:

\begin{equation}
    \ell_{\textit{JSD}} = -\frac{1}{|\mathcal{S}(u_i)|} \sum\limits_{s_j \in \mathcal{S}(u_i)} \log \big( 1 + e^{-sig(\psi(u_i)^{\top}\psi(s_j))} \big)  - \frac{1}{|\mathcal{D}(u_i)|} \sum\limits_{d_j \in \mathcal{D}(u_i)} \log \big( 1 + e^{sig(\psi(u_i)^{\top}\psi(d_j))} \big) 
\end{equation}

\textbf{BL} \citep{thakoor2021bootstrapped}: The BL loss is independent of the dissimilar or negative samples $\mathcal{D}(u_i)$ and attempts to maximize the cosine similarity between embeddings of similar samples. We follow the design presented in \citet{thakoor2021bootstrapped} and employ an online and target encoder to generate embeddings and minimize the following:
\begin{equation}
    \ell_{\textit{BL}} = -\frac{1}{|\mathcal{S}(u_i)|} \sum\limits_{s_j \in \mathcal{S}(u_i)} \frac{\varphi_{\textit{on}}(u_i)^{\top}\varphi_{\textit{tar}}(s_j) }{\norm{\varphi_{\textit{on}}(u_i)}_2\norm{\varphi_{\textit{tar}}(s_j)}_2 }
\end{equation}

We follow the standard benchmarking procedure by \citet{zhu2021empirical} and incorporate additional constraints presented in  \citet{thakoor2021bootstrapped}, such as the exponential moving average-based update of target encoder parameters.

\subsection{GCL framework designs}

\textbf{GRACE:} Figure \ref{fig:gcl_grace} illustrates the design for a GRACE \citep{zhu2020deep} inspired framework. The input graph $\gG=(\gV, \gE, w)$ is augmented by $T_1, T_2$ to generate views $\widetilde{\gG}_1, \widetilde{\gG}_2$. These graphs are encoded using a shared GNN $f$ and a shared MLP projector $z_n$ to generate node features. The node-level features across these two views are contrasted using the InfoNCE objective under `l-l' mode. We employ this design for node classification where congruent counterparts of nodes form the positive/similar samples and the rest of the nodes are selected as negative/dissimilar samples. We deviate from the design of \citet{zhu2020deep} by incorporating only inter-view dissimilar samples to reduce the computational overheads.

\begin{figure*}[ht]
\vskip 0.2in
\begin{center}
\centerline{\includegraphics[width=0.9\textwidth]{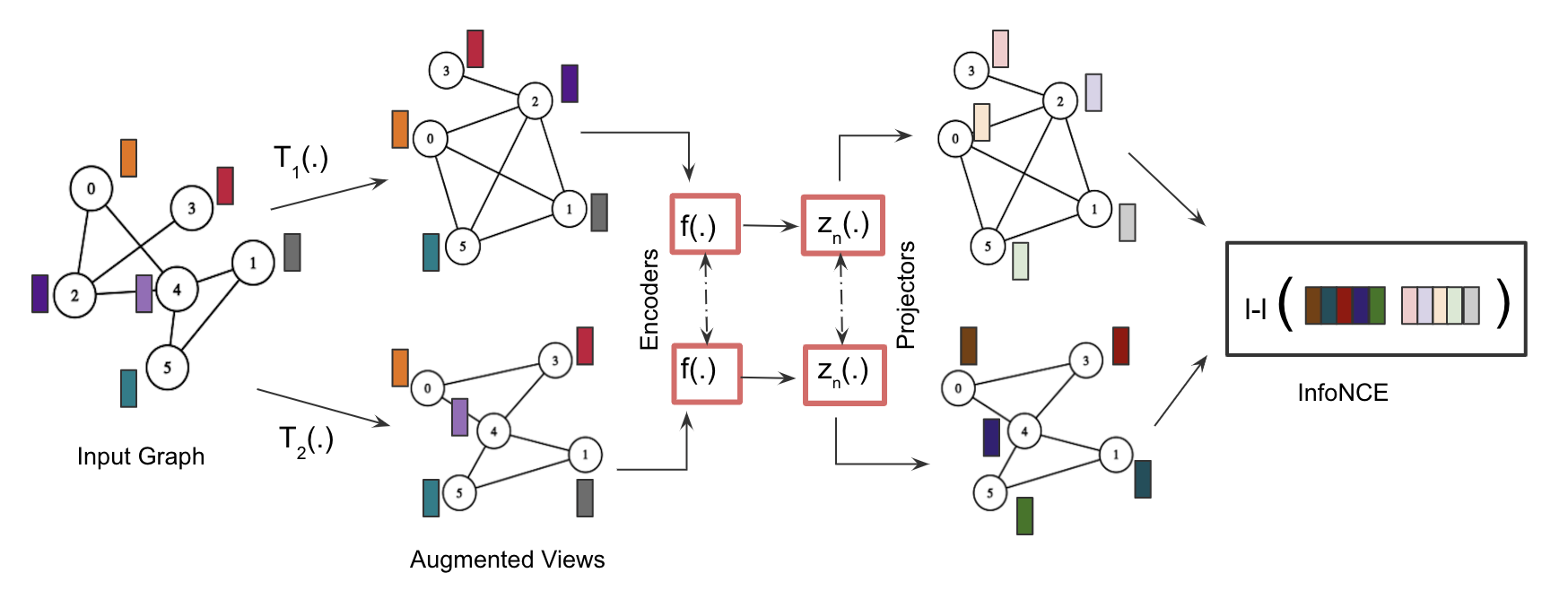}}
\caption{GRACE design with shared encoder $f$, shared node feature projector $z_n$, l-l contrastive mode and InfoNCE objective.} 
\label{fig:gcl_grace}
\end{center}
\vskip -0.2in
\end{figure*}

\textbf{MVGRL:} Figure \ref{fig:gcl_mvgrl} illustrates the design for a MVGRL \citep{hassani2020contrastive} inspired framework. The input graph $\gG=(\gV, \gE, w)$ is augmented by $T_1, T_2$ to generate views $\widetilde{\gG}_1, \widetilde{\gG}_2$. These graphs are encoded using dedicated GNNs $f_1, f_2$ and projected using a shared MLP $z_n$ to compute node features. The node embeddings computed by $f_1, f_2$ are aggregated using a permutation invariant function $\bigoplus$ (eg: mean) and projected using a shared MLP $z_g$ to obtain graph features. Finally, these graph and node-level features and contrasted using the JSD objective under `g-l' mode. In this mode, when only a single graph $\gG$ is available in the dataset, the positive samples for graph features of $\widetilde{\gG}_1$ include the node features of $\widetilde{\gG}_2$ with negative samples being their noisy versions. The same procedure is applied to the graph features of $\widetilde{\gG}_2$ and node features of $\widetilde{\gG}_1$ \citep{velickovic2019deep, hassani2020contrastive, zhu2021empirical}.

\begin{figure*}[ht]
\vskip 0.2in
\begin{center}
\centerline{\includegraphics[width=0.9\textwidth]{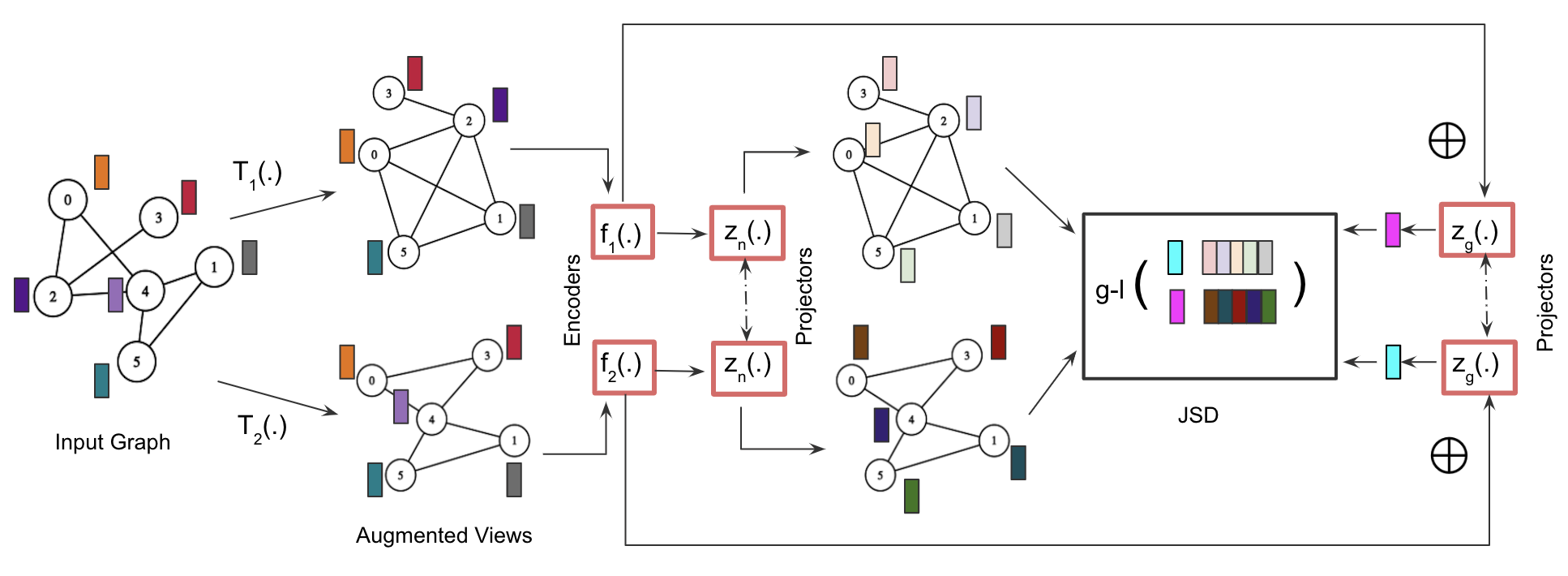}}
\caption{MVGRL design with dedicated encoders $f_1, f_2$, shared node feature projector $z_n$, shared graph feature projector $z_g$, g-l contrastive mode and JSD objective.} 
\label{fig:gcl_mvgrl}
\end{center}
\vskip -0.2in
\end{figure*}

\textbf{GraphCL:} Figure \ref{fig:gcl_graphcl} illustrates the design for a GraphCL \citep{you2020graph} inspired framework. The input graph $\gG=(\gV, \gE, w)$ is augmented by $T_1, T_2$ to generate views $\widetilde{\gG}_1, \widetilde{\gG}_2$. These graphs are encoded using a shared GNN $f$ to obtain the node embeddings. These node embeddings are aggregated using a permutation invariant function $\bigoplus$ (eg: mean) and projected using a shared MLP $z_g$ to obtain graph features. Finally, these graph features are contrasted using the InfoNCE objective under `g-g' mode. For the graph features of $\widetilde{\gG}_1$, the graph features of $\widetilde{\gG}_2$ are considered as positive samples and the graph features of other graphs in the training batch are treated as negative samples.
 
\begin{figure*}[ht]
\vskip 0.2in
\begin{center}
\centerline{\includegraphics[width=0.9\textwidth]{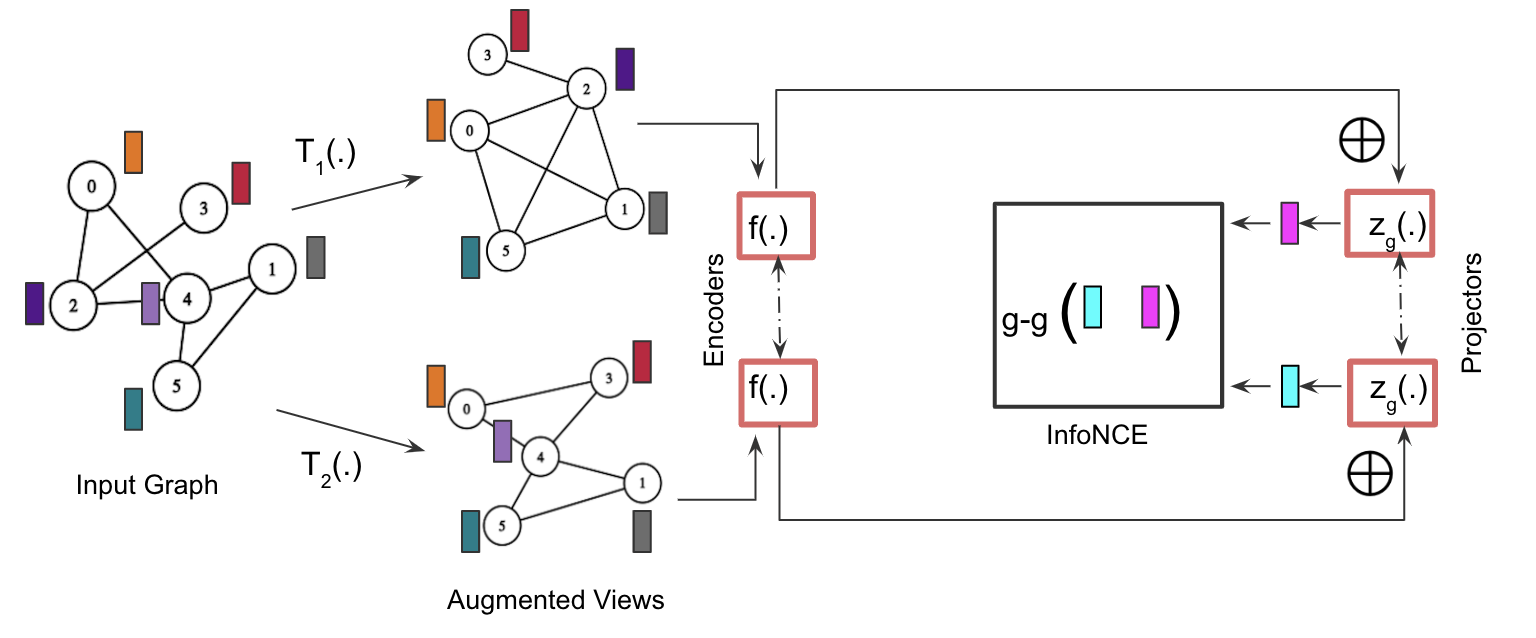}}
\caption{GraphCL design with shared encoder $f$, shared graph feature projector $z_g$, g-g contrastive mode and InfoNCE objective.} 
\label{fig:gcl_graphcl}
\end{center}
\vskip -0.2in
\end{figure*}

\textbf{BGRL:} Figure \ref{fig:gcl_bgrl_g2l} illustrates the design for a BGRL \citep{thakoor2021bootstrapped} inspired framework. The input graph $\gG=(\gV, \gE, w)$ is augmented by $T_1, T_2$ to generate views $\widetilde{\gG}_1, \widetilde{\gG}_2$. Unlike the previously mentioned frameworks, these graphs are encoded using an online and target GNNs $f_1, f_2$ to compute node embeddings. The online encoder $f_1$ computes node-level embeddings for both the views and projects them using an MLP $z_{n1}$. The target encoder $f_2$ and MLP projector $z_{n2}$ repeat the same procedure, followed by computing the graph-level features\footnote{Unlike MVGRL where MLP-based projections $z_{g1}, z_{g2}$ are employed, BGRL skips the graph feature projection step.} using a permutation invariant function $\bigoplus$ (eg: mean). Finally, these graph and node-level features and contrasted using the BL objective under `g-l' mode. The parameters of $f_2$ are updated based on an exponential moving average of $f_1$ parameters. This design is independent of negative samples and avoids the computational overheads of employing graph and node features across the training batch.

\begin{figure*}[ht]
\vskip 0.2in
\begin{center}
\centerline{\includegraphics[width=0.9\textwidth]{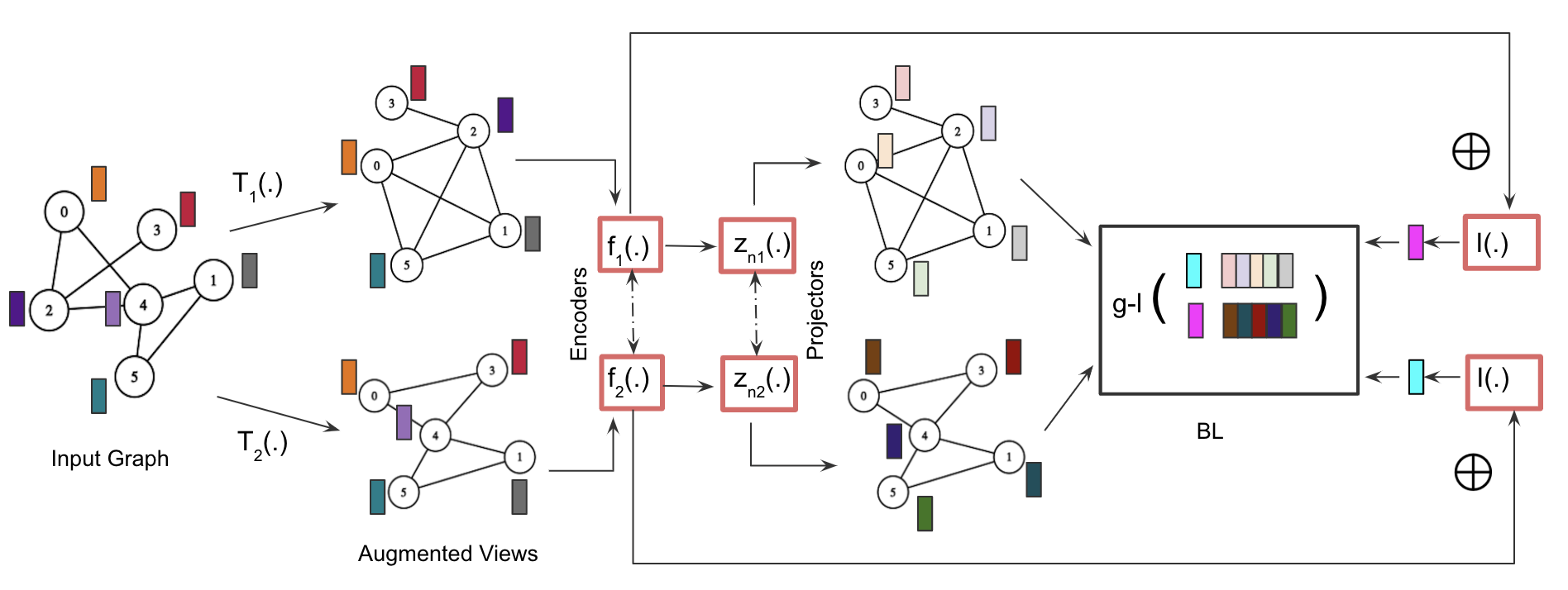}}
\caption{BGRL design with online, target encoders $f_1, f_2$, their node projectors $z_{n1}, z_{n2}$, g-l contrastive mode and BL objective.} 
\label{fig:gcl_bgrl_g2l}
\end{center}
\vskip -0.2in
\end{figure*}

\subsection{Evaluation protocols}

Generally, GCL frameworks differ in the choice of augmentors, encoder designs, contrastive modes and objective functions. Thus, it is unclear how the performance of an augmentor can be fairly tested. To address this issue, we fix as many configurations as possible and focus solely on the impact of the augmentor. Our reasoning is based on the well-established benchmark study by \citet{zhu2021empirical}. Adhering to this approach, we leverage the designs of widely adopted frameworks such as GRACE \citep{zhu2020deep}, MVGRL \citep{hassani2020contrastive}, GraphCL \citep{you2020graph} and BGRL \citep{thakoor2021bootstrapped}, where topological augmentation is necessary and evaluate on unsupervised node and graph classification tasks. These frameworks were chosen so as to extensively experiment on encoder designs, contrastive modes and objectives. We leverage graph convolution network (GCN) \citep{kipf2016semi} and graph isomorphism network (GIN) \cite{xu2018powerful} based encoders for node and graph-based tasks respectively. Also, based on the empirical benchmark observations in \citet{zhu2021empirical}, we perform feature masking with a constant masking ratio of $0.3$ in all the experiments and perform an extensive grid-search over the hyper-parameters to find their ideal values. The perturbation ratios $\gamma_1, \gamma_2$ are selected from $\{0.0, 0.1, 0.2, 0.3, 0.4, 0.5\}$ to prevent excessive graph corruption. The number of layers in the encoder are chosen from $\{ 2, 4, 8 \}$, hidden feature dimensions are chosen from $\{ 128, 256, 512 \}$, learning rate is chosen from $\{10^{-2}, 10^{-3}, 10^{-4}\}$, weight decay is set to $10^{-5}$ and the maximum epoch count is set to $2000$. An early stopping strategy with a patience interval of 50 epochs with respect to contrastive loss is also employed. Finally, we choose the Adam optimizer for learning the embeddings. For testing, we follow a linear evaluation protocol \citep{velickovic2019deep, zhu2021empirical} where the embeddings learned by the encoders in unsupervised settings are classified using a logistic regression classifier with a train, validation and test split of $10\%, 10\%, 80\%$ respectively. The process is repeated with $10$ random splits and the classification accuracies are reported. The selected hyper-parameters based on a grid search are reported in Table \ref{table:hp_grace}, \ref{table:hp_mvgrl}, \ref{table:hp_graphcl}, \ref{table:hp_bgrl} and can be reproduced with the provided code and instructions.

\begin{table}[H]
\centering
\caption{Hyper-parameters for GRACE-based design for node-classification}
\label{table:hp_grace}
\vskip 0.15in
\begin{center}
\begin{small}
\begin{sc}
\begin{tabular}{lccccccccr}
\toprule
Dataset & mode & loss & $\tau$ & $\gamma_1$ & $\gamma_2$ & L & lr & wd & hidden dim \\
\midrule
CORA & l-l & InfoNCE & $0.2$ & $0.5$ & $0.4$ & $2$ & $10^{-4}$ & $10^{-5}$ & $256$\\
AMAZON-PHOTO & l-l & InfoNCE & $0.2$ & $0.3$ & $0.4$ & $2$ & $10^{-2}$ & $10^{-5}$ & $256$\\ 
PUBMED & l-l & InfoNCE & $0.2$ & $0.4$ & $0.1$ & $2$ & $10^{-3}$ & $10^{-5}$ & $256$\\
COAUTHOR-CS & l-l & InfoNCE & $0.4$ & $0.5$ & $0.4$ & $2$ & $10^{-4}$ & $10^{-5}$ & $256$\\
COAUTHOR-PHY & l-l & InfoNCE & $0.4$ & $0.5$ & $0.5$ & $2$ & $10^{-2}$ & $10^{-5}$ & $128$\\
\bottomrule
\end{tabular}
\end{sc}
\end{small}
\end{center}
\vskip -0.1in
\end{table}

\begin{table}[H]
\centering
\caption{Hyper-parameters for MVGRL-based design for node-classification}
\label{table:hp_mvgrl}
\vskip 0.15in
\begin{center}
\begin{small}
\begin{sc}
\begin{tabular}{lccccccccr}
\toprule
Dataset & mode & loss & $\gamma_1$ & $\gamma_2$ & L & lr & wd & hidden dim \\
\midrule
CORA & g-l & JSD & $0.0$ & $0.4$ &  $2$ & $10^{-3}$ & $10^{-5}$ & $512$\\
AMAZON-PHOTO & g-l & JSD & $0.0$ & $0.3$ & $2$ & $10^{-4}$ & $10^{-5}$ & $512$\\
PUBMED & g-l & JSD & $0.1$ & $0.3$ & $2$ & $10^{-3}$ & $10^{-5}$ & $512$\\
COAUTHOR-CS & g-l & JSD & $0.0$ & $0.2$ & $2$ & $10^{-4}$ & $10^{-5}$ & $512$\\
COAUTHOR-PHY & g-l & JSD & $0.1$ & $0.4$ & $2$ & $10^{-3}$ & $10^{-5}$ & $512$\\
\bottomrule
\end{tabular}
\end{sc}
\end{small}
\end{center}
\vskip -0.1in
\end{table}

\begin{table}[H]
\centering
\caption{Hyper-parameters for GraphCL-based design for graph-classification}
\label{table:hp_graphcl}
\vskip 0.15in
\begin{center}
\begin{small}
\begin{sc}
\begin{tabular}{lccccccccr}
\toprule
Dataset & mode & loss & $\tau$ &$ \gamma_1$ & $\gamma_2$ & L & lr & wd & hidden dim \\
\midrule
PROTEINS & g-g & InfoNCE & $0.5$ & $0.2$ & $0.2$ & $2$ & $10^{-2}$ & $10^{-5}$ & $128$\\
IMDB-BINARY & g-g & InfoNCE & $0.5$ & $0.3$ & $0.5$ & $2$ & $10^{-2}$ & $10^{-5}$ & $128$\\
MUTAG & g-g & InfoNCE & $0.5$ & $0.2$ & $0.2$ & $2$ & $10^{-3}$ & $10^{-5}$ & $128$\\
IMDB-MULTI & g-g & InfoNCE & $0.5$ & $0.2$ & $0.2$ & $2$ & $10^{-2}$ & $10^{-5}$ & $128$\\
NCI1 & g-g & InfoNCE & $0.2$ & $0.2$ & $0.3$ & $2$ & $10^{-3}$ & $10^{-5}$ & $128$\\
\bottomrule
\end{tabular}
\end{sc}
\end{small}
\end{center}
\vskip -0.1in
\end{table}

\begin{table}[H]
\centering
\caption{Hyper-parameters for BGRL-based design for graph-classification}
\label{table:hp_bgrl}
\vskip 0.15in
\begin{center}
\begin{small}
\begin{sc}
\begin{tabular}{lccccccccr}
\toprule
Dataset & mode & loss & $\gamma_1$ & $\gamma_2$ & L & lr & wd & hidden dim \\
\midrule
PROTEINS & g-l & BL & $0.1$ & $0.1$ & $2$ & $10^{-2}$ & $10^{-5}$ & $256$\\
IMDB-BINARY & g-l & BL & $0.5$ & $0.3$ & $2$ & $10^{-3}$ & $10^{-5}$ & $128$\\
MUTAG & g-l & BL & $0.2$ & $0.2$ & $2$ & $10^{-3}$ & $10^{-5}$ & $128$\\
IMDB-MULTI & g-l & BL & $0.2$ & $0.2$ & $2$ & $10^{-3}$ & $10^{-5}$ & $128$\\
NCI1 & g-l & BL & $0.3$ & $0.1$ & $2$ & $10^{-3}$ & $10^{-5}$ & $256$\\
\bottomrule
\end{tabular}
\end{sc}
\end{small}
\end{center}
\vskip -0.1in
\end{table}

\subsection{Baselines}

We report the baseline node and graph classification performance from previously published reports \citep{hassani2020contrastive, you2020graph, zhu2021graph, xu2021infogcl}. For node classification, we leverage raw features \citep{velickovic2019deep}, DeepWalk \citep{perozzi2014deepwalk}, Graph Auto Encoder (GAE) \citep{kipf2016variational}, Deep Graph Infomax (DGI) \citep{velickovic2019deep}, GCN networks and Graph Attention Networks (GAT) \citep{velivckovic2017graph} (see Table \ref{table:node_baselines}). For graph classification, we leverage Graphlet Kernel (GK) \citep{shervashidze2009efficient}, Weisfeiler-Lehman Kernel (WL) \citep{shervashidze2011weisfeiler}, Deep Graph Kernel (DGK) \citep{yanardag2015deep}, Multi-scale Laplacian Graph Kernel (MLG) \citep{kondor2016multiscale}, node2vec \citep{grover2016node2vec}, sub2vec \citep{adhikari2018sub2vec}, graph2vec \citep{narayanan2017graph2vec}, InfoGraph \citep{sun2019infograph}, GIN, GCN and GAT networks (see Table \ref{table:graph_baselines}).

\begin{table*}[ht!]
\centering
\caption{Baseline node classification accuracies from published reports. }
\label{table:node_baselines}
\vskip 0.15in
\begin{center}
\begin{small}
\begin{sc}
\begin{tabular}{c|c|c|c|c|c}
\toprule
Method & CORA & Amazon-Photo & PubMed & Coauthor-CS & Coauthor-Phy \\
\midrule
Raw Feat & $47.90 \pm 0.40$ & $ 78.53 \pm 0.00$ & $ 69.10 \pm 0.30$ & $90.37 \pm 0.00$ & $93.58 \pm 0.00$  \\
DeepWalk & $67.20 \pm 0.00$ & $89.44 \pm 0.11$ & $65.30 \pm 0.00$ & $84.61 \pm 0.22$ & $91.77 \pm 0.15$ \\
DeepWalk+feat & $70.70 \pm 0.60$ & $90.05 \pm 0.08$ & $74.30 \pm 0.90$ & $87.70 \pm 0.04$ & $ 94.90 \pm 0.09$ \\
GAE & $71.50 \pm 0.40$ & $91.62 \pm 0.13$ & $72.10 \pm 0.50$ & $90.01 \pm 0.71$ & $94.92 \pm 0.07$ \\
DGI & $82.30 \pm 0.60$ & $ 91.61 \pm 0.22$ & $76.80 \pm 0.60$ & $92.15 \pm 0.63$ &  $94.51 \pm 0.52$ \\
\midrule
GCN & $81.50 \pm 0.00$ & $92.42 \pm 0.22$ & $79.0 \pm 0.00$ & $ 93.03 \pm 0.31$ & $95.65 \pm 0.16$  \\
GAT & $83.0 \pm 0.70$ & $92.56 \pm 0.35$ & $79.0 \pm 0.30$ & $92.31 \pm 0.24$ & $95.47 \pm 0.15$ \\
\bottomrule
\end{tabular}
\end{sc}
\end{small}
\end{center}
\vskip -0.1in
\end{table*}

\begin{table*}[ht!]
\centering
\caption{Baseline graph classification accuracies from published reports. }
\label{table:graph_baselines}
\vskip 0.15in
\begin{center}
\begin{small}
\begin{sc}
\begin{tabular}{c|c|c|c|c|c}
\toprule
Method & MUTAG & PROTEINS & IMDB-BINARY & IMDB-MULTI & NCI1 \\
\midrule
GK & $81.70 \pm 2.10$ & $-$ & $65.87 \pm 0.98$ & $43.90 \pm 0.40$ & $-$ \\
WL & $80.72 \pm 3.00$ & $72.92 \pm 0.56$ & $72.30 \pm 3.44$ & $47.0 \pm 0.50$ & $80.01 \pm 0.50$ \\
DGK  & $87.44 \pm 2.72$ & $73.30 \pm 0.82$ & $66.96 \pm 0.56$ & $44.60 \pm 0.50$ & $80.31 \pm 0.46$\\
MLG & $87.9 \pm 1.60$ & $-$ & $66.6 \pm 0.30$ & $41.2 \pm 0.00$ & $80.8 \pm 1.30$ \\
\midrule
node2vec  & $72.63 \pm 10.20$ & $57.49 \pm 3.57$ & $-$ & $-$ & $54.89 \pm 1.61$ \\
sub2vec & $61.05 \pm 15.80$ & $53.03 \pm 5.55$ & $55.26 \pm 1.54$ & $36.70 \pm 0.80$ & $52.84 \pm 1.47$ \\
graph2vec  & $83.15 \pm 9.25$ & $73.30 \pm 2.05$ & $71.10 \pm 0.54$ & $ 50.40 \pm 0.90$ & $73.22 \pm 1.81$ \\
InfoGraph & $89.01 \pm 1.13$ & $74.44 \pm 0.31$ & $73.03 \pm 0.87$ & $49.70 \pm 0.50$ & $76.20 \pm 1.06$ \\
\midrule
GIN & $89.4 \pm 5.60$ & $76.2 \pm 2.80$ & $ 75.1 \pm 5.10$ & $52.3 \pm 2.80$ & $ 82.7 \pm 1.70$ \\
GCN & $85.6 \pm 5.80$ & $75.2 \pm 3.60$ & $74.0 \pm 3.4$ & $51.9 \pm 3.8$ & $80.2 \pm 2.0$ \\
GAT & $89.4 \pm 6.10$ & $ 74.7 \pm 4.00$ & $70.5 \pm 2.30$ & $47.8 \pm 3.10$ & $66.6 \pm 2.20$ \\
\bottomrule
\end{tabular}
\end{sc}
\end{small}
\end{center}
\vskip -0.1in
\end{table*}

\section{Augmentor Overheads}
\label{appendix:aug_overhead}

We establish an augmentor overhead benchmark in terms of memory usage and latency for an extensive range of techniques on the benchmark datasets. Specifically, we measure:

\textbf{MEMORY:} The memory (in MB) that is required solely by the augmentation phase.

\textbf{LATENCY(CPU):} The wall-clock time (in seconds) for performing the augmentation.

\textbf{LATENCY(w/ GPU):} The wall-clock time (in seconds) for performing the augmentation when GPU is available.

We run $10$ experiments for each augmentor-dataset combination with a fixed perturbation ratio of $0.5$ (and use a batch size of $128$ for graph datasets) to report the mean and standard deviation of the metrics in Table \ref{table:node_aug_stats}, \ref{table:graph_aug_stats}. From Table \ref{table:node_aug_stats}, we can observe that simple NodeDropping and EdgeDropping schemes are computationally the most efficient, with EdgeDroppingEVC and diffusion-based approaches being the least. The code for EdgeDroppingEVC has been adopted from the original paper by \citet{zhu2021graph} and leverages the NetworkX library to compute centrality scores. Furthermore, since this approach lacks GPU support, the hardware acceleration is minimal. In fact, the overheads of data transfers from GPU to CPU and the construction of NetworkX graphs significantly increase the memory consumption and latencies for this technique. Similar patterns can be observed for other augmentors as well where the GPU-CPU overheads dominate the parallelization benefits. On the contrary, diffusion-based approaches leverage GPU acceleration for heavy matrix operations and gain significant speedups to achieve reasonable latencies. A similar pattern can be observed in Table \ref{table:graph_aug_stats} for graph datasets with a batch size of $128$. Note that some of the results show $0$ variance due to numerical rounding.

The overheads of $rLap$ in Table \ref{table:node_aug_stats}, \ref{table:graph_aug_stats} lie within a very narrow margin of the most efficient techniques such as EdgeDropping, NodeDropping and RandomWalkSubgraph. These low latencies are achieved by designing the augmentor in C++ and minimizing the interaction with Python APIs. Additionally, we compile Eigen without MKL optimizations and also do not support a GPU kernel for $rLap$ yet, which explains the lack of performance improvement when hardware acceleration is available. However, we tend to achieve minimal resource consumption by just leveraging the C++ APIs to the fullest. We leave the development of GPU support for $rLap$ as future work.

\begin{table}[H]
\centering
\caption{Statistics of resource consumption by augmentors on node classification datasets.}
\label{table:node_aug_stats}
\vskip 0.15in
\begin{center}
\begin{small}
\begin{sc}
\begin{tabular}{llll|l}
\toprule
         augmentor &      dataset &                memory &           latency(cpu) &          latency(w/ gpu) \\
\midrule

      EdgeAddition &         CORA &       $3.23 \pm 0.09$ &      $0.0024 \pm 0.0$ &  $0.0033 \pm 0.0001$ \\
      EdgeDropping &         CORA &     $0.51 \pm 0.0539$ &      $0.0004 \pm 0.0$ &  $0.0019 \pm 0.0001$ \\
EdgeDroppingDegree &         CORA &     $3.85 \pm 1.1307$ &     $0.003 \pm 0.001$ &  $0.0046 \pm 0.0032$ \\
   EdgeDroppingEVC &         CORA &      $39.58 \pm 0.14$ &   $0.2214 \pm 0.0038$ &  $0.3542 \pm 0.0021$ \\
    EdgeDroppingPR &         CORA &     $3.18 \pm 0.0872$ &   $0.0023 \pm 0.0001$ &     $0.0037 \pm 0.0001$ \\
   MarkovDiffusion &         CORA &  $228.73 \pm 38.1103$ &   $0.7966 \pm 0.1115$ &  $0.2539 \pm 0.0519$ \\
      NodeDropping &         CORA &     $1.31 \pm 1.3729$ &      $0.0006 \pm 0.0$ &  $0.0026 \pm 0.0008$ \\
      PPRDiffusion &         CORA &    $140.0 \pm 0.1342$ &   $0.2531 \pm 0.0017$ &  $0.5933 \pm 0.0589$ \\
RandomWalkSubgraph &         CORA &      $3.26 \pm 0.102$ &       $0.001 \pm 0.0$ &  $0.0027 \pm 0.0001$ \\
         
              rLap &         CORA &       $0.38 \pm 0.04$ &   $0.0037 \pm 0.0005$ &  $0.0048 \pm 0.0002$ \\
              \midrule
              EdgeAddition & AMAZON-PHOTO &    $20.94 \pm 5.4471$ &   $0.0368 \pm 0.0017$ &  $0.0071 \pm 0.0003$ \\
      EdgeDropping & AMAZON-PHOTO &     $5.64 \pm 0.5953$ &   $0.0032 \pm 0.0015$ &  $0.0022 \pm 0.0001$ \\
EdgeDroppingDegree & AMAZON-PHOTO &    $26.13 \pm 4.2223$ &   $0.0391 \pm 0.0007$ &  $0.0041 \pm 0.0001$ \\
   EdgeDroppingEVC & AMAZON-PHOTO &      $2264.1 \pm 0.004$ &       $1.554 \pm 0.012$ &   $1.4576 \pm 0.024$ \\
    EdgeDroppingPR & AMAZON-PHOTO &    $11.32 \pm 1.4999$ &   $0.0094 \pm 0.0001$ &     $0.0043 \pm 0.0003$ \\
   MarkovDiffusion & AMAZON-PHOTO &   $158.42 \pm 3.0019$ &   $30.6898 \pm 0.127$ &  $1.7687 \pm 0.0044$ \\
      NodeDropping & AMAZON-PHOTO &       $1.53 \pm 0.19$ &   $0.0017 \pm 0.0012$ &  $0.0028 \pm 0.0001$ \\
      PPRDiffusion & AMAZON-PHOTO &   $389.33 \pm 2.1808$ &   $1.6942 \pm 0.0247$ &  $0.8229 \pm 0.0112$ \\
RandomWalkSubgraph & AMAZON-PHOTO &     $7.26 \pm 0.1356$ &   $0.0024 \pm 0.0001$ &  $0.0029 \pm 0.0007$ \\
              rLap & AMAZON-PHOTO &     $32.6 \pm 0.5119$ &   $0.0848 \pm 0.0023$ &  $0.0891 \pm 0.0014$ \\
              \midrule

      EdgeAddition &       PUBMED &    $10.93 \pm 0.7785$ &   $0.0126 \pm 0.0002$ &  $0.0045 \pm 0.0001$ \\
      EdgeDropping &       PUBMED &     $1.89 \pm 0.1136$ &    $0.002 \pm 0.0003$ &  $0.0022 \pm 0.0001$ \\
EdgeDroppingDegree &       PUBMED &    $15.45 \pm 2.1238$ &     $0.015 \pm 0.001$ &  $0.0032 \pm 0.0003$ \\
   EdgeDroppingEVC &       PUBMED &  $2236.23 \pm 0.7537$ &   $1.2787 \pm 0.0152$ &  $1.3101 \pm 0.0156$ \\
    EdgeDroppingPR &       PUBMED &     $4.79 \pm 0.2071$ &   $0.0059 \pm 0.0005$ &  $0.0043 \pm 0.0003$ \\
   MarkovDiffusion &       PUBMED &   $152.87 \pm 2.0995$ &  $44.4244 \pm 0.1306$ &   $2.5587 \pm 0.0054$ \\
      NodeDropping &       PUBMED &      $1.78 \pm 0.172$ &   $0.0021 \pm 0.0002$ &     $0.0036 \pm 0.0002$ \\
      PPRDiffusion &       PUBMED &   $438.42 \pm 1.8595$ &  $19.0975 \pm 0.3605$ &  $2.5488 \pm 0.0135$ \\
RandomWalkSubgraph &       PUBMED &     $1.74 \pm 0.1625$ &   $0.0044 \pm 0.0003$ &   $0.003 \pm 0.0001$ \\
              rLap &       PUBMED &    $10.59 \pm 3.1908$ &   $0.0269 \pm 0.0008$ &  $0.0315 \pm 0.0039$ \\
\midrule
      EdgeAddition &  COAUTHOR-CS &    $16.71 \pm 3.0409$ &   $0.0286 \pm 0.0015$ &  $0.0057 \pm 0.0003$ \\
      EdgeDropping &  COAUTHOR-CS &     $3.02 \pm 0.5758$ &    $0.007 \pm 0.0018$ &  $0.0025 \pm 0.0003$ \\
EdgeDroppingDegree &  COAUTHOR-CS &     $15.1 \pm 2.3078$ &   $0.0363 \pm 0.0014$ &  $0.0037 \pm 0.0001$ \\
   EdgeDroppingEVC &  COAUTHOR-CS &   $1044.6 \pm 0.5745$ &  $10.2603 \pm 0.0266$ &  $10.6018 \pm 0.1096$ \\
    EdgeDroppingPR &  COAUTHOR-CS &      $5.09 \pm 0.359$ &   $0.0114 \pm 0.0003$ &   $0.0052 \pm 0.001$ \\
   MarkovDiffusion &  COAUTHOR-CS & $177.9667 \pm 7.0002$ &  $66.5972 \pm 0.1857$ &  $3.3583 \pm 0.1758$ \\
      NodeDropping &  COAUTHOR-CS &     $2.33 \pm 0.1269$ &   $0.0035 \pm 0.0004$ &  $0.0033 \pm 0.0001$ \\
      PPRDiffusion &  COAUTHOR-CS &   $416.58 \pm 3.9992$ &  $46.9031 \pm 1.4907$ &  $2.2614 \pm 0.1293$ \\
RandomWalkSubgraph &  COAUTHOR-CS &     $5.63 \pm 0.1792$ &   $0.0066 \pm 0.0002$ &   $0.003 \pm 0.0001$ \\
  
              rLap &  COAUTHOR-CS &     $13.6 \pm 0.9798$ &   $0.0508 \pm 0.0007$ & $0.0602 \pm 0.002$ \\
              \midrule
      EdgeAddition & COAUTHOR-PHY &    $44.59 \pm 8.3973$ &   $0.0975 \pm 0.0048$ &  $0.0121 \pm 0.0019$ \\
      EdgeDropping & COAUTHOR-PHY &    $11.57 \pm 1.5685$ &   $0.0098 \pm 0.0009$ &      $0.002 \pm 0.0001$ \\
EdgeDroppingDegree & COAUTHOR-PHY &     $20.7 \pm 0.1483$ &   $0.1219 \pm 0.0025$ &     $0.0046 \pm 0.0003$ \\
   EdgeDroppingEVC & COAUTHOR-PHY &  $2474.86 \pm 0.4294$ &  $28.2138 \pm 0.0304$ & $28.9077 \pm 0.2304$ \\
    EdgeDroppingPR & COAUTHOR-PHY &    $16.96 \pm 1.8205$ &   $0.0311 \pm 0.0002$ &   $0.0041 \pm 0.002$ \\
   MarkovDiffusion & COAUTHOR-PHY &       $316.0 \pm 0.4$ & $316.1449 \pm 0.1584$ & $15.1789 \pm 0.2203$ \\
      NodeDropping & COAUTHOR-PHY &     $4.08 \pm 1.7831$ &   $0.0067 \pm 0.0004$ &   $0.0046 \pm 0.0001$ \\
      PPRDiffusion & COAUTHOR-PHY &     $695.0 \pm 2.397$ &  $124.368 \pm 2.9031$ &  $10.394 \pm 0.2053$ \\
RandomWalkSubgraph & COAUTHOR-PHY &     $7.85 \pm 0.9563$ &    $0.014 \pm 0.0003$ &  $0.0057 \pm 0.0047$ \\
            
              rLap & COAUTHOR-PHY &    $43.9 \pm 11.3004$ &    $0.191 \pm 0.0258$ &  $0.2119 \pm 0.0064$ \\
\bottomrule
\end{tabular}
\end{sc}
\end{small}
\end{center}
\vskip -0.1in
\end{table}

\begin{table}[H]
\centering
\caption{Statistics of resource consumption by augmentors on graph classification datasets.}
\label{table:graph_aug_stats}
\vskip 0.15in
\begin{center}
\begin{small}
\begin{sc}
\begin{tabular}{llll|l}
\toprule
         augmentor &      dataset &                memory &           latency(cpu) &          latency(w/ gpu) \\
\midrule
      EdgeAddition &  IMDB-BINARY &     $4.78 \pm 0.3628$ &   $0.0307 \pm 0.0003$ &  $0.0191 \pm 0.0121$ \\
      EdgeDropping &  IMDB-BINARY &       $2.15 \pm 0.75$ &   $0.0079 \pm 0.0038$ &  $0.0038 \pm 0.0004$ \\
EdgeDroppingDegree &  IMDB-BINARY &     $7.51 \pm 0.6978$ &   $0.0346 \pm 0.0002$ &   $0.0121 \pm 0.001$ \\
   EdgeDroppingEVC &  IMDB-BINARY &  $2101.35 \pm 2.3127$ &   $0.7977 \pm 0.0068$ &  $0.9294 \pm 0.0138$ \\
    EdgeDroppingPR &  IMDB-BINARY &     $2.27 \pm 0.0458$ &   $0.0184 \pm 0.0002$ &  $0.0166 \pm 0.0001$ \\
   MarkovDiffusion &  IMDB-BINARY &  $195.43 \pm 60.3246$ &   $8.2739 \pm 0.0466$ &   $0.4589 \pm 0.006$ \\
      NodeDropping &  IMDB-BINARY &      $2.03 \pm 0.064$ &   $0.0046 \pm 0.0001$ &  $0.0032 \pm 0.0001$ \\
      PPRDiffusion &  IMDB-BINARY &   $170.13 \pm 22.767$ &    $0.658 \pm 0.0068$ &  $0.5295 \pm 0.0105$ \\
RandomWalkSubgraph &  IMDB-BINARY &     $2.09 \pm 0.1136$ &   $0.0063 \pm 0.0001$ &  $0.0096 \pm 0.0016$ \\            
              rLap &  IMDB-BINARY &     $6.96 \pm 0.3583$ &   $0.0423 \pm 0.0027$ &  $0.0412 \pm 0.0009$ \\
              \midrule
      EdgeAddition &   IMDB-MULTI &     $5.25 \pm 1.0249$ &   $0.0359 \pm 0.0017$ &  $0.0193 \pm 0.0001$ \\
      EdgeDropping &   IMDB-MULTI &     $2.18 \pm 0.1327$ &      $0.0064 \pm 0.0$ &  $0.0046 \pm 0.0001$ \\
EdgeDroppingDegree &   IMDB-MULTI &     $8.72 \pm 1.1232$ &   $0.0365 \pm 0.0001$ &  $0.0157 \pm 0.0001$ \\
   EdgeDroppingEVC &   IMDB-MULTI &  $2119.91 \pm 2.3364$ &    $0.806 \pm 0.0104$ &  $0.8579 \pm 0.0062$ \\
    EdgeDroppingPR &   IMDB-MULTI &     $4.45 \pm 0.7159$ &   $0.0221 \pm 0.0003$ &  $0.0236 \pm 0.0004$ \\
   MarkovDiffusion &   IMDB-MULTI &   $28.02 \pm 97.7836$ &   $5.5597 \pm 0.0245$ &   $0.386 \pm 0.0046$ \\
      NodeDropping &   IMDB-MULTI &     $2.17 \pm 0.1005$ &   $0.0054 \pm 0.0001$ &  $0.0045 \pm 0.0012$ \\
      PPRDiffusion &   IMDB-MULTI &   $98.67 \pm 41.6684$ &   $0.3907 \pm 0.0183$ &  $0.7974 \pm 0.0134$ \\
RandomWalkSubgraph &   IMDB-MULTI &     $4.07 \pm 0.9144$ &   $0.0075 \pm 0.0001$ &  $0.0107 \pm 0.0006$ \\
              rLap &   IMDB-MULTI &     $7.93 \pm 0.6165$ &   $0.0421 \pm 0.0023$ &  $0.0415 \pm 0.0009$ \\
              \midrule
      EdgeAddition &        MUTAG &       $3.22 \pm 0.14$ &   $0.0018 \pm 0.0001$ &  $0.0046 \pm 0.0001$ \\
      EdgeDropping &        MUTAG &     $1.61 \pm 0.0943$ &      $0.0004 \pm 0.0$ &  $0.0025 \pm 0.0001$ \\
EdgeDroppingDegree &        MUTAG &     $4.01 \pm 0.7713$ &   $0.0021 \pm 0.0001$ &  $0.0047 \pm 0.0004$ \\
   EdgeDroppingEVC &        MUTAG &  $2118.92 \pm 5.3619$ &   $0.0445 \pm 0.0005$ &  $0.0683 \pm 0.0061$ \\
    EdgeDroppingPR &        MUTAG &     $3.35 \pm 0.1118$ &   $0.0025 \pm 0.0001$ &   $0.006 \pm 0.0004$ \\
   MarkovDiffusion &        MUTAG &   $164.0 \pm 24.8769$ &   $0.3848 \pm 0.0067$ &  $0.1774 \pm 0.0075$ \\
      NodeDropping &        MUTAG &     $1.61 \pm 0.0943$ &      $0.0008 \pm 0.0$ &  $0.0033 \pm 0.0001$ \\
      PPRDiffusion &        MUTAG &   $117.04 \pm 0.2973$ &   $0.0829 \pm 0.0019$ &   $0.602 \pm 0.0101$ \\
RandomWalkSubgraph &        MUTAG &      $2.7 \pm 0.3821$ &      $0.0009 \pm 0.0$ &  $0.0044 \pm 0.0008$ \\          
              rLap &        MUTAG &     $1.62 \pm 0.3682$ &      $0.0021 \pm 0.0$ &  $0.0041 \pm 0.0001$ \\
              \midrule
      EdgeAddition &     PROTEINS &     $6.68 \pm 0.3124$ &   $0.0288 \pm 0.0014$ &  $0.0153 \pm 0.0013$ \\
      EdgeDropping &     PROTEINS &       $2.58 \pm 0.06$ &   $0.0058 \pm 0.0012$ &   $0.004 \pm 0.0001$ \\
EdgeDroppingDegree &     PROTEINS &    $10.48 \pm 1.6024$ &   $0.0319 \pm 0.0003$ &  $0.0127 \pm 0.0002$ \\
   EdgeDroppingEVC &     PROTEINS &  $2095.08 \pm 0.6554$ &   $1.1132 \pm 0.0209$ &  $1.3092 \pm 0.2126$ \\
    EdgeDroppingPR &     PROTEINS &      $5.27 \pm 0.064$ &   $0.0195 \pm 0.0018$ &  $0.0182 \pm 0.0001$ \\
   MarkovDiffusion &     PROTEINS &  $203.03 \pm 84.7664$ &  $25.7943 \pm 0.1671$ &  $1.1789 \pm 0.0087$ \\
      NodeDropping &     PROTEINS &      $2.0 \pm 0.6678$ &   $0.0079 \pm 0.0019$ &  $0.0062 \pm 0.0001$ \\
      PPRDiffusion &     PROTEINS &  $295.79 \pm 26.6581$ &   $4.8792 \pm 0.0261$ &   $1.6547 \pm 0.549$ \\
RandomWalkSubgraph &     PROTEINS &      $2.23 \pm 0.064$ &   $0.0074 \pm 0.0016$ &  $0.0099 \pm 0.0001$ \\
              rLap &     PROTEINS &     $6.45 \pm 0.3442$ &   $0.0405 \pm 0.0024$ &  $0.0407 \pm 0.0009$ \\
              \midrule
      EdgeAddition &         NCI1 &     $4.26 \pm 0.5122$ &   $0.0517 \pm 0.0006$ &   $0.046 \pm 0.0015$ \\
      EdgeDropping &         NCI1 &     $1.72 \pm 0.6447$ &   $0.0108 \pm 0.0001$ &  $0.0085 \pm 0.0002$ \\
EdgeDroppingDegree &         NCI1 &     $7.04 \pm 0.6296$ &   $0.0574 \pm 0.0013$ &  $0.0389 \pm 0.0009$ \\
   EdgeDroppingEVC &         NCI1 &  $2071.43 \pm 5.9791$ &   $1.9465 \pm 0.1448$ &  $2.3067 \pm 0.0682$ \\
    EdgeDroppingPR &         NCI1 &     $4.11 \pm 0.1136$ &   $0.0548 \pm 0.0064$ &  $0.0603 \pm 0.0017$ \\
   MarkovDiffusion &         NCI1 &   $141.825 \pm 6.632$   &  $37.7575 \pm 0.3818$ &  $2.0278 \pm 0.0061$   \\
      NodeDropping &         NCI1 &     $2.22 \pm 0.1249$ &    $0.019 \pm 0.0027$ &  $0.0219 \pm 0.0008$ \\
      PPRDiffusion &         NCI1 &   $157.16 \pm 8.8786$ &   $8.8779 \pm 0.0595$ &   $2.4582 \pm 0.011$ \\
RandomWalkSubgraph &         NCI1 &     $2.68 \pm 0.5862$ &   $0.0181 \pm 0.0008$ &  $0.0229 \pm 0.0014$ \\
              rLap &         NCI1 &     $2.82 \pm 0.5474$ &    $0.074 \pm 0.0003$ &  $0.0801 \pm 0.0011$ \\
\bottomrule
\end{tabular}
\end{sc}
\end{small}
\end{center}
\vskip -0.1in
\end{table}
\pagebreak

\section{Additional Experiments}
\label{app:add_exp}

\subsection{PPR Diffusion and rLap}

In our experiments, we observed that PPRDiffusion on medium scale graphs and MVGRL design leads to OOM and requires sub-graph sampling to proceed with the GCL phases. To this end, we observed a decrease in unsupervised node classification performance when the size of the sub-graph decreased (see Table \ref{table:results_mvgrl_subgraph_size}). As the nodes for a sub-graph are randomly selected, one can't guarantee that the sub-graph will capture useful structural information for the GNN encoders to exploit. In a worst case scenario, if all the nodes of a sub-graph are disconnected, the GNN encoder essentially acts as an MLP. Thus, as the size of the sub-graph reduces, we can expect a heavy corruption of topological information in the augmented views.

\begin{table*}[ht!]
\centering
\caption{Evaluation (in accuracy) on benchmark node datasets with \textbf{MVGRL} based design and PPR Diffusion with varying sub-graph sizes. PPRDiffusion indicates the full graph view and PPRDiffusion-$x$ indicates a sampled sub-graph with $x$ nodes as the graph view. }
\label{table:results_mvgrl_subgraph_size}
\vskip 0.15in
\begin{center}
\begin{small}
\begin{sc}
\begin{tabular}{c|c|c|c|c|c}
\toprule
Augmentor & CORA & Amazon-Photo & PubMed & Coauthor-CS & Coauthor-Phy \\
\midrule
PPRDiffusion & $84.12 \pm 2.72$  & $90.65 \pm 1.01$  & OOM & OOM & OOM \\
PPRDiffusion-$8192$ & $-$  & $-$  & $82.7 \pm 0.85$ & $90.9 \pm 1.06$ & $94.03 \pm 0.5$ \\
PPRDiffusion-$4096$ & $-$ & $87.38 \pm 0.91$ & $80.79 \pm 1.4$ & $89.0 \pm 0.83$ & $92.52 \pm 0.39$ \\
PPRDiffusion-$2048$ & $80.93 \pm 2.33$ & $85.65 \pm 1.41$ & $76.4 \pm 0.88$ & $88.43 \pm 0.77$ & $89.24 \pm 0.69$ \\
\bottomrule
\end{tabular}
\end{sc}
\end{small}
\end{center}
\vskip -0.1in
\end{table*}

\subsubsection{Reducing computational overheads}

For graphs such as COAUTHOR-PHY, the computational overheads of PPRDiffusion are relatively significant when compared to other augmentors (see Table \ref{table:node_aug_stats}). The standard practice to avoid such bottlenecks is to cache the output of the diffusion operator $\mS^{PPR}$ and reuse it for sampling sub-graphs. To further reduce the overheads of computing $\mS^{PPR}$ prior to caching, we leverage Theorem \ref{thm:theta_schur}.

\textbf{Sketch:} The COAUTHOR-PHY graph $\gG = (\gV, \gE)$ represents $|\gV| = 34,493$ nodes and $|\gE| = 495,924$ edges. To randomly sample a sub-graph of size $8192$ from its diffused variant $\mS^{PPR}$, one can compute a randomized Schur complement $\mR_{\gamma|\gV|}$ using $rLap$ with $\gamma=0.5$, apply the PPR diffusion operator on $\mR_{\gamma|\gV|}$ to obtain $\widetilde{\mS}^{PPR}_{\gamma}$ and cache it. $\widetilde{\mS}^{PPR}_{\gamma}$ represents $\approx 17,000$ nodes from which a sub-graph of size $8192$ can be sampled with relatively less overheads.

We benchmark the computational overheads of:

\begin{itemize}
    \item PPRDiffusion on COAUTHOR-PHY graph followed by random sampling of a sub-graph with $8192$ nodes.
    \item $rLap$ on COAUTHOR-PHY graph with $\gamma=0.5, o_v=rand, o_n=asc$ followed by PPRDiffusion and random sampling of a sub-graph with $8192$ nodes.
\end{itemize}

The results in Figure \ref{fig:ppr_rlapppr_overheads} demonstrate the effectiveness of the latter approach in achieving $\approx 0.25 \times$ reduction in memory consumption, $\approx 10 \times$ reduction in CPU only latencies and $\approx 5 \times$ reduction in latencies when GPU is available. We also compared the unsupervised node classification accuracy and observed a slight performance improvement of $\approx 1\%$. 

\textbf{Limitations and Tradeoffs:} From Algorithm \ref{alg:rlap}, notice that after every outer loop iteration, the number of edges in the graph representing the current Schur complement state will be less than or equal to the number of edges at the beginning of the iteration. To quantify the impact of diffusion on these randomized Schur complements, we measured the edge counts of sub-graphs of size $8192$ randomly sampled from the following graphs \footnote{We consider the $ o_v=rand, o_n=asc$ variant of $rLap$, which is the default setting. }:

\begin{itemize}
    \item \textbf{COAUTHOR-CS:PPR}: PPRDiffusion applied on COAUTHOR-CS.
    \item \textbf{COAUTHOR-CS:rLapPPR}: $rLap$ on COAUTHOR-CS with $\gamma=0.5$ followed by PPRDiffusion.
    \item \textbf{COAUTHOR-PHY:PPR}: PPRDiffusion applied on COAUTHOR-PHY.
    \item \textbf{COAUTHOR-PHY:rLapPPR}: $rLap$ on COAUTHOR-PHY with $\gamma=0.5$ followed by PPRDiffusion.
\end{itemize}

The results in Figure \ref{fig:rlap_rlapppr_subgraph_edge_0_5} show that the randomly sampled sub-graph from \textbf{COAUTHOR-CS:rLapPPR} has $\approx 2.5\times$ the number of edges in \textbf{COAUTHOR-CS:PPR}. Similarly, the sub-graph from \textbf{COAUTHOR-PHY:rLapPPR} has $\approx 1.5\times$ the number of edges in \textbf{COAUTHOR-PHY:PPR}. Thus, although one can save memory and time to address the bottlenecks of PPRDiffusion, the GNN encoders would eventually consume additional memory due to the message passing operations on relatively more edges. 

In such situations, if training latencies take precedence over memory constraints, then applying $rLap$ and proceeding with PPRDiffusion will lead to better training throughput. Else, based on the edge count pattern in Figure \ref{fig:rlap_rlapppr_subgraph_edge_gamma_variants}, the practitioner can use a lower $\gamma$ value and consider the trade-offs between augmentation latencies and available memory as per their setup.

\begin{figure}[ht]
\vskip 0.2in
\begin{center}
\begin{tabular}{cc}
\centering
  \includegraphics[width=70mm]{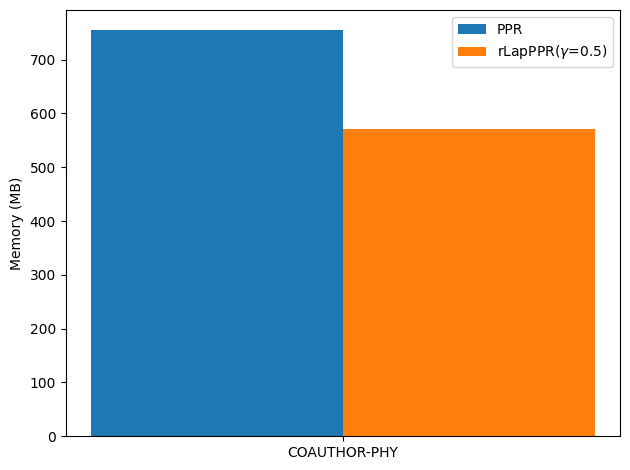} &   \includegraphics[width=70mm]{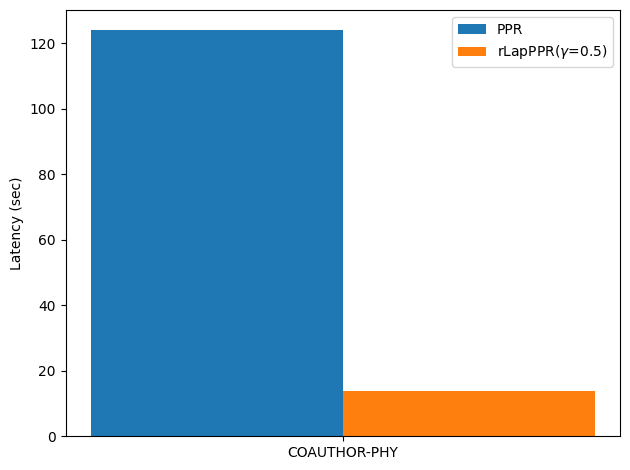} \\
(a) Memory (MB) overhead & (b) Latency (sec) overhead  \\[2pt]
 \includegraphics[width=70mm]{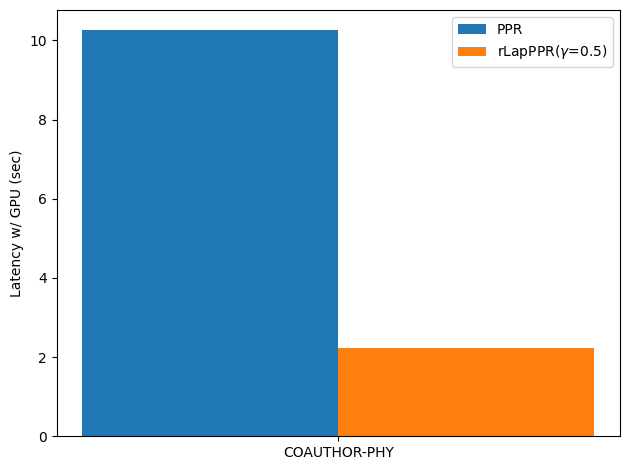} &   \includegraphics[width=70mm]{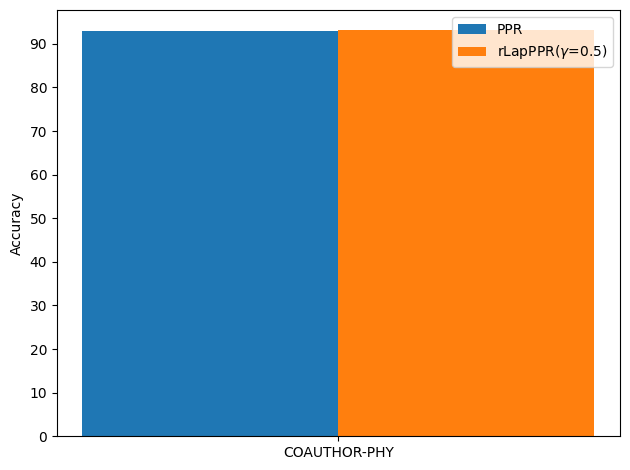} \\
(c)  Latency w/ GPU (sec) overhead  & (d) Node classification accuracy \\[2pt]
\end{tabular}
\caption{Computational overheads and unsupervised node classification accuracy of PPRDiffusion and $rLap$ + PPRDiffusion with $\gamma=0.5$ on COAUTHOR-PHY dataset.}
\label{fig:ppr_rlapppr_overheads}
\end{center}
\vskip -0.2in
\end{figure}

\begin{figure}[ht]
\vskip 0.2in
\begin{center}
\centerline{\includegraphics[width=0.5\textwidth]{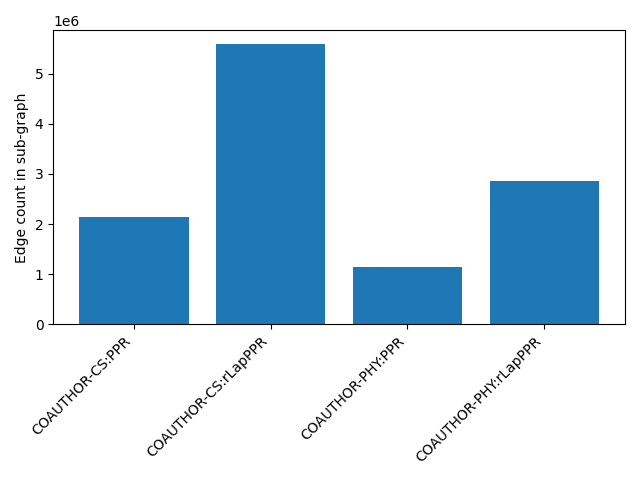}}
\caption{Plots of edge counts of randomly sampled sub-graphs of size $8192$ with $\gamma=0.5$ for $rLap$ based techniques.} 
\label{fig:rlap_rlapppr_subgraph_edge_0_5}
\end{center}
\vskip -0.2in
\end{figure}

\begin{figure}[ht]
\vskip 0.2in
\begin{center}
\begin{tabular}{cc}
\centering
  \includegraphics[width=70mm]{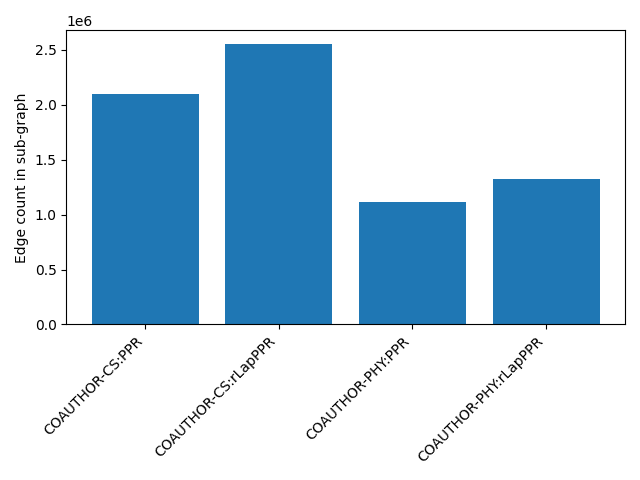} &   \includegraphics[width=70mm]{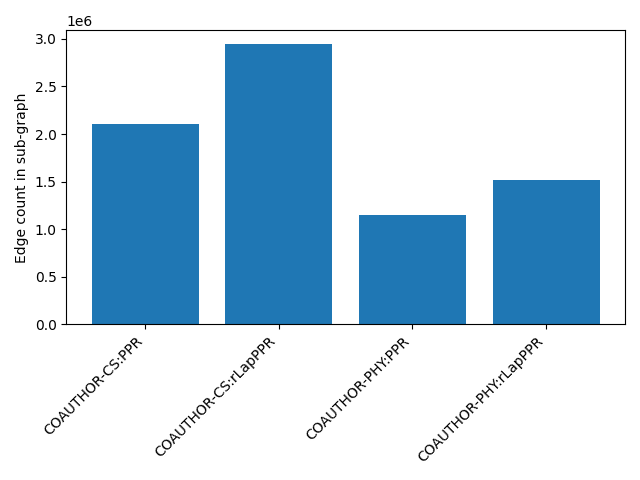} \\
(a) $\gamma=0.1$ & (b) $\gamma=0.2$  \\[2pt]
 \includegraphics[width=70mm]{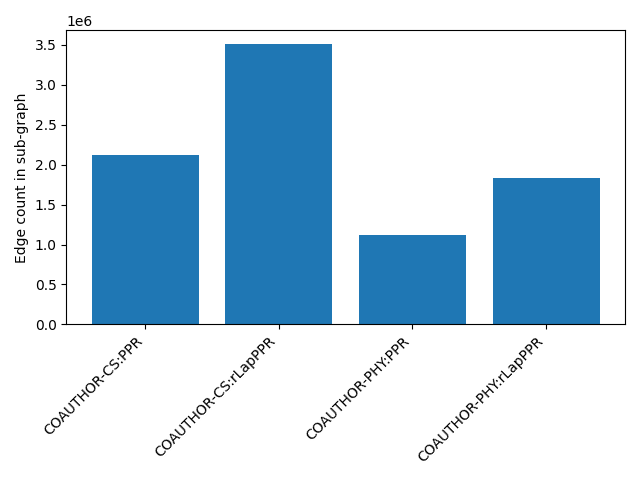} &   \includegraphics[width=70mm]{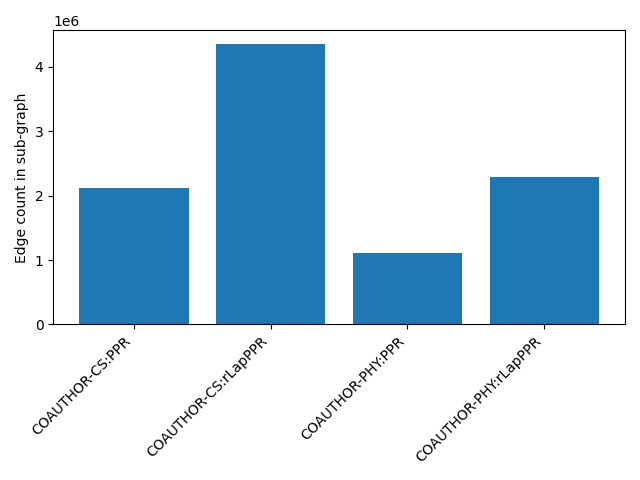} \\
(c)  $\gamma=0.3$  & (d) $\gamma=0.4$ \\[2pt]
\end{tabular}
\caption{Plots of edge counts of randomly sampled sub-graphs of size $8192$ with varying $\gamma$ for $rLap$ based techniques.}
\label{fig:rlap_rlapppr_subgraph_edge_gamma_variants}
\end{center}
\vskip -0.2in
\end{figure}


\end{document}